\newcommand{\argmin}{{\text{argmin}}}
\newtheorem{assumption}{Assumption}
\newenvironment{proof}{\par\noindent{\bf Proof\ }}{\hfill\BlackBox\\[2mm]}
\newcommand{\cmark}{\ding{51}}%
\newcommand{\xmark}{\ding{55}}%
\begin{document}

\title{On Causality in Domain Adaptation and Semi-Supervised Learning: an Information-Theoretic Analysis for Parametric Models}

\author{\name Xuetong Wu \email xuetongw1@student.unimelb.edu.au\\
\addr Department of Electrical and Electronic Engineering\\
\AND
\name Mingming Gong \email mingming.gong@unimelb.edu.au\\
\addr School of Mathematics and Statistics\\
\AND 
\name Jonathan H. Manton \email jmanton@unimelb.edu.au\\
\addr Department of Electrical and Electronic Engineering\\
\AND
\name Uwe Aickelin \email uwe.aickelin@unimelb.edu.au\\
\addr Department of Computing and Information Systems\\
\AND
\name Jingge Zhu \email jingge.zhu@unimelb.edu.au \\
       \addr Department of Electrical and Electronic Engineering\\
       University of Melbourne\\
       Parkville, 3010, Australia
       }
\editor{Ilya Shpitser}

\maketitle

\begin{abstract} 
\noindent Recent advancements in unsupervised domain adaptation (UDA) and semi-supervised learning (SSL), particularly incorporating causality, have led to significant methodological improvements in these learning problems. However, a formal theory that explains the role of causality in the generalization performance of UDA/SSL is still lacking. In this paper, we consider the UDA/SSL scenarios where we access $m$ labelled source data and $n$ unlabelled target data as training instances under different causal settings with a parametric probabilistic model. We study the learning performance (e.g., excess risk) of prediction in the target domain from an information-theoretic perspective. Specifically, we distinguish two scenarios: the learning problem is called causal learning if the feature is the cause and the label is the effect, and is called anti-causal learning otherwise.  We show that in causal learning, the excess risk depends on the size of the source sample at a rate of $O(\frac{1}{m})$ only if the labelling distribution between the source and target domains remains unchanged. In anti-causal learning, we show that the unlabelled data dominate the performance at a rate of typically $O(\frac{1}{n})$. These results bring out the relationship between the data sample size and the hardness of the learning problem with different causal mechanisms.
\end{abstract}

\begin{keywords}
  Causality, domain adaptation, semi-supervised learning, parametric models, generalization error
\end{keywords}

\section{Introduction} \label{Introduction}
A common obstacle in many real-world learning problems is that the training and testing data may originate from different distributions. Such a paradigm is known as the ``domain adaptation" problem. Specifically, we consider the unsupervised domain adaptation (UDA) scenarios in which we have two datasets drawn from different distributions, namely the ``source" and ``target" distributions, respectively. The source dataset includes both features and labels, whereas the target dataset contains only features and no labels. The goal is to train a model that performs well on the \textbf{target} distribution. This assumption is particularly interesting because it reflects real-world scenarios where the target labels are often unavailable.

\citet{scholkopf2012causal} began the pioneering work of developing a framework that links causal mechanisms with UDA, where the objective is to predict the label $Y$ using feature $X$. They delve into two fundamental causal settings: the ``causal learning" setting, where $X$ is the cause of $Y$, and the ``anti-causal learning" setting, where $Y$ is the cause of $X$. An interesting empirical observation made in the paper is that semi-supervised learning (SSL) - a machine learning paradigm where the model is trained on a mix of labelled and unlabelled data - improves learning performance in the anti-causal direction but does not provide a similar boost in the causal direction. This finding suggests that, given known causal structures, we may be able to enhance the generalization capabilities of machine learning algorithms strategically. Even though numerous causality-driven machine learning algorithms have demonstrated their effectiveness empirically \citep{scholkopf2012causal,zhang2013domain,gong2016domain}, the analytical part remains less investigated. Specifically, understanding how causality impacts learning performance and how the unlabelled target data and labelled source data contribute to the prediction under specific causal settings is yet to be deepened. This paper attempts to demystify how causal directions influence generalization ability and how the labelled source and unlabelled target data contribute to the prediction in the UDA/SSL settings under generative parametric models. Specifically, we examine the excess risk under various distribution shift conditions under the UDA setup, including the case of no distribution shifts as seen in SSL.

Our main results reveal that in the causal learning scenario, the unlabelled target data do not contribute to the prediction, and the source data only aids in reducing the excess risk when the conditional probability distribution $P(Y|X)$ remains consistent between source and target domains. Conversely, in anti-causal learning, unlabelled data are always useful. However, the usefulness of the source data, in terms of the convergence rate for excess risk, is contingent on the distribution shift conditions. In situations where the causal relationship between the feature and the label is unknown, improving generalization capability in domain adaptation requires careful consideration when making predictions from either a causal or anti-causal direction. This understanding enables us to design more efficient learning algorithms that are equipped to handle the challenges presented by complex real-world learning problems.

\section{Related Work}

{\bf Causal Inference and Machine Learning. }
Two important frameworks in causal inference are the potential outcome (counterfactual) framework and the structural causal model (SCM) \citep{holland1986statistics,hernan2010causal,imbens2015causal,pearl2018book}\footnote{It is sometimes also called structural equation model (SEM).}, which allows reasoning about a system not only under observation but also under intervention, and they have become an influential tool in several machine learning problems. For example, \citet{scholkopf2012causal} study the causal and anti-causal learning for domain adaptation with an additive noise SCM. \citet{bottou2013counterfactual} carry out the counterfactual analysis for the advertisement placement problem, allowing more flexibility in decision-making and thus improving the system performance. More recently, \citet{scholkopf2019causality} put forward significant issues such as i.i.d. assumptions and generalization ability of current machine learning algorithms and summarized the intrinsic connections between machine learning and the causality. \citet{moraffah2020causal} reviewed several causal interpretable models and suggested that the causal interpretable model under these causal and anti-causal frameworks is a way to explain the black-box machine learning algorithms. \citet{makhlouf2020survey} argue that causality-based machine learning algorithms are necessary to address the problem of fairness appropriately. 

However, although the causal models are favourable for specific learning regimes, only a few works generally consider generalization ability. To name a few, \citet{kilbertus2018generalization} argue that the generalization capabilities for anti-causal learning problems are associated with the hypothesis space searching and validation, but no theoretical analysis is presented. \citet{kuang2018stable} and \citet{cui2022stable} develop a stable learning algorithm that is robust across different underlying distributions and derives the generalization error bound with the ``causal" features, which are stable across different environments. \citet{arjovsky2019invariant} propose the invariant risk minimization to generalize well across different domains.  \citet{chen2021domain} develop a theoretical framework via the linear structural causal models, allowing comparisons of the learning performance for existing domain adaptation methods.

\noindent {\bf Domain Adaptation} Most techniques to conquer domain adaptation problems are purely statistics-based without referring to causal concepts. For example, the instance-based methods identify source samples that bear similarities to target samples based on the probability density ratio on the marginal distribution of features \citep{cortes2008sample, gretton2009covariate}. The feature-based methods will seek a new latent space where the discrepancy of the empirical distribution embeddings between the source and target domains are small under some metric  \citep{pan2010transfer,zhang2017joint}. The popular deep learning-based methods will involve deep generative networks to align distributions between source and target domains \citep{tzeng2017adversarial,shen2018wasserstein}. However, recent works have shown that introducing causal concepts leads to more robust and efficient algorithms for domain adaptation. The main idea is to identify and extract the transferable components that are invariant across different domains under certain causal models \citep{gong2016domain,magliacane2017domain,rojas2018invariant,mahajan2021domain}. 
Nevertheless, they mainly focus on the empirical verification of the effect of source samples instead of a theoretical analysis of their algorithms. To rigorously investigate the generalization ability and usefulness of the source and target data, \citet{wu2021online} give an attempt to interpret the transfer learning in terms of parametric probabilistic models. \citet{kpotufe2018marginal} study the covariate shift problem and derive the minimax rate with the notion of ``transfer component".  \citet{cai2021transfer} investigate the concept drift problem and establish the optimal minimax convergence rate with weighted $k$-nearest neighbour classifier. \citet{maity2020minimax} consider the target shift condition and derive the optimal minimax rate in non-parametric classification. 

\noindent {\bf Semi-Supervised Learning}
Semi-supervised learning aims to learn the predictor with scarce labelled and abundant unlabelled data. The crucial questions are when the unlabelled data are useful and how to avoid their negative impact. On the practical side, \citet{scholkopf2012causal} find that the unlabelled data will be useful for prediction when these data are the effect of their corresponding (unknown) labels. \citet{li2014towards} propose a robust SVM-based algorithm to prevent the unlabelled data from hurting the performance. Under generalized linear models, \citet{yuval2020semi} analyze the effectiveness of the unlabelled data via risk minimization.  On the theoretical side, \citet{castelli1996relative} and \citet{zhang2000value} pose the parametric assumptions on data distributions and claim the value of the unlabelled data depends on the Fisher information matrices of the distribution parameters. A similar argument is made in \citet{zhu2020semi} that if the unlabelled data contain all information of the required parameters, they will be equally useful as the labelled data. \citet{seeger2000input} and \citet{liang2007use} suggest that for certain data-generating processes, the unlabelled data is not useful from a Bayesian perspective. We refer to \citet{mey2019improvability} for other plentiful theoretical results on semi-supervised learning. Our methods provide a pathway to probabilistically analyze the semi-supervised learning problem and definitude the conditions when the unlabelled data are useful from a causal point of view.

\section{Preliminaries}
In this paper, we use the convention that capital letters denote the random variables and small letters their realizations. We define $a \vee b=\max (a, b)$ and $a \wedge b=\min (a, b)$. The notation $f(n) \asymp g(n)$ means that there exists some positive integer $n_0$ such that for all $n > n_0$, $c_1g(n) \leq f(n) \leq c_2g(n)$ always holds for some positive $c_1$ and $c_2$. We also use $f(n) = O(g(n))$ by meaning that there exists some integer $n_0$ such that for all $n > n_0$, $ f(n) \leq c_3g(n)$ always holds for some positive value $c_3$. We denote the KL divergence between two distributions $P$ and $Q$ by $\textup{KL}(P\|Q) = \mathbb{E}_{P}\left[\log\frac{dP}{dQ}\right]$. We use $P(X) \ll Q(X)$ to denote that the probability distribution $P(X)$ is absolutely continuous w.r.t. $Q(X)$. If not otherwise specified, the notation $\mathbb{E}_{\theta}[\cdot]$ denotes the expectation taken over all data examples involved that are drawn from ${P}_{\theta}$.

\subsection{Information Theory Basics}
Before proceeding, we will define several common information theory quantities such as entropy, mutual information, and Kullback-Leibler divergence (KL divergence), and state several well-known results on these measures that will be referenced in the literature. For more information on the basics, the readers can refer to \cite{cover2006elements}. The Shannon entropy of a discrete random variable $X$ is defined as:
\begin{align}
 H(X)=-\sum_{x \in \mathcal{X}} \mathbb{P}(X=x) \log \mathbb{P}(X=x).
\end{align}
For continuous random variable $X$ with the probability density function $p(x)$, the differential entropy is defined as:
\begin{align}
 h(X)=-\int p(x) \log p(x) dx.
\end{align}
Note that for discrete r.v., the Shannon entropy is always nonnegative and bounded by $\log |\mathcal{X}|$ while the differential entropy is considered as a measure of relative information and can be negative. Next, we define the Kullback-Leibler divergence: for two probability measures $P$ and $Q$, if $P$ is absolutely continuous with respect to $Q$, the Kullback-Leibler divergence between $P$ and $Q$ is:
\begin{align*}
   D(P \| Q)=\int \log \left(\frac{d P}{d Q}\right) d P,
\end{align*}
where $\frac{d P}{d Q}$ is the Radon-Nikodym derivative of $P$ with respect to $Q$. The KL divergence roughly estimates how different the two distributions $P$ and $Q$ are. For any probability distributions $P$ and $Q$ over the space $\Omega$ such that $P$ is absolutely continuous with respect to $Q$, we have the non-negativity property such that $D(P\|Q) \geq 0$ and the quantity is usually non-symmetric, e.g., $D(P\|Q) \neq D(Q\|P)$ if $P\neq Q$. We can then define the mutual information between the random variables $X$ and $Y$ as:
\begin{align}
I(X; Y)=D(P(X, Y) \| P(X) P(Y)),
\end{align}
which is the Kullback-Leibler divergence between the joint distribution of $X$ and $Y$ and the product of the marginal distributions. From the definition, it is clear that $I(X; Y)=I(Y; X)$, and the first property of the KL divergence implies that $I(X; Y)$ is nonnegative and $I(X; Y)=0$ when $X$ and $Y$ are independent. Furthermore, we also define conditional mutual information as
\begin{align*}
    I(X,Y|Z) = \mathbb{E}_{Z}\left[D(P(X,Y|Z)\| P(X|Z)P(Y|Z))\right],
\end{align*}
where it represents the amount of information gained about $X$ by observing $Y$ given a third variable $Z$.

\subsection{Prediction with Mixture Strategy}
Considering the effectiveness and complexity of UDA and SSL problems, we use the parametric distribution models as a critical component of our approach. The reason for this choice is that the distribution shifts can be characterized concisely by the parameter changes. This approach allows for a rigorous statistical framework in which the complexities of the learning problem can be analyzed. 

The mixture strategy is an important concept in the field of statistical inference that was leveraged from \cite{clarke1994jeffreys,clarke1990information,merhav1998universal} with the application of universal prediction, which involves the construction of a mixture distribution over the model parameters for prediction when the true distribution (parameters) is unknown. Here, ``universal'' means that the predictor does not depend on the unknown underlying distribution and performs essentially as well as if the distribution was known in advance. Furthermore, given these complexities and the distributional shifts of data sources, a mixture strategy becomes a natural choice for tackling these challenges in different domain adaptation settings as it allows us to integrate source and target distribution information, enabling a comprehensive understanding of the learning performance. 

The mixture strategy has been extensively studied in the literature, with several important works exploring its properties and applications in various fields. For example, \citet{feder1992universal, merhav1998universal, cover1996universal} mainly focused on situations where data is drawn independently and identically from a single parametric distribution, which is similar to traditional online learning problems. However, the bounds obtained through the conditional mutual information cannot provide more quantitative insights for analyzing the regret. To this end, the previous works such as \cite{clarke1999asymptotic, clarke1990information, zhu2020semi} provided an asymptotic analysis for the conditional mutual information under the conventional online learning or semi-supervised learning problems, where the regret approximation is associated with the sample size and the prior distribution over the distribution parameters. 

Mathematically, let $\theta$ be the parameter of interest that is involved in the model distribution, and let $p(\theta)$ be the prior distribution over $\theta$. Assume we have the training dataset $\mathcal{D}$ with each $Z_i \in \mathcal{D}$ i.i.d. drawn from a distribution $p_\theta^*(Z)$. If we consider the predictor $\omega$ to be a probability distribution over the data sample $Z$, the logarithmic loss is then defined as
\begin{equation}
    \ell(\omega,Z) = - \log \omega(Z).
\end{equation}
We can define the expected loss on test data $Z'$ as
\begin{equation}
    L := - \mathbb{E}_{\theta^*}\left[ \log Q(Z'|\mathcal{D}) \right].
\end{equation}
where the mixture strategy involves constructing a mixture distribution over $Z'$ for the testing data given the training data as
\begin{align}
   Q(Z'|\mathcal{D})= \frac{\int p(\mathcal{D}, Z' | \theta) p(\theta) d \theta}{\int p(\mathcal{D} | \theta) p(\theta) d \theta} = \int p_{\theta}(Z') Q(\theta|\mathcal{D}) d\theta,
\end{align}
where $Q( \theta | \mathcal{D})$ is the conditional distribution of the parameter $\theta$ given the dataset $\mathcal{D}$ induced by $Q(\mathcal{D}) = \int p(\mathcal{D} | \theta) p(\theta) d \theta$ and the joint distribution $p(\mathcal{D}, \theta) = p(\mathcal{D}|\theta)p(\theta)$, and $p(\theta)$ is a prior distribution over $\theta$. From a Bayesian perspective, we assign a probability distribution $p(\theta)$ over the parameter space to represent our prior knowledge, and we update the posterior with the training data to approximate the underlying distributions. With the mixture strategy, the excess risk w.r.t. the best estimator could be rewritten as:
\begin{align}
    R & := - \mathbb{E}_{\theta^*}\left[ \log Q(Z'|\mathcal{D}) \right] - \mathbb{E}_{\theta^*} \left[ \log p_{\theta^*}(Z') \right] \\
    &= \mathbb{E}_{\theta^*}\left[  \log \frac{P_{\theta^*}(Z')}{Q(Z'|\mathcal{D})}\right].  \\
    &= I(Z';\theta^* | \mathcal{D}).
\end{align}
The above characterization implies that under logarithmic loss, with a specific prior $p(\theta)$, the excess risk induced by the mixture strategy is captured by the conditional mutual information between the sample $Z'$ and distribution parameter that is evaluated at $\theta^*$ given the training data, which naturally gives an interpretation on the amount of information that the test data point $Z'$ carries about the true parameter $\theta^*$, given the whole training set $\mathcal{D}$. Such an information-theoretic framework has been established and studied in SSL and online learning problems (see \cite{merhav1998universal, zhan2015online, zhu2020semi} for references). One advantage of this framework is that information-theoretic tools are powerful in studying asymptotic behaviours as well as deriving learning performance bounds for various statistical problems. This characterization also ensures minimax optimality, which means that irrespective of the underlying parameters, the resultant learning rate is guaranteed to be optimal, even in the worst-case scenario. Additionally, information-theoretic quantities such as mutual information and KL divergence (relative entropy) give natural interpretations for the learning bounds. Furthermore, when it comes to distribution parameter estimation, the mixture model is particularly beneficial when the data is believed to be generated from a certain underlying process, as it can provide a probabilistic representation of the diverse sub-populations, and this is particularly valuable where only assuming a single distribution could lead to skewed or inaccurate results (such as the plug-in method). On the other hand, while estimating a single distribution offers simplicity, the model is sensitive to outliers and may fall short when the data complexity is high or the sample size is small. Taking advantage of the robustness of the mixture strategy, this paper expands on the findings of \citet{merhav1998universal} and \citet{zhu2020semi}, which were initially applied to conventional learning scenarios where the source and target originate from the same distribution. In the following, we will examine both UDA and SSL learning bounds across various distribution shift conditions by leveraging a mixture strategy grounded in causal and anti-causal settings. 

\section{Problem Formulation}
We consider the typical \emph{unsupervised domain adaptation} problem for classification. Given the labelled source data $ D^m_s = (X^{(1)}_s,Y^{(1)}_s,\cdots,X^{(m)}_s,Y^{(m)}_s)$ and the unlabelled target data $D^{\textup{U},n}_t = (X^{(1)}_t,\cdots,X^{(n)}_t)$, we assume each source sample is i.i.d. drawn from a probability distribution $P_S(X,Y)$ and takes value in $\mathcal{Z} = \mathcal{X}\times\mathcal{Y}$ and each target sample is i.i.d. drawn from the marginal distribution of $P_T(X,Y)$ and takes value in $\mathcal{X}$. In general, $P_S(X,Y)$ is different from $P_T(X,Y)$, and both $\mathcal{X}$ and $\mathcal{Y}$ can be discrete or continuous. For simplicity, we consider the case where both $X$ and $Y$ are discrete in this paper. We point out that the analysis in the paper continues to hold for a continuous $Y$ in the causal learning case and for a continuous $X$ in the anti-causal learning case. We will predict the label $Y'_t$ for the previously unseen sample $X'_t$ in target domain, utilising the training sample $D^m_s$ and $D^{\textup{U},n}_t$ with the learning algorithm $\mathcal{A} : \mathcal{Z}^{m} \times \mathcal{X}^{n} \times \mathcal{X} \rightarrow \mathcal{B}$, whose output $b$ is the distribution-independent \emph{predictor} for the outcome $Y'_t$ in the predictor space $\mathcal{B}$. We define the loss function $\ell: \mathcal{B} \times \mathcal{Y} \rightarrow \mathbb{R}$ that evaluates the prediction performance. The learning task is to minimise the corresponding \emph{excess risk} for its label $Y'_t$ defined as
\begin{align}
    \mathcal{R}(b) := \mathbb{E}_{D^m_s,D^{\textup{U},n}_t, X'_t, Y'_t} \left[ \ell\left(b, Y'_t\right) - \ell(b^*, Y'_t) \right], \label{eq:excess-risk}
\end{align}  
where the expectation is taken with respect to all the source and target data, and $b^*$ is the optimal predictor that can depend on the true distribution of the data. Particularly, we will also examine the excess risk under the condition $P_S(X, Y) = P_T(X, Y)$, commonly known as \emph{semi-supervised learning}.

\subsection{Causal Settings} 
In this section, we introduce the concept of causality within a supervised learning context involving feature variable $X$ and label variable $Y$. Here we take an approach by establishing the learning model based on the parametric data distributions. We focus on scenarios where there are no other con-founders but only variables $X$ and $Y$. Assume $X$ is drawn from a finite set $\mathcal{X} = \{x_1, x_2, \ldots, x_k\}$ with $k$ elements and the corresponding label $Y$ is drawn from a finite set $\mathcal{Y} = \{y_1, y_2, \ldots, y_{k'}\}$ with $k'$ elements. We then construct the parametric models under causal settings by specifying the joint distribution of $X$ and $Y$ as follows:
\begin{definition}[Causal Settings]\label{def:causal_settings}
We define two distinct learning settings based on the direction of causality for $X \in \mathcal{X}$ and $Y \in \mathcal{Y}$ using the following generation process:
\begin{itemize}
    \item \textbf{Causal learning (Figure \ref{fig:causal_x_to_y})}  We say that ``$X$ causes $Y$" (denoted as $X\rightarrow Y$) if the pair $(X,Y)$ is generated as follows: $X$ is firstly generated according to the distribution $P_{\theta_X}$. Given $X=x$, $Y$ is generated from the distribution $P_{\theta_{Y_x}}$. This implies that the joint distribution of $(X, Y)$ is given by
      \begin{equation}
        P(x,y) = P_{\theta_X}(x)P_{\theta_{Y_{x}}}(y). \label{eq:x_causes_y}
    \end{equation}
    We call a learning problem ``causal learning" if the underlying causal mechanism satisfies $X \rightarrow Y$.
    \item \textbf{Anti-causal learning (Figure \ref{fig:causal_y_to_x})}  We say that ``$Y$ causes $X$" (denoted as $Y\rightarrow X$) if the pair $(X,Y)$ is generated as follows: $Y$ is firstly generated according to the distribution $P_{\theta_Y}$. Given $Y=y$, $X$ is generated from the distribution $P_{\theta_{X_y}}$. This implies that the joint distribution of $(X, Y)$ is given by
      \begin{equation}
        P(x,y) = P_{\theta_Y}(y)P_{\theta_{X_{y}}}(x). \label{eq:y_causes_x}
    \end{equation}
    We call a learning problem ``anti-causal learning" if the underlying causal mechanism satisfies $Y \rightarrow X$.
\end{itemize}    
\end{definition}
These learning scenarios are conceptualized through parametric data generation mechanisms and sketched in Figure~\ref{fig:causal-anti-dag}. When considering the causal setting $X \rightarrow Y$, we assume that $X$ is drawn from the distribution $P_{\theta_X}$ and when we see a realization $x_i$ of the random variable $X$, the distribution of the outcome variable $Y$ is then characterized by a distinct parameter $\theta_{Y_{x_i}}$, and the observed outcome $y$ is assumed to be drawn from the distribution $P_{\theta_{Y_{x_i}}}$. The double subscript notation is intentionally used to emphasize that the parameters $\theta_{Y_{x_i}}$ describe the distribution of $Y$, which is directly associated with the specific values of $x_i$. This framework inherently incorporates the concept of the ``soft" intervention that alters the conditional probability distributions of the variables being intervened upon\citep{eberhardt2007interventions,pearl2009causality,pearl1998graphs,imbens2015causal}, which is a fundamental concept in the study of causality. By firstly setting $X = x_i$, we effectively intervene in the system, which allows for the direct examination of its impact on $Y$ for different interventions. Hence, the model not only captures the association between $X$ and $Y$ but also provides a structured way to explore causal effects through interventions. For the anti-causal setting $Y \rightarrow X$, the procedure is analogous: the distribution of $Y$ is defined by a parameter $\theta^*_Y$, and upon intervening to set $Y$ to $y_i$, the distribution of $X$ is specified by the parameter $\theta_{X_{y_i}}$, from which we observe $x$ through the distribution $P_{\theta_{X_{y_i}}}(X)$. In Definition~\ref{def:causal_settings}, we assume that both $X$ and $Y$ are discrete variables for simplicity. However, it is important to note that our results also apply to cases with discrete causes and continuous effects. 
\begin{figure}[htb]
    \centering
\subfigure[Causal Learning: $X \rightarrow Y$ \label{fig:causal_x_to_y}]{\includegraphics[width =2.55in]{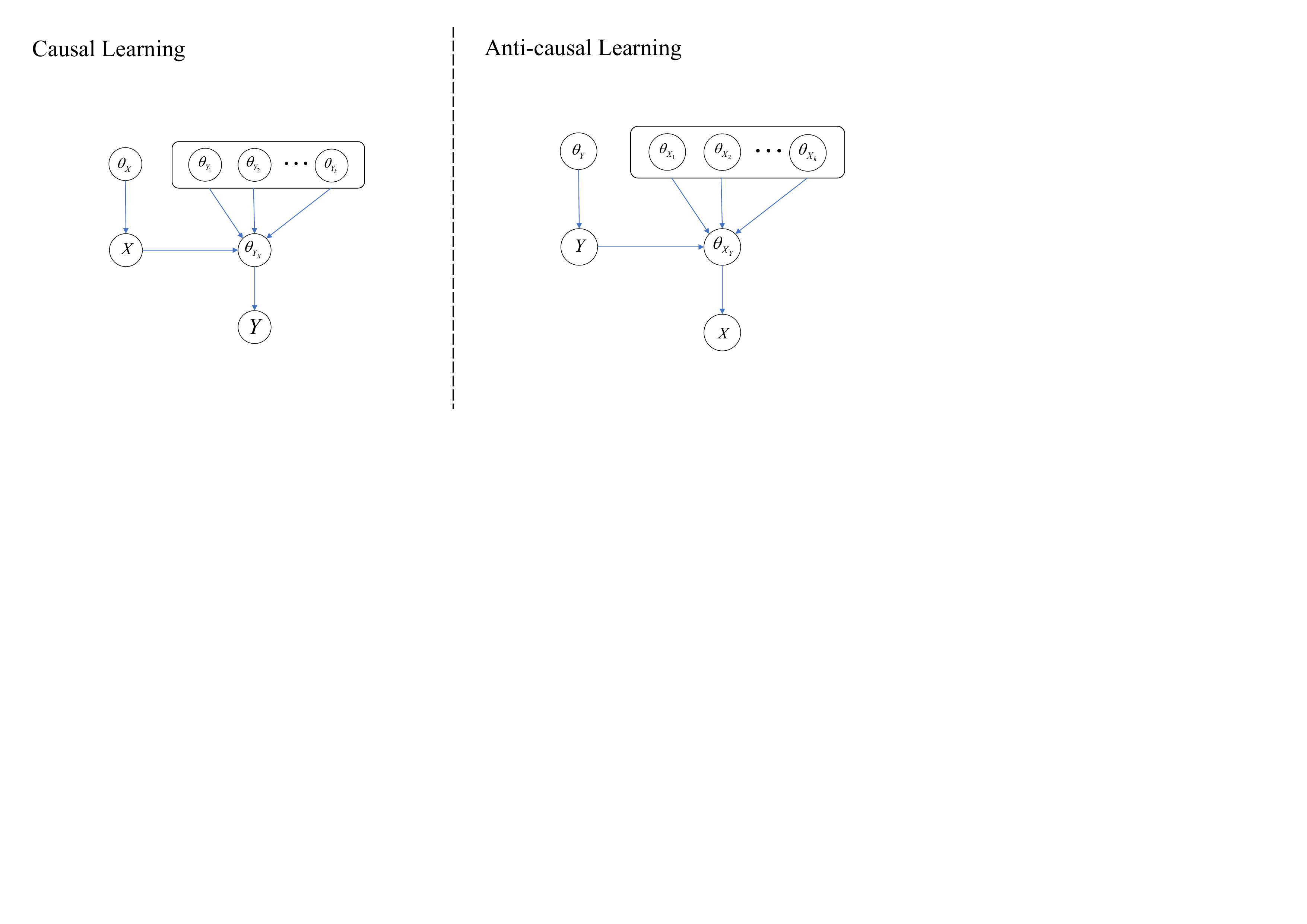}}
\subfigure[Anti-causal Learning: $Y \rightarrow X$ \label{fig:causal_y_to_x}]{\includegraphics[width = 2.33in]{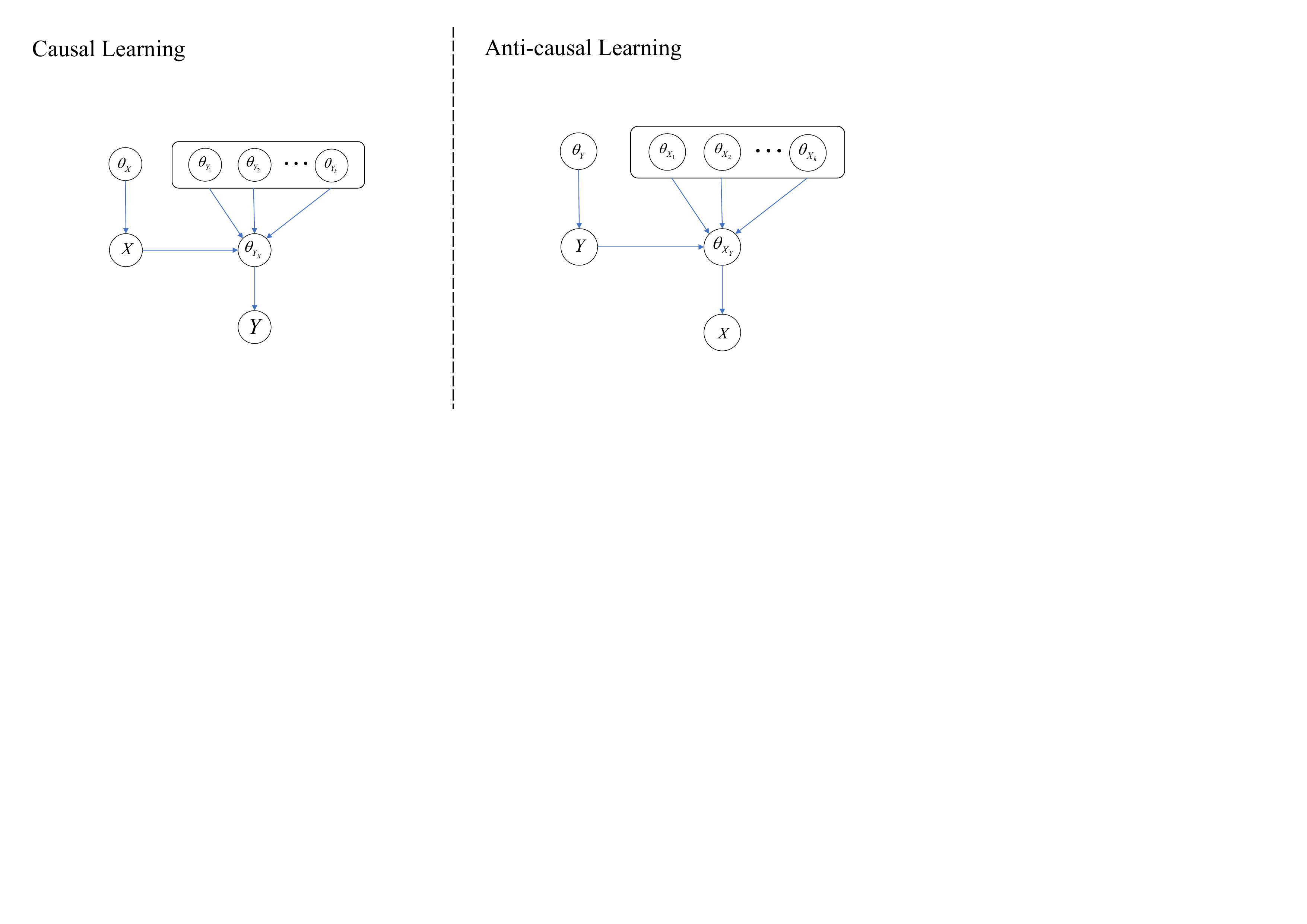}}
\caption{Causal settings for $X \rightarrow Y$ in (a) and $Y \rightarrow X$ in (b). We refer to the scenario in (a) as the ``causal learning" setting because the direction of causation aligns with the direction of prediction, whereas the scenario in (b) is termed the ``anti-causal learning" setting since the direction of causation is opposite to the direction of prediction.}\label{fig:causal-anti-dag}
\end{figure}

We draw the diagram in Figure~\ref{fig:causal-anti-dag} to visualize the parametric models under these two different mechanisms. The models in Figures~\ref{fig:causal_x_to_y} and \ref{fig:causal_y_to_x} are called ``causal learning" and ``anti-causal learning" respectively \citep{scholkopf2012causal}, to mirror the causation direction in alignment with the prediction direction. In causal learning, the prediction direction coincides with the causation direction, whereas in anti-causal learning, the causation direction opposes the prediction direction.


\begin{remark}
As we will show in the sequel, the causal structure of the data-generating process can be leveraged to enhance the prediction performance, which cannot be achieved by using the knowledge of the observational distribution of $(X,Y)$ alone. Roughly speaking, under certain regularity conditions, we could learn the parameters ($\theta_X, \theta_{Y_x}$, etc.) directly from the unlabelled data, thus improving the prediction performance. As for the labelled source data, they can be partially profitable if the target domain shares some distribution parameters with the source domain. We will support these intuitions with our theoretical analysis in Section~\ref{Sec:main}.
\end{remark}

\begin{remark}
We make the following remarks regarding the definitions of the above settings.
\begin{itemize}
\item For simplicity, we will use the notation $Y_x$ to denote a random variable if it is generated according to the distribution $P_{\theta_{Y_x}}(y)$ in the causal learning setting. Similarly, $X_y$ denotes a random variable generated according to the distribution $P_{\theta_{X_y}}(x)$ in the anti-causal learning setting. More generally, we define a random variable $Y_X$ if it is drawn from a random distribution $P_{\theta_{Y_X}}(y)$ induced by the random variable $X$. This notation also suggests an equivalent way of expressing the causality. Namely, we have $Y = \sum_{i=1}^{k} \mathbf{1}_{X = x_i} Y_{x_i}$ for the causal setting where $X$ is generated according to $P_{\theta_X}$, and $X = \sum_{i=1}^{k} \mathbf{1}_{Y = y_i} X_{y_i}$ for the anti-causal setting where $Y$ is generated according to $P_{\theta_y}$. This notation is consistent with the notations used in \citep{hernan2010causal, imbens2015causal, cabreros2019causal}, where the concept of \textit{potential outcome} is used.
\item Figure~\ref{fig:causal-anti-dag} suggests that the random variables $X$ and $Y_{x_1}, Y_{x_2},\ldots, Y_{x_k}$ are mutually \emph{independent} in the causal settings. Similarly, $Y, X_{y_1},\ldots, X_{y_k}$ are also mutually independent in the anti-causal learning setting.
\item The causal setting outlined can also be specialized to parametric structural causal models as outlined by \cite{hernan2010causal,pearl2018book}, which takes the form of the relationship $X' \rightarrow Y'$ by
\begin{align*}
     X' := N_X, \quad Y' := f(N_Y, X').
\end{align*}
Here, $f$ is a function that defines the parametric distributions of $Y'$, with $N_Y$ and $N_X$ being independent random variables. This setup allows us to parameterize the distribution of $X$ with $N_X$ by identifying $P_{\theta_{X}}$ with $P$ where $P$ is the distribution of $N_X$. By setting $Y_{x_k} = f(x_k, N_Y)$, we could then model the distribution of the outcome by $P_{\theta_{Y_{x_k}}}(Y)$ where the parameters depend on the function $f$, $N_Y$ and $x_k$. Then we could express $Y$ as a sum over potential outcomes of $X$, represented as $Y = \sum_{i=1}^{k} \mathbf{1}_{X = x_i} f(x_i, N_Y)$, which simplifies to $Y = f(N_Y, X)$.
\end{itemize}
\end{remark}

For the following discussion and main results, we assume that the causal relationship between $X$ and $Y$ is always unique, e.g., the causal direction is acyclic. Initially, we also assume the relationship is known for the theoretical analysis. In later parts of this discussion, we will also examine the case in which the causal direction of the underlying causal direction is unknown. 

\subsection{Parametric Models}
When studying domain adaptation, we have two sets of random variables $(X_s, Y_s)$ and $(X_t, Y_t)$, where the former denotes the feature and label in the source domain and the latter for the target domain.  We will consider two causal settings.  The first one is given by $X_s \rightarrow Y_s$ and $X_t \rightarrow Y_t$ with the definition of causation given in Figure~\ref{fig:causal_x_to_y}, namely the adaptation with the \textit{causal learning} setting. We assume $X_s, X_t$ take value in $\{x_1,x_2,\cdots, x_k\}$ and $Y_{s}, Y_{t}$ take values in $\mathcal{Y}$, which could be either a continuous or discrete space. We will focus on parametric models in this work, and more precisely, the source distribution (similarly to target distribution) $P_{X_s}$ is parameterized by a parameter $\theta^{s*}_X$ and the distributions of the outcome random variables $P_{Y_{x_i}}$ are also parameterized by the parameters $\theta^{s*}_{Y_{x_i}}$ for all $i=1,\ldots, k$. Then the joint distribution of the data pair $(X_s,Y_s)$ and $(X_t,Y_t)$ can be formulated as,
\begin{align}
    P_{\theta^*_s}(x_s,y_s) = P_{\theta^{s*}_X}(x_s)P_{\theta^{s*}_{Y_{x_s}}}(y_s),  \label{eq:cause_s}\\
    P_{\theta^*_t}(x_t,y_t) = P_{\theta^{t*}_X}(x_t)P_{\theta^{t*}_{Y_{x_t}}}(y_t), \label{eq:cause_t}
\end{align}
where we use $\theta^*_s$ and $\theta^*_t$ to encapsulate all the parameters:
\begin{align}
    \theta^*_s = (\theta^{s*}_X, \theta^{s*}_{Y_{x_1}}, \cdots,  \theta^{s*}_{Y_{x_k}} ) \in \Lambda \label{eq:para-s-true}, \\
    \theta^*_t = (\theta^{t*}_X, \theta^{t*}_{Y_{x_1}}, \cdots,  \theta^{t*}_{Y_{x_k}}) \in \Lambda \label{eq:para-t-true}.
\end{align}
For simplicity, we assume that every element in both $\theta^*_s$ and $\theta^*_t$ is a scalar in $\mathbb{R}$ and $\Lambda \subseteq \mathbb{R}^{k+1}$ is a closed set endowed with Lebesgue measure.  In the sequel, we write $P_S(X) = P_T(X)$ (similarly for $P_S(Y|X)$) with the understanding that their underlying parameters are elementwise equal (e.g., $\theta^{s*}_X = \theta^{t*}_X$) and vice versa. 

The second learning model we consider in this work is given by $Y_s \rightarrow X_s$ and $Y_t \rightarrow X_t$, where $X_{s,y_i}$ and $X_{t,y_i}$ denote the random outcomes given the treatment $y_i$ in source and target domains, namely the adaptation with the \textit{anti-causal learning} setting. The parameterization, in this case, is analogous to causal learning by regarding $Y$ as a cause and $X$ as an effect. Instead, we now assume $Y_s, Y_t$ take value in $\{y_1,y_2,\cdots, y_{k'}\}$ and $X_{s}, X_{t}$ take values in a continuous or discrete space $\mathcal{X}$ for the anti-causal learning. Similarly to the causal learning, we assume $Y_s$ and $Y_t$ are parameterized by $\theta^{s*}_Y$ and $\theta^{t*}_Y$, and $X_{s,y_i}$ and $X_{t,y_i}$ are parameterized by $\theta^{s*}_{X_{y_i}}$ and $\theta^{t*}_{X_{y_i}}$ for all $i = 1,\cdots, k'$, and we use the same notation $\theta^*_s$ and $\theta^*_t$ to encapsulate all the parameters and every parameter in both $\theta^*_s$ and $\theta^*_t$ is a scalar in $\mathbb{R}$ and $\Lambda \subseteq \mathbb{R}^{k'+1}$ is a closed set endowed with Lebesgue measure. 

Under causal learning ($X \rightarrow Y$), it can be seen that the unlabelled target data are generated only with $\theta^{t*}_{X}$ and thus do not contain knowledge about $\theta^{t*}_{Y_{x_i}}$ as they are statistically independent. Intuitively speaking, the parameters associated with $P(Y|X)$ in the target domain cannot be accurately estimated exclusively from the unlabelled data. However, under anti-causal learning ($Y \rightarrow X$), the unlabelled target data are associated with all parameters $\theta^{t*}_{Y}$ and $\theta^{t*}_{X_{y_i}}$ that induce the labelling distribution $P(Y|X)$ in the target domain. In addition, we make the following assumption for the data distributions in both causal settings.

\begin{assumption}[Parametric IID data] \label{asp:para-dist}
We assume the labelled source and unlabelled target samples are generated independently and identically under both causal learning and anti-causal learning. More precisely, the joint distribution of the data sequence pairs $P_{\theta^*_s,\theta^*_t}({D_t^{U,n}, D_s^m})$ can be written as 
\begin{align*} 
P_{\theta^*_s,\theta^*_t}(D^{\textup{U},n}_t, D_s^m) = \prod_{i=1}^n P_{\theta_{t}^*}(X_t^{(i)}) \prod_{j=1}^m P_{\theta_s^*}(X_s^{(j)}, Y_s^{(j)}),
\end{align*}
where $P_{\theta_{t}^*}(X_t^{(i)})$ is the marginal of $P_{\theta_t^*}(X_t^{(i)},Y_t^{(i)})$.  We also assume $\theta^*_t$ and $\theta^*_s$ are points in the interior of $\Lambda$. Furthermore, in both models, the parametric families for the cause and effect are assumed to be known in advance.
\end{assumption}
Based on the models defined above, the excess risk in Equation~(\ref{eq:excess-risk}) can be written as
\begin{align}
    \mathcal{R}(b) &:= \mathbb{E}_{P_{\theta^*_s}(D^m_s)P_{\theta^*_t}(D^{\textup{U},n}_t, X'_t, Y'_t)} \left[ \ell\left(b, Y'_t\right) - \ell(b^*, Y'_t) \right] \nonumber \\
    &= \mathbb{E}_{\theta_s,\theta_t}\left[ \ell\left(b, Y'_t\right) - \ell(b^*, Y'_t) \right]  \label{eq:excess-risk-para}
\end{align}
For simplicity, we use the notation $\mathbb{E}_{\theta_s,\theta_t}[\cdot]$ (similarly, $\mathbb{E}_{\theta_t}[\cdot]$ and $\mathbb{E}_{\theta_s}[\cdot]$) to denote the expectation taken over all source and target samples drawn from ${P}_{\theta_s}$ and ${P}_{\theta_t}$.  

\section{Main Results} \label{Sec:main}
In this section, we will examine the excess risk for causal and anti-causal learning under various conditions of distribution shift, e.g., covariate shift \citep{gretton2009covariate}, target shift \citep{zhang2013domain}, concept drift \citep{cai2021transfer}, etc. 

Before diving into the details, we informally outline our main results in Table~\ref{tab:result} under log-loss. Recall that in both causal and anti-causal learning, the goal is to learn the conditional distribution $P_T(Y|X)$ such that the label $Y$ can be predicted from the feature $X$ in the target domain. In causal learning, this corresponds to learning the outcome random variables $Y_{t,x_i}$. However, the unlabelled target data $X_t$ (``cause" in this case) do not contain information about $Y_{t, x_i}$ as they are independent under causal generating processes. Therefore, the unlabelled target data are not useful in the causal learning case, as indicated in the table. The usefulness of the source data depends on the causal settings.  When the labelling distribution is invariant across two domains (e.g., $P_S(Y|X) = P_T(Y|X)$), the source data help reduce excess risk by providing information about $Y_{t,x_i}$,  which is identical to $Y_{s,x_i}$. The learning rate is then shown to be $O(\frac{k}{m})$, where $k$ is the number of parameters and $m$ is the size of the source sample. On the other hand, if $P_S(Y|X) \neq P_T(Y|X)$, the source data generally do not provide information about $Y_{t,x_i}$ and the excess risk does not converge to zero even with sufficient source and target data.

\begin{table*}[h!]
    \centering
    \begin{small}
    \begin{tabular}{|c|c|c|c|c|}
    \hline 
     Causal Setting  & Conditions & UT  & LS  & Rate \\
    \hline
    \multirow{4}{*}{$X \rightarrow Y$} & $P_S(X) \neq P_{T}(X)$, $P_S(Y|X) \neq P_{T}(Y|X)$   & \xmark   &  \xmark  &  - \\
    \cline{2-5}
    & $P_S(X) \neq P_{T}(X)$, $P_S(Y|X) = P_{T}(Y|X)$  & \xmark & \cmark   &  $O(\frac{k}{m})$ \\
    \cline{2-5}
    & $P_S(X) = P_{T}(X)$, $P_S(Y|X) \neq P_{T}(Y|X)$ & \xmark & \xmark   &   - \\
    \cline{2-5}
    & $P_S(X) = P_{T}(X)$, $P_S(Y|X) = P_{T}(Y|X)$  & \xmark  &  \cmark  &  $O(\frac{k}{m})$ \\
    \hline 
    \hline 
    \multirow{4}{*}{$Y \rightarrow X$} &  $P_S(Y) \neq P_{T}(Y)$, $P_S(X|Y) \neq P_{T}(X|Y)$   & \cmark  &  \xmark  & $O(\frac{1+k'}{n})$ \\
    \cline{2-5}
    & $P_S(Y) \neq P_{T}(Y)$, $P_S(X|Y) = P_{T}(X|Y)$   & \cmark  &  \cmark & $O(\frac{1}{n} + \frac{k'}{n+m})$ \\
    \cline{2-5}
    & $P_S(Y) = P_{T}(Y)$, $P_S(X|Y) \neq P_{T}(X|Y)$   & \cmark  & \cmark &  $O(\frac{k'}{n} + \frac{1}{n+m})$\\
    \cline{2-5}
    & $P_S(Y) = P_{T}(Y)$, $P_S(X|Y) = P_{T}(X|Y)$  &  \cmark  & \cmark  & $O(\frac{k'+1}{m+n})$\\
    \hline 
    \end{tabular}
    \end{small}
    \caption{(Informal) results on the effectiveness of source and unlabelled target data under causal and anti-causal learning problems.``\cmark" and ``\xmark" marks indicate whether the data are useful or not for the prediction under specific conditions and causal settings. ``UT" and ``LS" are abbreviated for ``Unlabelled Target" and ``labelled Source", respectively. The rate illustrates the convergence for the excess risk under log-loss in terms of the target sample size $n$ and source sample size $m$. The ``-" sign in the rate column means the risk will not converge to zero even if we have sufficient source and target data. }
    \label{tab:result}
\end{table*}

In anti-causal learning scenario ($Y \rightarrow X$, $P_S(X,Y) \neq P_T(X,Y)$), however, learning $P_T(Y|X)$ requires to estimate all the parameters of $Y_t$ and $X_{t,y_i}$. Unlike causal learning, where $P_T(Y|X)$ is fully represented by the random outcome variables $Y_{t,x_i}$, in this case, we need to infer $P_T(Y|X)$ from the joint distribution $P_T(X, Y)$. We will show that the unlabelled target data is always useful in anti-causal learning under certain conditions. The source data can also contribute to learning, depending on the assumptions we have made about the distribution shift. For example, if $P_S(Y) \neq P_T(Y)$ and $P_S(X|Y) \neq P_T(X|Y)$ with the independence assumption, there is no reason for the source data to be useful for prediction in the target domain. Therefore, the rate, in this case, is $O(\frac{k'+1}{n})$, which solely depends on the number of unlabelled target data. Intuitively, this is the cost of learning $k'+1$ parameters with $n$ unlabelled target samples. Under the target shift condition ($P_S(Y) \neq P_T(Y)$ and $P_S(X|Y) = P_T(X|Y)$), the source data helps in learning the outcome variables $X_{y_i}, i=1,\ldots, k'$, which is evinced in the rate $O(\frac{1}{n}+\frac{k'}{m+n})$ that constitutes the learning of $Y_t$ (with associated parameter $\theta^{t*}_{Y}$) with a rate $O(\frac{1}{n})$ and $X_{t,y_i}, i=1,\ldots,k'$ (with associated parameters $\theta^{t*}_{X_{y_i}}$) with a rate $O(\frac{k'}{n+m})$. Similarly, for the conditional shift ($P_S(Y) = P_T(Y)$ and $P_S(X|Y) \neq P_T(X|Y)$), the rate becomes $O(\frac{k'}{n}+\frac{1}{m+n})$  where sufficient source data boosts the learning of $Y_t$ (associated with parameter $\theta^{t*}_{Y}$) with a rate $O(\frac{1}{m+n})$, but are not helpful for learning outcomes variables $X_{t, y_i}$.
    
As a special case of domain adaptation, we also consider SSL where $P_S(X,Y) = P_T(X,Y)$. Using the same arguments in causal and anti-causal settings, we obtain a better rate of $O(\frac{k'+1}{m+n})$ in anti-causal learning, where the unlabelled target data take effect on prediction, compared to $O(\frac{k}{m})$ in causal learning, where the unlabelled target data are not helpful. For readers interested in empirical verification of our results, we substantiate the analysis with a toy example, which can be found in Section~\ref{sec:example}. More generally, our analysis also holds for the case when the cause is discrete and the effect can be either discrete or continuous. This is practically useful since the datasets in many real classification problems are usually anti-causal with a finite label space $\mathcal{Y}$ where the feature space is usually continuous \citep{scholkopf2012causal,zhang2013domain,gong2016domain}. To summarize, different causation directions incentivize different learning complexity for generalization, which is reflected in the number of model parameters and the effectiveness of the data. It comes naturally when we could model both the source and target data from either $X \rightarrow Y$ or $Y \rightarrow X$ in some non-identifiable circumstances, we need to take the distribution shift conditions and sample sizes into account to achieve better learning performance. We will first show our main proof techniques in Section~\ref{sec:proof} and examples are followed in Section~\ref{sec:example}.

Many theoretical results on generalization in domain adaptation depend on distributional conditions and algorithms. Notably, based on the covariate shift condition, \citet{kpotufe2018marginal} propose the ``transfer component" that evaluates the support overlap between the source and target domains and derives the minimax rate for the generalization error. However, such a notion cannot be generally applied to other distribution shift conditions. Similarly, \citet{cai2021transfer} determine the optimal minimax rate of convergence with the weighted $k$-nearest neighbour classifier using the notion of ``relative signal exponent" based on the concept drift condition. Under the target shift condition, \citet{maity2020minimax} and \citet{gong2016domain} derive the learning guarantees for the distribution reweighting strategies, which are algorithm-dependent. While our analysis is restricted to parametric models, it applies to all possible distribution shift conditions. This applicability facilitates a unified framework for assessing learning performance from a causal viewpoint. It also offers an intuitive understanding of the values derived from source and target data. In particular, our result of the covariate shift condition offers the same insight when $P_T(X)$ is absolutely continuous w.r.t. $P_S(X)$ in \citet{kpotufe2018marginal}, where the labelled source has the same value as the labelled target. The target shift result agrees with \citet{maity2020minimax} in the sense that the unlabelled target is equally useful as the labelled target data, achieving a rate of $O(\frac{1}{n})$. Under the concept drift condition, we argue that the excess risk does not converge, which is consistent with Theorem 3.1 in \citet{cai2021transfer} for a large relative signal exponent and no labelled target data. Moreover, we prove in Lemma~\ref{lemma:wsc} that the excess risk is minimax optimal under log-loss.

\subsection{Information-theoretic Characterization} \label{sec:proof}
In this section, we will outline our primary proof techniques for the findings presented in Table~\ref{tab:result}. Our proofs primarily build upon the work of \cite{merhav1998universal} and \cite{zhu2020semi}, which originally focused on the sequential learning problem or semi-supervised learning problem. However, we extend their results by applying the mixture strategy to the UDA and SSL problems with the information-theoretic framework. To begin with, we first consider the \emph{log-loss} (also known as the logarithmic loss), which is formally defined as follows.
\begin{definition}[Log-loss] Let the predictor $b$ be a probability distribution over the target label $Y'_t$. The log-loss is then defined as,
\begin{equation}
    \ell(b, Y'_t) = - \log b(Y'_t).
\end{equation}
\label{def:logloss}
\end{definition}
\noindent Given the testing feature $X'_t$, training data $D^m_s$ and $D^{U,n}_t$, 
we may view the predictor $b$ as the conditional distribution $Q(Y'_t|D^{\textup{U},n}_t, D^m_s,X'_t)$ over the unseen target label given the testing feature $X'_t$ and the training data $D^m_s, D^{\textup{U},n}_t$. It could be proved that the true predictor $b^*$ is given by the underlying target distribution as $b^*(Y'_t) = P_{\theta^*_t}(Y'_t|X'_t)$. Then the excess risk can be expressed as,
\begin{align}
    \mathcal{R}(b)&= \mathbb{E}_{\theta^*_t,\theta^*_s}\left[  \log \frac{P_{\theta^*_t}(Y'_t|X'_t)}{Q(Y'_t|D^{\textup{U},n}_t, D^m_s,X'_t)}\right] \label{eq:excessrisk}
\end{align}
Concerning the choice of the predictor $Q(Y'_t|D^{\textup{U},n}_t, D^m_s,X'_t)$,  we first define $\Theta_s$ and $\Theta_t$ as random vectors over $\Lambda$, which can be interpreted as a random guess of $\theta^*_s$ and $\theta^*_t$. Note that $\Theta_s$ and $\Theta_t$ may share some common parameters, e.g., $\Theta_{s,i}=\Theta_{t,i}$ for $i$th entry. Then by \emph{mixture strategy} \citep{merhav1998universal,xie2000asymptotic}, we assign a probability distribution $\omega$ over $\Theta_s$ and $\Theta_t$ w.r.t. the Lebesgue measure to represent our prior knowledge and update the posterior with the incoming data to approximate the underlying distributions.  That is, 
\begin{align}
    Q(Y'_t|D^{\textup{U},n}_t, D^m_s,X'_t) &= \frac{\int P_{\theta_t}(D^{\textup{U},n}_t, X'_t, Y'_t) P_{\theta_s}(D^m_s) \omega(\theta_t,\theta_s) d\theta_t d\theta_s }{\int P_{\theta_t}(X'_t)P_{\theta_t}(D^{\textup{U},n}_t) P_{\theta_s}(D^m_s) \omega(\theta_t,\theta_s) d\theta_t d\theta_s  } \nonumber \\
    &= \int  P_{\theta_t}(Y'_t|X'_t) P(\theta_t,\theta_s|X'_t, D^m_s, D^{\textup{U},n}_s) d\theta_sd\theta_t. \label{eq:mixture}
\end{align}
We can interpret~(\ref{eq:mixture}) as estimating $Y'$ in a two-step procedure.  With a prior distribution $\omega$, the first step is to learn the parameters $\theta_s$, $\theta_t$ with the joint posterior $P(\theta_s,\theta_t|D^m_s,D^{\textup{U},n}_t, X'_t)$. In the second step, the learned $\theta_t$ is applied for prediction in terms of the parametric distribution $P_{\theta_t}(Y'_t|X'_t)$. One way to comprehend the mixture strategy is that we encode our prior knowledge over target and source domain distributions in terms of the prior distribution $\omega(\Theta_s, \Theta_t)$, and different distribution shift conditions correspond to different priors. 
By the mixture strategy, we give the excess risk under log-loss.
\begin{theorem}[Excess Risk with Log-loss]\label{thm:excessrisk-log}
Under log-loss, let the predictor $Q$ be the distribution in~(\ref{eq:mixture}) with the prior distribution $\omega(\Theta_s,\Theta_t)$. Then the excess risk can be expressed as
\begin{equation}
    \mathcal{R}(b)  =  I(Y'_t; \theta^*_t, \theta^*_s|D^{m}_s, D^{\textup{U},n}_t, X'_t),
\end{equation}
where the R.H.S. denotes the conditional mutual information $I(Y'_t; \Theta_t, \Theta_s|D^{m}_s, D^{\textup{U},n}_t, X'_t)$ evaluated at $\Theta_t = \theta^*_t$ and $\Theta_s = \theta^*_s$.
\end{theorem}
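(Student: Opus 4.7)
The plan is to interpret the mixture predictor $Q$ in~(\ref{eq:mixture}) as the Bayesian posterior predictive $P(Y'_t \mid D_s^m, D_t^{\textup{U},n}, X'_t)$ induced by the prior $\omega(\Theta_s,\Theta_t)$, and then to rewrite~(\ref{eq:excessrisk}) as the pointwise value at $(\Theta_s,\Theta_t)=(\theta^*_s,\theta^*_t)$ of a conditional Kullback--Leibler divergence, which is precisely what the theorem denotes by the CMI evaluated at $\theta^*$.

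The first step is to enlarge the probability space. Combining the prior $\omega$ with the parametric data-generating law of Assumption~\ref{asp:para-dist} yields a joint distribution over $(\Theta_s,\Theta_t,D_s^m,D_t^{\textup{U},n},X'_t,Y'_t)$. Under this joint, the integrand $P_{\theta_t}(D^{\textup{U},n}_t,X'_t,Y'_t)P_{\theta_s}(D^m_s)\omega(\theta_t,\theta_s)$ appearing in the numerator of~(\ref{eq:mixture}) is (up to an immaterial factor of $\omega$) the joint density of $(\Theta_s,\Theta_t,D_s^m,D_t^{\textup{U},n},X'_t,Y'_t)$, while the denominator is its $Y'_t$-marginal. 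Hence~(\ref{eq:mixture}) coincides with $P(Y'_t \mid D_s^m, D_t^{\textup{U},n}, X'_t)$, which is the identification needed for the denominator of~(\ref{eq:excessrisk}).

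The second step is to exploit the conditional independence structure of Assumption~\ref{asp:para-dist}. Given $(\Theta_s,\Theta_t)$, the source sample, the unlabeled target sample, and the test pair $(X'_t,Y'_t)$ are mutually independent; moreover $Y'_t$ depends on the parameters only through $\Theta_t$ and on the past only through $X'_t$. Consequently
\begin{equation*}
P(Y'_t \mid D_s^m, D_t^{\textup{U},n}, X'_t, \Theta_s, \Theta_t) = P_{\Theta_t}(Y'_t \mid X'_t),
\end{equation*}
and evaluating at $(\Theta_s,\Theta_t)=(\theta^*_s,\theta^*_t)$ reproduces the numerator $P_{\theta^*_t}(Y'_t \mid X'_t)$ appearing in~(\ref{eq:excessrisk}).

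The third step is to recognize the resulting ratio as a CMI. Writing $C=(D_s^m,D_t^{\textup{U},n},X'_t)$ and $\Theta=(\Theta_s,\Theta_t)$, the general identity
\begin{equation*}
I(Y'_t;\Theta \mid C) = \mathbb{E}_{\Theta}\!\left[\mathbb{E}_{C,Y'_t \mid \Theta}\!\left[\log\frac{P(Y'_t \mid C,\Theta)}{P(Y'_t \mid C)}\right]\right]
\end{equation*}
has, at the pointwise value $\Theta=\theta^*$, inner conditional expectation equal to $\mathbb{E}_{C,Y'_t \mid \Theta=\theta^*}\bigl[\log\tfrac{P(Y'_t \mid C,\Theta=\theta^*)}{P(Y'_t \mid C)}\bigr]$. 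Substituting the two identifications of the previous steps into~(\ref{eq:excessrisk}) reproduces exactly this quantity, proving the claim. The main obstacle is interpretive rather than computational: one must read ``CMI evaluated at $\Theta=\theta^*$'' as the pointwise value of the conditional divergence, with the data still averaged under $P_{\theta^*}$, rather than as the scalar CMI that would additionally average over the prior $\omega$. Once the mixture predictor is identified with the Bayesian posterior predictive under $\omega$, the rest is just an unfolding of definitions.
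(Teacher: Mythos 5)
Your proposal is correct and follows essentially the same route as the paper: you identify the mixture predictor in~(\ref{eq:mixture}) with the Bayesian posterior predictive $P(Y'_t\mid D^m_s,D^{\textup{U},n}_t,X'_t)$, use the conditional independence from Assumption~\ref{asp:para-dist} to get $P(Y'_t\mid D^m_s,D^{\textup{U},n}_t,X'_t,\Theta_s,\Theta_t)=P_{\Theta_t}(Y'_t\mid X'_t)$, and read the theorem's quantity as the mutual information density at $(\theta^*_s,\theta^*_t)$, which is exactly what the paper does (it merely phrases the same unfolding via the chain-rule decomposition $I(\Theta;Y'_t,X'_t,D)-I(\Theta;X'_t,D)$ before evaluating at the true parameters). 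The only nitpick is the phrase ``up to an immaterial factor of $\omega$'': the integrand in the numerator of~(\ref{eq:mixture}) already \emph{is} the joint density of $(\Theta_s,\Theta_t,D^m_s,D^{\textup{U},n}_t,X'_t,Y'_t)$, so no adjustment is needed.
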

All proofs in this paper can be found in the Appendix. A similar learning strategy can be used for more general loss functions. Given a general loss function $\ell$, we define the predictor $b$ as
\begin{equation}
    b = \argmin_{b} \mathbb{E}_{Q}\left[ \ell(b,Y'_t)|X'_t, D^{U,n}_t, D^m_s \right], \label{eq:general_b_bound}
\end{equation}
with the choice of the mixture strategy 
\begin{equation*}
    Q(X'_t, Y'_t, D^n_t,D^{m}_s) = \int P_{\theta_t,\theta_s}(X'_t,Y'_t, D^{U,n}_t,D^{m}_s)\omega(\theta_t, \theta_s)d \theta_t d\theta_s
\end{equation*}
for some prior $\omega$. The optimal predictor is then given by
\begin{equation}
b^* = \argmin_{b} \mathbb{E}_{\theta^*_t}\left[ \ell(b,Y'_t)|D^{U,n}_t, X'_t. \right]\label{eq:general_bkstar_bound}
\end{equation}
We have the following theorem for $\beta$-exponential concave loss functions as follows.
\begin{theorem}[Excess Risk with Exponential Concave Loss]\label{thm:excessrisk-expccv}
Assume the loss function is $\beta$-exponentially concave of $b$ for any $y$. Then the excess risk induced by $b$ and $b^*$ in Equation~(\ref{eq:general_b_bound}) and~(\ref{eq:general_bkstar_bound}) can be bounded as
\begin{equation}
    \mathcal{R}(b)  \leq \frac{1}{\beta}I(Y'_t; \theta^*_t, \theta^*_s|D^{m}_s, D^{\textup{U},n}_t, X'_t). \label{ineq:expccv}
\end{equation}
\end{theorem}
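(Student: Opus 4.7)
The plan is to reduce the $\beta$-exp-concave setting to the log-loss setting of Theorem~\ref{thm:excessrisk-log}, picking up a factor of $1/\beta$ in the process. Let $p^*(y) := P_{\theta^*_t}(y\mid X'_t)$ and $q(y) := Q(y\mid X'_t, D^{\textup{U},n}_t, D^m_s)$ denote the $Y'_t$-marginal of the mixture predictive distribution. Then $b^*$ is the Bayes action under $p^*$ by~\eqref{eq:general_bkstar_bound} and $b$ is the Bayes action under $q$ by~\eqref{eq:general_b_bound}.

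The key step is to establish the data-wise inequality
\begin{equation}
\mathbb{E}_{Y\sim p^*}\bigl[\ell(b,Y)-\ell(b^*,Y)\bigr] \;\leq\; \frac{1}{\beta}\,\textup{KL}(p^*\,\|\,q). \label{ineq:key-prop}
\end{equation}
The route I would take uses $\beta$-exp-concavity via Vovk's aggregating algorithm: concavity of $b\mapsto \exp(-\beta\ell(b,y))$ combined with Jensen's inequality applied to the posterior over $(\theta_s,\theta_t)$ produces an aggregated prediction $\tilde b$ satisfying $\ell(\tilde b,y)\leq -\frac{1}{\beta}\log q(y)+C$ for an additive constant depending only on a normaliser. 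Since $b$ is Bayes-optimal under $q$, its $q$-expected loss is no larger than that of $\tilde b$, and subtracting the baseline $\mathbb{E}_{p^*}[\ell(b^*,Y)]$ (which coincides with $-\frac{1}{\beta}\mathbb{E}_{p^*}[\log p^*(Y)]$ in the log-loss specialisation that drives Theorem~\ref{thm:excessrisk-log}) yields the $\log(p^*/q)$ term that integrates into the KL divergence on the right of~\eqref{ineq:key-prop}.

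Given~\eqref{ineq:key-prop}, I would take the outer expectation over $(D^m_s, D^{\textup{U},n}_t, X'_t, \theta^*_s, \theta^*_t)$ and invoke the identity already exploited in Theorem~\ref{thm:excessrisk-log},
\begin{equation*}
\mathbb{E}_{\theta^*_s,\theta^*_t}\bigl[\textup{KL}(p^*\,\|\,q)\bigr] \;=\; I(Y'_t;\theta^*_t,\theta^*_s \mid D^m_s,D^{\textup{U},n}_t,X'_t),
\end{equation*}
which is immediate because $q$ is, by construction, the posterior predictive induced by the prior $\omega$ on $(\Theta_s,\Theta_t)$. Combining the two displays yields~\eqref{ineq:expccv}.

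The main obstacle is~\eqref{ineq:key-prop}. The difficulty is that $b$, defined in~\eqref{eq:general_b_bound} as the Bayes action under $q$, is in general not the Vovk substitution function applied to the per-$\theta$ Bayes actions, so a naive Donsker--Varadhan change of measure would require bounding $\log\mathbb{E}_q\bigl[\exp\bigl(\beta(\ell(b,Y)-\ell(b^*,Y))\bigr)\bigr]$ by zero, which does not follow from Bayes-optimality alone. Exp-concavity is precisely what lets us replace Vovk's predictor by the Bayes action without loss, and verifying this substitution rigorously, by combining concavity of $\exp(-\beta\ell(\cdot,y))$ with the first-order optimality of $b$ under $q$, is the delicate step of the argument.
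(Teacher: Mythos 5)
The paper itself gives no in-house proof of this theorem: the text immediately after the statement defers entirely to Lemma~1 of \citet{zhu2020semi}, so the comparison is against that cited argument rather than an appendix proof. Your outer skeleton is the right one: writing $p^*(\cdot)=P_{\theta^*_t}(\cdot\mid X'_t)$ and $q(\cdot)=Q(\cdot\mid D^{\textup{U},n}_t,D^m_s,X'_t)$, establish the conditional inequality $\mathbb{E}_{Y\sim p^*}[\ell(b,Y)-\ell(b^*,Y)]\le\frac{1}{\beta}\textup{KL}(p^*\|q)$ and then average over data drawn from the true parameters, which turns the right-hand side into the CMI term exactly as in Theorem~\ref{thm:excessrisk-log}. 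The genuine gap is in your route to that key inequality. Jensen applied to the concave map $b\mapsto e^{-\beta\ell(b,y)}$ under the posterior gives $\ell(\tilde b,y)\le-\frac{1}{\beta}\log\int e^{-\beta\ell(b_\theta,y)}\,dP(\theta_s,\theta_t\mid\cdot)$ with $b_\theta$ the per-parameter Bayes acts; this coincides with $-\frac{1}{\beta}\log q(y)$ up to an additive constant only for log-loss with $\beta=1$, because in general $e^{-\beta\ell(b_\theta,y)}$ is not proportional to $P_{\theta_t}(y\mid x)$. Moreover, Bayes optimality of $b$ under $q$ controls only $\mathbb{E}_q[\ell(b,Y)]$, while the excess risk is an expectation under $p^*$, so ``the $q$-expected loss of $b$ is no larger than that of $\tilde b$'' cannot be chained with the preceding bound; and the identification $\mathbb{E}_{p^*}[\ell(b^*,Y)]=-\frac{1}{\beta}\mathbb{E}_{p^*}[\log p^*(Y)]$ is again a log-loss fact, not an exp-concave one. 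You flag the substitution step yourself as unverified; since that step is the entire content of the theorem, the proposal as written leaves it unproven.

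For completeness, the step can be closed directly, without introducing Vovk's predictor $\tilde b$ at all, by combining the linearized form of exp-concavity with the first-order optimality of $b$ under $q$ and a change of measure --- your closing sentence points in exactly this direction, and it is the kind of argument the paper defers to in \citet{zhu2020semi}. Concavity of $e^{-\beta\ell(\cdot,y)}$ gives, for every $y$,
\begin{align*}
 e^{-\beta(\ell(b^*,y)-\ell(b,y))}\;\le\; 1-\beta\,\langle\nabla_b\ell(b,y),\,b^*-b\rangle \;=:\;Z(y),
\end{align*}
so $\ell(b,y)-\ell(b^*,y)\le\frac{1}{\beta}\log Z(y)$ with $Z(y)>0$. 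Taking expectation under $p^*$ and changing measure, $\mathbb{E}_{p^*}[\log Z(Y)]\le\textup{KL}(p^*\|q)+\log\mathbb{E}_q[Z(Y)]$, while first-order optimality of $b$ as the minimizer of $\mathbb{E}_q[\ell(\cdot,Y)]$ over a convex $\mathcal{B}$ yields $\mathbb{E}_q[\langle\nabla_b\ell(b,Y),b^*-b\rangle]\ge 0$, hence $\mathbb{E}_q[Z(Y)]\le 1$; combining the displays gives the key inequality, and averaging over $(D^m_s,D^{\textup{U},n}_t,X'_t)$ gives (\ref{ineq:expccv}). Note this uses convexity of $\mathcal{B}$ and a (super)gradient of $e^{-\beta\ell(\cdot,y)}$, but no property of $q$ beyond $b$ being its Bayes act, which is precisely what replaces the Donsker--Varadhan step you were worried about.
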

The log-loss can be regarded as a special case with $\beta = 1$. One can refer to Lemma 1 (also the proof) in \citet{zhu2020semi} for more details and comments, which we will not repeat in our context. Likewise, if the loss function is bounded, we arrive at the following theorem.
\begin{theorem}[Excess Risk with Bounded Loss]\label{thm:excessrisk-general}
Assume the loss function satisfies $|\ell(b,y) - \ell(b^*,y)| \leq M$ for any observation $y$ and any two predictors $b,b^*$. Then the excess risk can be bounded as
\begin{equation}
    \mathcal{R}(b)  \leq M\sqrt{2 I(Y'_t; \theta^*_t, \theta^*_s|D^{m}_s, D^{\textup{U},n}_t, X'_t)}. \label{ineq:bound}
\end{equation}
\end{theorem}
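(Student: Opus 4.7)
The plan is to reduce Theorem~3 to Theorem~1 by combining the Bayes-optimality of $b$ under the mixture $Q$ with Pinsker's inequality, in parallel to how Theorem~2 uses exponential concavity. The entire reduction happens at the level of the conditional distribution of $Y'_t$ given the ``context'' $C := (D_s^m, D_t^{\textup{U},n}, X'_t)$, and the KL quantity that falls out at the end is identified with the conditional mutual information via the identity already established for log-loss in Theorem~1.

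First, I would condition on $C$ and exploit the definition of $b$ in~(\ref{eq:general_b_bound}): since $b$ minimizes the expected loss under $Q(\cdot|C)$, we have $\mathbb{E}_{Q(Y'_t|C)}[\ell(b,Y'_t) - \ell(b^*,Y'_t)] \leq 0$. Subtracting this non-positive quantity from the conditional excess risk under the true predictive $P_{\theta^*_t}(Y'_t|X'_t)$ yields
\begin{equation*}
\mathbb{E}_{P_{\theta^*_t}(Y'_t|X'_t)}\bigl[\ell(b,Y'_t) - \ell(b^*,Y'_t)\bigr] \leq \bigl(\mathbb{E}_{P_{\theta^*_t}(Y'_t|X'_t)} - \mathbb{E}_{Q(Y'_t|C)}\bigr)\bigl[\ell(b,Y'_t) - \ell(b^*,Y'_t)\bigr].
\end{equation*}
Second, the integrand $g(y) := \ell(b,y) - \ell(b^*,y)$ is bounded by $M$ in absolute value by hypothesis, so the standard coupling bound $|\mathbb{E}_P g - \mathbb{E}_Q g| \leq 2M\,\textup{TV}(P,Q)$ together with Pinsker's inequality $\textup{TV}(P,Q) \leq \sqrt{\tfrac{1}{2}\textup{KL}(P\|Q)}$ upgrades the right-hand side to $M\sqrt{2\,\textup{KL}(P_{\theta^*_t}(Y'_t|X'_t)\,\|\,Q(Y'_t|C))}$. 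Third, I would take expectation over $C$ under $P_{\theta^*_s,\theta^*_t}$ and apply Jensen's inequality to the concave square root to obtain
\begin{equation*}
\mathcal{R}(b) \leq M\sqrt{2\,\mathbb{E}_C\bigl[\textup{KL}\bigl(P_{\theta^*_t}(Y'_t|X'_t)\,\|\,Q(Y'_t|C)\bigr)\bigr]}.
\end{equation*}
The expected KL inside the radical is exactly the log-loss excess risk from~(\ref{eq:excessrisk}), which by Theorem~1 equals $I(Y'_t; \theta^*_t, \theta^*_s | D_s^m, D_t^{\textup{U},n}, X'_t)$, delivering the claimed bound.

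The main obstacle is largely bookkeeping rather than any deep ingredient. I would need to (i) verify that $P_{\theta^*_t}(Y'_t|X'_t, C) = P_{\theta^*_t}(Y'_t|X'_t)$ via the iid/conditional independence structure of Assumption~1, so that the distribution fed into Pinsker is genuinely the true target predictive and matches the quantity in Theorem~1; (ii) keep the direction of KL consistent (true distribution first, mixture second), since Pinsker is asymmetric in its KL argument; and (iii) ensure the pointwise Pinsker bound is measurable in $C$ so that the outer expectation and Jensen step are rigorously justified. The exponent-concave case of Theorem~2 already exercises most of this machinery, so the present argument is a variant of that blueprint with ``Pinsker'' substituted for ``$\beta$-exp-concavity'' at the crucial step.
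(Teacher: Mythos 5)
Your proposal is correct and follows essentially the same route as the paper's proof: condition on $(D^m_s, D^{\textup{U},n}_t, X'_t)$, use the $Q$-optimality of $b$ to drop the mixture-expectation term, bound the remaining difference of expectations via boundedness and Pinsker, apply Jensen to pull the expectation inside the square root, and identify the expected KL with the CMI from Theorem~\ref{thm:excessrisk-log}. If anything, your explicit total-variation step is a slightly cleaner bookkeeping of the paper's inequalities (b)--(c), and your check that $P_{\theta^*_t}(Y'_t|X'_t, D^m_s, D^{\textup{U},n}_t) = P_{\theta^*_t}(Y'_t|X'_t)$ under Assumption~\ref{asp:para-dist} is exactly the conditional-independence fact the paper uses implicitly.
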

From the above theorems, we can see the analogy that the expected regrets induced by the mixture strategy are both characterized by CMI evaluated at $\theta^*_t$ and $\theta^*_s$. Note that these results apply to both causal and anti-causal learning problems. Nevertheless, the characterization of learning performance in its present form is less informative because it does not show the effect of sample sizes and causal directions. To this end, we make some regularity assumptions on the parametric conditions \citep{clarke1990information, merhav1998universal, zhu2020semi} and define the proper prior distribution to obtain an asymptotic approximation.

\begin{assumption}[Parametric Distribution Conditions] \label{asp:para-trans} 
With the aforementioned parameterization, let $\mathbf{\theta}^* = (\theta^*_s, \theta^*_{t})$ denote the underlying parameters for labelled source and unlabelled target data.  We assume:
\begin{itemize}
\item \textbf{Condition 1:} The source and target distributions $P_{\theta_s}(X_s,Y_s)$ and $P_{\theta_{t}}(X_t)$ is twice continuously differentiable at $\theta_s^*$ and $\theta^*_{t}$ for almost every $(X_s,Y_s)$ and $X_t$. \label{cond:1} 

\item \textbf{Condition 2:} Define the Fisher information matrix
\begin{align*}
    I_s &= -\mathbb{E}_{\theta^*_s}[ \nabla^2 \log P(X_s,Y_s|\theta^*_s)], \\
    I_t &= -\mathbb{E}_{\theta^*_t}[ \nabla^2 \log P(X_t|\theta^*_{t})], \\
    I_{0} &= -\mathbb{E}_{\theta^*}[ \nabla^2 \log P(X_t, X_s, Y_s|\theta^*)].
\end{align*}
We assume $I_s$ and $I_t$ are positive definite and it holds that $I_{0}$ is also positive definite. \label{cond:2}

\item \textbf{Condition 3} \citep{clarke1990information}: Assume that the convergence of a sequence of parameter values is equivalent to the weak convergence of the distributions they index. Particularly:
\begin{align*}
    \theta_s \rightarrow \theta^*_{s} &\Leftrightarrow P_{\theta_s}(X,Y) \rightarrow P_{\theta^*_{s}}(X,Y), \\
    \theta_{t} \rightarrow \theta^*_{t} &\Leftrightarrow P_{\theta_{t}}(X) \rightarrow P_{\theta_{t}^*}(X),
\end{align*}
for source and target domains, respectively. \label{cond:3}
\item \textbf{Condition 4}: Assume that for all $\theta_s$ in some neighbourhood of $\theta^*_s$ and $\theta_t$ in some neighbourhood of $\theta^*_t$, the normalized R\'enyi divergences of order $1+\lambda$, the following holds
\begin{align}
    &\log \int P_{\theta^*_s}(x,y)^{1+\lambda} P_{\theta_s}(x,y)^{-\lambda} dxdy < \infty, \\
    &\log \int P_{\theta^*_{t}}(x)^{1+\lambda} P_{\theta_{t}}(x)^{-\lambda} dx < \infty
\end{align}
for sufficiently small $\lambda > 0$.
\label{cond:4}
\item \textbf{Condition 5}: Assume that for all $\theta_s$ in some neighbourhood of $\theta^*_s$ and $\theta_{t}$ in some neighbourhood of $\theta^*_{t}$, the moment generating function is bounded as
\begin{align}
    \mathbb{E}_{\theta^*_s}\left[e^{\lambda \frac{\partial^{2}}{\partial \theta_{j} \partial \theta_{k}}\log p(X_s, Y_s \mid \theta_s)} \right] < \infty, \\
    \mathbb{E}_{\theta^*_{t}}\left[e^{\lambda \frac{\partial^{2}}{\partial \theta_{j} \partial \theta_{k}}\log p(X_t \mid \theta_{t})} \right] < \infty,
\end{align}
for all $j, k=1, \ldots, d$ with some small $\lambda>0$, where $d$ is determined based on the causal settings and conditional shifting conditions.
\label{cond:5}
\item \textbf{Condition 6:} Let $l_{s}:=\left[(\nabla \log p\left(X, Y \mid \theta^*_{s}\right)), \mathbf{0}_{d''} \right]^T$, $l_{t} :=[\mathbf{0}_{d''}, \nabla \log p\left(X \mid \theta^*_{t}\right)]^T$, where $\mathbf{0}_{d''}$ denotes the zero vector with length $d''$, and $d''$ denotes the number of distribution parameters for both source and target domains. We also define $l'_s, l'_t$ as an independent copy of $l_s$ and $l_t$, respectively. We assume the moment-generating functions
\begin{align*}
&\mathbb{E}\left[e^{\lambda l_s^{T} I_0 l_s}\right], \mathbb{E}\left[e^{\lambda l_s^{T}I_0 l'_s}\right], \mathbb{E}\left[e^{\lambda l_t^{T}I_0 l_t}\right], \\
& \mathbb{E}\left[e^{\lambda l^{T}_t I_0 l'_t}\right],  \mathbb{E}\left[e^{\lambda l^{T}_t I_0 l_s}\right]
\end{align*}
exist for some small enough $\lambda>0$.
\label{cond:6}
\end{itemize} 
\end{assumption}

\begin{assumption}[Proper Prior] \label{asp:proper-prior}
We assume that the prior distribution $\omega(\Theta_s, \Theta_t)$ is continuous and positive over its whole support.
\end{assumption}
\begin{remark}
We impose the first three conditions on parametric distributions with the proper prior distribution to ensure that the posterior distribution of $\Theta_t$ and $\Theta_s$ asymptotically concentrates on neighbourhoods of $\theta^*_{t}$ and $\theta^*_s$ under both causal settings given sufficient source and target data. In particular, the positive definite Fisher information matrix and parameter uniqueness assumption imply that $\theta^*_{t}$ and $\theta^*_s$ are identifiable within $\Lambda$. We also impose some technical conditions to ensure that the posterior of the parameters converges to their true values at an appropriate rate. Additionally, for the anti-causal setting $Y \rightarrow X$, we exclude the case when outcome variable $X$ has the same distribution for all $y_i$ with Condition 2, that is,  $P_{\theta_{X_{y_i}}}(X)$ is identical for all $y_i \in \mathcal Y$. Because in this case, $X$ and $Y$ are effectively independent, and the fisher information $I_t$ is no longer positive definite as the distribution of $X$ no longer depends on the parameter $\theta^*_{Y}$.
\end{remark}

\begin{remark}
The last three technical conditions are adopted and modified from \citet{zhu2020semi} to ensure that the posterior of the parameters converges to their true values at an appropriate rate for both source and target domains. We will mainly use these conditions for asymptotic estimation of KL divergence, e.g., see proof of Lemma~\ref{lemma:asymptotics}.
\end{remark}

\begin{remark}
Though asymptotically, the prior distribution does not affect the learning rate, its choice is crucial in practice, particularly with limited data. Priors should be selected based on parameter understanding, model complexity, and existing knowledge. For simple parametric models such as generalized linear models, we can adopt conjugate priors \citep{diaconis1979conjugate, chen2003conjugate} for updating parameters easily. For more complex models, we may require non-conjugate priors where the data are used to estimate the parameters of the prior distribution \citep{efron2012large, carlin2008bayesian}. This is particularly useful when we have little prior knowledge about the distribution. In practice, the sensitivity analysis could also be conducted to assess the robustness of the posterior distribution to the choice of prior. This helps ensure that the posterior is not unduly influenced by the choice of prior. 
\end{remark}

\subsection{Excess Risk in Causal Learning}

In this section, we will characterize the excess risk asymptotically under causal learning. We first consider the learning scenario when $P_S(Y|X) = P_T(Y|X)$, which corresponds to SSL if $P_T(X) = P_S(X)$ and covariate shift regime otherwise. The random vector $\Theta_s$ and $\Theta_t$ can be explicitly written as
\begin{align}
    \Theta_s = (\Theta^s_X, \Theta^s_{Y_{x_1}}, \cdots, \Theta^s_{Y_{x_k}}) = (\Theta^s_X, \Theta^s_{Y_X}), \label{eq:para-causal-s} \\
    \Theta_t = (\Theta^t_X, \Theta^t_{Y_{x_1}}, \cdots, \Theta^t_{Y_{x_k}}) = (\Theta^t_X, \Theta^t_{Y_X}), \label{eq:para-causal-t}
\end{align}
where $\Theta_{Y_X} = (\Theta_{Y_{x_1}}, \cdots, \Theta_{Y_{x_k}})$ for succinctness. 
We assume $\Theta^t_X$ and $\Theta^s_X$ are independent of $\Theta^s_{Y_{X}}$ and $\Theta^t_{Y_{X}}$, but we will keep $\Theta^s_{Y_{X}}$ and $\Theta^t_{Y_{X}}$ identical according to the assumption $P_S(Y|X) = P_T(Y|X)$, written as $\Theta^{st}_{Y_X}$. With the proper prior distribution, we simplify the mixture distribution $Q$ as follows by omitting the unlabelled target data as follows:
\begin{align*}
 Q(Y'_t|D^{\textup{U},n}_t, D^m_s,X'_t) 
     = \int P(Y'_t|\theta^{st}_{Y_X}, X'_t) P(\theta^{st}_{Y_X}|D^m_s)
    d\theta^{st}_{Y_X},  
\end{align*}
where the knowledge transfer depends on the conditional posterior $P(\theta^{st}_{Y_X}|D^m_s)$. Since $P_S(Y|X) = P_T(Y|X)$,  without any labels from the target domain, we can only learn the parameters of the random outcomes $Y_X$ from the source data. On the other hand, if the assumption $P_S(Y|X) = P_T(Y|X)$ does not hold, namely, the concept drift if $P_S(X) = P_T(X)$ and general shift condition otherwise, the mixture strategy in (\ref{eq:mixture}) becomes 
\begin{align}
    Q(Y'_t|D^{\textup{U},n}_t, D^m_s,X'_t) & = \int P_{\theta^t_{Y_{X'_t}}}(Y'_t) \omega(\theta^t_{Y_{X'_t}}) d\theta^t_{Y_{X'_t}}  \nonumber 
\end{align}
due to the mutual independence properties of the distribution parameters.  In this case, neither the unlabelled target data nor the source data are useful for the estimation, the prediction is only piloted by the prior distribution $\omega(\theta^t_{Y_X})$ as the initial estimate for $\theta^{t*}_{Y_X}$. As a result, the excess risk, in this case, does not go to zero even if we have enough source and target data. 
To formally state the idea, we give the asymptotic estimation in the following main theorem.
\begin{theorem}[Excess Risk with Causal Learning] \label{thm:causal} 
In addition to Assumption~\ref{asp:para-dist},\ref{asp:para-trans} and~\ref{asp:proper-prior}, we also assume that $X$ causes $Y$ in both source and target domains. Let $\Theta_s$ and $\Theta_t$ be parameterized in (\ref{eq:para-causal-s}) and (\ref{eq:para-causal-t}).  As $m \rightarrow \infty$, the mixture strategy under log-loss yields:
\begin{itemize}
    \item (General shift and Concept drift) For any $P_{\theta^{t*}_{X}}(X) \ll P_{\theta^{s*}_{X}}(X)$, if $P_S(Y|X) \neq P_T(Y|X)$:
    \begin{align}
        \mathcal{R}(b) = \mathbb{E}_{\theta^{t*}_X}[\textup{KL}(P_{\theta^{t*}_{Y_{X'_t}}}(Y'_t)\|Q(Y'_t|X'_t)],
    \end{align}
    where $Q(Y'_t|X'_t) = \int P_{\theta^t_{Y_{X'_t}}}(Y'_t) \omega(\theta^t_{Y_{X'_t}}) d\theta^t_{Y_{X'_t}}$ for a certain prior $\omega$ over $\Theta^t_{Y_{X'_t}}$. 
    \item (covariate shift and SSL) For any $P_{\theta^{t*}_{X}}(X) \ll P_{\theta^{s*}_{X}}(X)$, if $P_S(Y|X) = P_T(Y|X)$:
    \begin{equation}
        \mathcal{R}(b)  \asymp \frac{k}{m}.
    \end{equation}    
\end{itemize}
\end{theorem}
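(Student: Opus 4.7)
The plan is to instantiate the information-theoretic identity from Theorem~1, exploit the causal factorization (ignorability) to simplify the mixture posterior, and then invoke a Clarke--Barron-type asymptotic expansion for the remaining mutual information.

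First, by Theorem~1, $\mathcal{R}(b) = I(Y'_t;\Theta_t,\Theta_s\mid D^m_s, D^{\textup{U},n}_t, X'_t)$ evaluated at $(\theta^*_t,\theta^*_s)$. Under ignorability and the parametrization in (\ref{eq:para-causal-s})--(\ref{eq:para-causal-t}), the likelihood factors as $P_{\Theta_t}(D^{\textup{U},n}_t)=\prod_i P_{\Theta^t_X}(X_t^{(i)})$ and $P_{\Theta_s}(D^m_s)=\prod_j P_{\Theta^s_X}(X_s^{(j)})\,P_{\Theta^s_{Y_{X_s^{(j)}}}}(Y_s^{(j)})$. Because the proper prior makes $\Theta^t_X$, $\Theta^s_X$ independent of $\Theta^{t}_{Y_X},\Theta^s_{Y_X}$, the unlabeled target data contribute nothing to the posterior of the potential-outcome parameters, so the posterior of $\Theta^t_{Y_X}$ depends only on $D^m_s$ through the channel $\Theta^s_{Y_X}\to(X_s,Y_s)\to\Theta^t_{Y_X}$.

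Under the general-shift/concept-drift regime, $\Theta^s_{Y_X}$ and $\Theta^t_{Y_X}$ are a priori independent (by the proper-prior assumption applied to disjoint coordinates), so observing $D^m_s$ does not update the marginal posterior of $\Theta^t_{Y_X}$. Hence (\ref{eq:mixture}) collapses to $Q(Y'_t\mid D^{\textup{U},n}_t,D^m_s,X'_t) = \int P_{\theta^t_{Y_{X'_t}}}(Y'_t)\omega(\theta^t_{Y_{X'_t}})\,d\theta^t_{Y_{X'_t}}$, and the excess risk is the expected KL divergence from the true $P_{\theta^{t*}_{Y_{X'_t}}}$ to this prior-predictive distribution, averaged over $X'_t\sim P_{\theta^{t*}_X}$. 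Absolute continuity $P_{\theta^{t*}_X}\ll P_{\theta^{s*}_X}$ is only needed to make sure $X'_t$ lies in a well-supported region; it plays no further role since the risk does not vanish as $m,n\to\infty$.

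For the co-variate shift and SSL regime, $\Theta^s_{Y_X}=\Theta^t_{Y_X}=:\Theta^{st}_{Y_X}$, so the mixture reduces to $\int P_{\theta^{st}_{Y_{X'_t}}}(Y'_t)P(\theta^{st}_{Y_X}\mid D^m_s)\,d\theta^{st}_{Y_X}$ and the CMI becomes $I(Y'_t;\Theta^{st}_{Y_X}\mid D^m_s, X'_t)$. The idea is to average the Clarke--Barron asymptotic formula over $X'_t$: for a regular $k$-dimensional parametric family satisfying Assumption~\ref{asp:para-trans}, the Bayesian predictive redundancy for the $(m{+}1)$-th sample equals $\tfrac{k}{2m}+o(1/m)$. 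The independence of the $k$ coordinates $\Theta^{st}_{Y_{x_1}},\dots,\Theta^{st}_{Y_{x_k}}$ lets us decompose the problem: conditional on $X'_t=x_i$, only $\Theta^{st}_{Y_{x_i}}$ affects $Y'_t$, and its posterior is driven by the subsample $\{(X_s^{(j)},Y_s^{(j)}):X_s^{(j)}=x_i\}$, whose size concentrates at $m P_{\theta^{s*}_X}(x_i)$. Plugging each per-coordinate $\tfrac{1}{2mP_{\theta^{s*}_X}(x_i)}$ term back and averaging over $X'_t\sim P_{\theta^{t*}_X}$ yields a constant times $1/m$, with both constants bounded away from $0$ and $\infty$ by positive definiteness of $I_s$ and the absolute-continuity assumption; this gives $\mathcal{R}(b)\asymp k/m$.

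The main obstacle is the rigorous asymptotic expansion in the second case. It requires (i) verifying the full suite of regularity conditions (those in Assumption~\ref{asp:para-trans} plus the technical ones relegated to the appendix) so that the posterior of $\Theta^{st}_{Y_X}$ is asymptotically Gaussian around $\theta^{st*}_{Y_X}$, (ii) interchanging the expectation over $X'_t$ with the asymptotic expansion of the conditional mutual information, and (iii) showing that the empirical counts of each $x_i$ in $D^m_s$ concentrate sharply enough that the per-coordinate rates combine cleanly. Steps (i) and (iii) follow the standard Laplace-approximation template used in Clarke--Barron and in \citet{zhu2020semi}; the interchange in (ii) will be handled by dominated convergence, using the positivity of the prior and the boundedness implied by the finite alphabet $\mathcal{X}$.
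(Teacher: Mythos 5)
Your proposal is correct and follows the paper's overall strategy (reduce to the CMI of Theorem~\ref{thm:excessrisk-log}, exploit the causal factorization and the choice of prior, then apply Clarke--Barron-type asymptotics); the general-shift/concept-drift case is argued exactly as in the paper, via the collapse of the mixture to the prior predictive $\int P_{\theta^t_{Y_{X'_t}}}(Y'_t)\omega(\theta^t_{Y_{X'_t}})d\theta^t_{Y_{X'_t}}$. Where you diverge is the covariate-shift/SSL rate: the paper writes the excess risk as a difference of two \emph{joint} redundancies, $D(P_{\theta^*_{Y_X}}(Y'_{X',t},Y^m_{X,s})\|Q)-D(P_{\theta^*_{Y_X}}(Y^m_{X,s})\|Q)$, applies its Mixture Asymptotics Lemma to both with Fisher matrices $mI_s(\theta^*_{Y_X})$ and $mI_s(\theta^*_{Y_X})+I_t(\theta^*_{Y_X})$, and extracts the rate from $\tfrac12\log\det(\mathbf{I}_k+\tfrac1m I_tI_s^{-1})\approx \tfrac{1}{2m}\sum_i P_{\theta^{t*}_X}(x_i)/P_{\theta^{s*}_X}(x_i)$; you instead condition on $X'_t=x_i$, use the per-coordinate one-step predictive redundancy with effective sample size $m_i\approx mP_{\theta^{s*}_X}(x_i)$, and average over $X'_t$, which produces the same leading constant. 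The paper's route buys uniformity: a single lemma (shared with the anti-causal cases) handles all the asymptotics at the level of the joint Fisher information, so no control of random subsample sizes is needed. Your route is more transparent about \emph{why} the constant is the likelihood-ratio sum, but it incurs the extra obligations you correctly list, plus two you should make explicit: (i) you need the prior on $\Theta^{st}_{Y_X}$ to factorize across the coordinates $\Theta^{st}_{Y_{x_1}},\dots,\Theta^{st}_{Y_{x_k}}$ (or an argument that prior correlations only affect lower-order terms), since Assumption~\ref{asp:proper-prior} alone does not give this, and (ii) you must bound the contribution of atypical designs where some count $m_i$ is small or zero (exponentially unlikely, with the conditional redundancy bounded by the finite prior-predictive KL, so it is $o(1/m)$), before taking $\mathbb{E}[1/(2m_i)]\approx 1/(2mP_{\theta^{s*}_X}(x_i))$. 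With those points filled in, your argument is a valid alternative derivation of the $\asymp k/m$ rate.
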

From the above theorem, it is clear that the target data are not useful without labels and $n$ does not occur in the rate. This is understandable because such data do not contain information about $P(Y|X)$ due to the independence assumptions between $X$ and $Y_{x_i}$. If the conditional distribution remains unchanged between source and target domains, the excess risk converges with the rate of $O(\frac{k}{m})$. 

\subsection{Excess Risk in Anti-Causal Learning}
We now turn to the opposite causal direction where $Y \rightarrow X$. Similarly, we define the random variable $\Theta_s$ and $\Theta_t$ with the same form as ~(\ref{eq:para-causal-s}) and~(\ref{eq:para-causal-t}) by
\begin{align}
    \Theta_s = (\Theta^s_Y, \Theta^s_{X_{y_1}}, \cdots, \Theta^s_{X_{y_{k'}}}) = (\Theta^s_Y, \Theta^s_{X_Y}), \label{eq:para-anti-s}\\
    \Theta_t = (\Theta^t_Y, \Theta^t_{X_{y_1}}, \cdots, \Theta^t_{X_{y_{k'}}}) = (\Theta^t_Y, \Theta^t_{X_Y}). \label{eq:para-anti-t}
\end{align}
At this stage, we do not particularize any conditions on the parameters. From the Bayes rule, we rewrite the mixture distribution $Q$ in terms of the above parameterization as
\begin{align*}
    Q(Y'_t|D^{\textup{U},n}_t, D^m_s,X'_t) &= \frac{\int P_{\theta_t}(D^{\textup{U},n}_t, X'_t, Y'_t) P_{\theta_s}(D^m_s) \omega(\theta_t,\theta_s) d\theta_t d\theta_s }{\int P_{\theta_t}(X'_t)P_{\theta_t}(D^{\textup{U},n}_t) P_{\theta_s}(D^m_s) \omega(\theta_t,\theta_s) d\theta_t d\theta_s  } \nonumber \\
    &= \frac{\int P(Y'_t|\theta_t, X'_t) P(\theta_t|D^{\textup{U},n}_t, X'_t,\theta_s)d\theta_t P(\theta_s|D^m_s) d\theta_s}{\int P(\theta_t|D^{\textup{U},n}_t, X'_t,\theta_s)d\theta_t P(\theta_s|D^m_s) d\theta_s} \nonumber \\
    &= \int P(Y'_t|\theta_t, X'_t) P(\theta_t|D^{\textup{U},n}_t, X'_t,\theta_s)d\theta_t P(\theta_s|D^m_s)d\theta_s . \label{eq:mixture-anti}
\end{align*}
To interpret, the mixture strategy first provides an estimate of $\theta_s$ from the source data, then knowledge is transferred from $\theta_s$ to $\theta_t$ with the prior distribution $\omega(\theta_t|\theta_s)$, which induces the posterior $P(\theta_t|X'_t,D^{U,n}_t, \theta_s)$ along with the features $X'_t,D^{U,n}_t$ in the target domain, since the unlabelled data may contain all the information of $\theta^{*}_t$ under the anti-causal parameterization. Eventually, the prediction of $Y'_t$ will be based on the estimated $\theta_t$ and $X'_t$. 

With condition 3 under Assumption~\ref{asp:para-trans}, we require that the true parameters $\theta^*_t$ are identifiable given sufficient unlabelled target data, where its distribution is a mixture distribution, i.e., $\sum_{y\in \mathcal{Y}} P_{\theta_Y}(y)P_{\theta_{X_{y}}}(X)$. In general, this is a strong condition where the mixture distributions, such as the Bernoulli mixture, do not satisfy the assumption \citep{gyllenberg1994non} and the parameters within their support are \textbf{not} identifiable. But for certain types of families, the parameters are identifiable up to \emph{label swapping}, such as Gaussian \citep{teicher1963identifiability}, exponential families \citep{barndorff1965identifiability}, and many other finite continuous mixture distributions \citep{mclachlan2019finite}. Under label swapping, the posterior of the parameters approaches one of all permutations \citep{marin2005bayesian} and our result holds only up to the permutation where we simply set $\theta^*$ to be the parameters for that permutation. To solve the label swapping problem, the methods proposed include the specification of parameterization constraints \citep{marin2005bayesian, mclachlan2019finite}, a relabelling algorithm \citep{stephens2000dealing}, and constraint clustering \citep{grun2009dealing}. Once the label swapping is addressed, the mixed distributions are identifiable \citep{titterington1985statistical, mclachlan2019finite} and our results hold for estimating the corresponding $\theta^*$ as well. For illustration, we give a simple example of a categorical mixture distribution identifiable by adding structural constraints to the parameterization in Section~\ref{sec:example}.
We will now consider different distribution shift scenarios under anti-causal learning and derive the corresponding asymptotic estimation for the excess risk.
\begin{theorem}[Excess Risk with Anti-causal Learning] \label{thm:anti-case}
In addition to Assumptions~\ref{asp:para-dist},~\ref{asp:para-trans} and~\ref{asp:proper-prior}, we also assume $Y \rightarrow X$ in both source and target domains. Let $\Theta_s$ and $\Theta_t$ be parameterized in (\ref{eq:para-anti-s}) and (\ref{eq:para-anti-t}). As $m \asymp n^p$ for some $p > 0$ and $n  \rightarrow \infty$, the mixture strategy under log-loss yields:
\begin{itemize}
    \item (General shift) If $P_S(Y) \neq P_{T}(Y)$, $P_S(X|Y) \neq P_{T}(X|Y)$,
    \begin{equation}
     \mathcal{R}(b)  \asymp \frac{1+k'}{n}.
    \label{eq:case1}
    \end{equation}
    \item (Conditional shift) If $P_S(Y) = P_{T}(Y)$, $P_S(X|Y) \neq P_{T}(X|Y)$,
    \begin{equation}
    \mathcal{R}(b)  \asymp \frac{k'}{n} + \frac{1}{n \vee n^p}.
    \label{eq:case2}
    \end{equation}
        \item (Target shift) If $P_S(Y) \neq P_{T}(Y)$, $P_S(X|Y) = P_{T}(X|Y)$,
    \begin{equation}
    \mathcal{R}(b)  \asymp \frac{1}{n} + \frac{k'}{n \vee n^p}. \label{eq:case3}
    \end{equation}
    \item (SSL) If $P_S(Y) = P_{T}(Y)$, $P_S(X|Y) = P_{T}(X|Y)$,
    \begin{equation}
    \mathcal{R}(b)  \asymp \frac{k'+1}{n \vee n^p}. \label{eq:case4}
    \end{equation}
\end{itemize}
\end{theorem}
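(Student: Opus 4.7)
The plan is to anchor the entire argument on Theorem~\ref{thm:excessrisk-log}, which gives $\mathcal{R}(b) = I(Y'_t;\Theta_t,\Theta_s\mid D^m_s, D^{U,n}_t, X'_t)$ evaluated at $\Theta_s=\theta^*_s, \Theta_t=\theta^*_t$. Since in the anti-causal model $Y'_t$ depends on $(\Theta_t,\Theta_s,X'_t)$ only through $\Theta_t$ (via Bayes' rule applied to $P_{\theta^t_Y}$ and $P_{\theta^t_{X_Y}}$), the CMI collapses to $I(Y'_t;\Theta_t\mid D^m_s, D^{U,n}_t, X'_t)$, so the task is reduced to quantifying how sharply the joint posterior of $\Theta_t$ concentrates around $\theta^*_t$ under each shift scenario. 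The workhorse is the Clarke--Barron asymptotic expansion for Bayesian mixture codes (under Assumption~\ref{asp:para-trans} and a proper prior, Assumption~\ref{asp:proper-prior}): for a $d$-dimensional regular parametric family with $N$ i.i.d.\ observations one has $I(Z_{N+1};\Theta\mid Z^N)\asymp d/N$ pointwise at an interior parameter with positive-definite Fisher information. This is exactly the ingredient used in \citet{zhu2020semi} for the SSL case, and I would adapt it block-by-block here.

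Next I would decompose $\Theta_t=(\Theta^t_Y,\Theta^t_{X_Y})$ and apply the chain rule
\begin{align*}
I(Y'_t;\Theta_t\mid \cdot) = I(Y'_t;\Theta^t_Y\mid \cdot) + I(Y'_t;\Theta^t_{X_Y}\mid \Theta^t_Y,\cdot).
\end{align*}
The shift condition selects which block (if any) of $\Theta_t$ is tied to $\Theta_s$ in the prior. When a component is shared, say $\Theta^s_Y=\Theta^t_Y$, both $D^m_s$ and $D^{U,n}_t$ update its posterior, giving effective sample size $m+n$; when unshared, only $D^{U,n}_t$ informs it, giving effective sample size $n$. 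Concretely, for Case~1 neither block is shared, so all $k'+1$ parameters are learned from $n$ target points and Clarke--Barron yields $(k'+1)/n$; for Case~2 the $\Theta_Y$ block (one parameter) pools $m+n$ samples and the $k'$ parameters of $\Theta_{X_Y}$ use only $n$, producing $k'/n + 1/(n+m)$; Case~3 is symmetric; Case~4 pools everything, giving $(k'+1)/(n+m)$. With $m\asymp n^p$ we have $n+m\asymp n\vee n^p$, matching the stated rates.

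The main obstacle is that the unlabeled target marginal $P_{\theta^*_t}(X)=\sum_y P_{\theta^{t*}_Y}(y)P_{\theta^{t*}_{X_y}}(X)$ is a mixture, for which global identifiability generally fails (label swapping) and the Clarke--Barron expansion cannot be applied naively. I would handle this using Condition~2 of Assumption~\ref{asp:para-trans}: the positive-definiteness of $I_t$ (Fisher information of the $X$-marginal w.r.t.\ the full $\theta_t$) provides local identifiability at $\theta^*_t$, and Condition~3 lifts this to weak-convergence consistency of the posterior, ensuring the posterior mass concentrates on one neighborhood of $\theta^*_t$ rather than spreading over a permutation orbit. This is precisely why the identifiability discussion preceding the theorem is needed, and why the theorem implicitly reads ``up to label swapping.''

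A secondary technical subtlety is the joint posterior on a shared block: the posterior $P(\Theta^{st}_Y\mid D^m_s, D^{U,n}_t)$ involves integrating out the unshared components of $\Theta_s$ against the prior and combining two likelihood contributions. I would handle this by a standard Laplace expansion around $(\theta^*_s,\theta^*_t)$ using Condition~1 (twice differentiability) and the positive-definiteness of $I_0$ in Condition~2, which guarantees that the combined log-likelihood has a non-degenerate Hessian so that the marginal posterior of the shared block behaves as if generated by $n+m$ effective samples; the Clarke--Barron asymptotics then apply uniformly. Finally, to obtain matching $\asymp$ lower bounds rather than just $O(\cdot)$, I would invoke the usual two-sided form of the expansion (as in \citet{clarke1990information} and the SSL treatment of \citet{zhu2020semi}), which the proper-prior assumption renders immediate.
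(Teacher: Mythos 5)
Your overall route is the paper's route: start from the CMI characterization of Theorem~\ref{thm:excessrisk-log}, choose a prior that ties exactly the shared parameter blocks under each shift condition, and invoke Clarke--Barron-type mixture asymptotics (the paper's Lemma~\ref{lemma:asymptotics}, adapted from Clarke--Barron and Zhu et al.) to turn posterior concentration into the stated rates. However, two steps in your accounting are gaps rather than details. First, your chain-rule/``effective sample size'' decomposition $I(Y'_t;\Theta^t_Y\mid\cdot)+I(Y'_t;\Theta^t_{X_Y}\mid\Theta^t_Y,\cdot)$ treats the two blocks as if the unlabeled target likelihood informed them separately, but the unlabeled marginal $\sum_y P_{\theta_Y}(y)P_{\theta_{X_y}}(X)$ has a Fisher information with nonzero cross blocks (the paper's $I_{t,U}$ in~(\ref{eq:itu})). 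In the conditional- and target-shift cases the paper does not decompose the CMI per block; it writes the excess risk as a difference of two joint redundancies and evaluates a ratio of block determinants, where the effective information of the shared block is a Schur complement such as $\Delta_U = I_{t,Y,U}-I_{t,U}I^{-1}_{t,X_Y,U}I^T_{t,U}$, whose strict positivity (from $I_t\succ 0$) is what licenses the ``pooled $m+n$ samples'' claim. Your proposal needs an argument of this kind; as written, the per-block bookkeeping silently assumes the cross terms vanish.

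Second, the matching lower bound is not ``immediate'' from the two-sided Clarke--Barron expansion. The excess risk is the \emph{difference} of two redundancy expansions whose leading $\tfrac{d}{2}\log$ terms and prior terms cancel exactly; what survives is $\tfrac{1}{2}\log\det\bigl(\mathbf{I}+ (I_{t,X,Y}-I_t)\,\widetilde{\mathbf{I}}_\theta^{-1}\bigr)$-type quantities (and their block analogues), and the $\asymp$ lower bound holds only because one labeled target pair carries strictly more Fisher information than one unlabeled point, i.e.\ $I_{t,X,Y}-I_t\succ 0$, $I_{t,Y}-I_{t,Y,U}\succ 0$, $I_{t,X_Y}-I_{t,X_Y,U}\succ 0$. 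Without establishing this strict information gain, the cancellation could in principle leave something $o(1/n)$, so your lower-bound sentence needs to be replaced by this positivity argument (the paper asserts it via a data-processing comparison of labeled versus unlabeled observations). With those two repairs --- Schur-complement handling of the $\theta_Y$--$\theta_{X_Y}$ coupling and the explicit positive-definite information-gap step --- your outline matches the paper's proof, including the case split $p\ge 1$ versus $0<p<1$ needed to convert $n+m$ into $n\vee n^p$ and to keep the remainder $o\bigl(\tfrac{1}{n\vee m}\bigr)$.
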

In contrast to causal learning, in the general shift case, we can achieve good generalization ability only with the unlabelled target data, while the source data do not help at all. This result confirms the value of unlabelled data, which is consistent with the intuition from Figure~\ref{fig:causal-anti-dag}. In the conditional shift and target shift cases, we can further show that the source data can only help improve the excess risk from $O(\frac{k'+1}{n})$ to $O(\frac{k'+1-j}{n} + \frac{j}{n \vee n^p})$ depending on how many $j$ common parameters $\theta^*_s$ and $\theta^*_t$ share. Intuitively, $O(\frac{k'+1-j}{n})$ can be viewed as the learning cost for $k'+1-j$ domain-specific parameters and $O(\frac{j}{n \vee n^p})$ as the learning cost for domain-sharing parameters. Therefore, the source data are incapable of changing the overall rate since the unlabelled target data always dominates the rate. In SSL, the rate $O(\frac{k'+1}{n \vee n^p})$ indicates that unlabelled target data are as useful as the labelled source data and that sufficient source data (e.g., $p > 1$) can indeed change the convergence rate. The results show that the \emph{learning complexity} under different causal directions will vary. This crucial distinction discloses how the causal relationships affect the model complexity and its generalization ability. 

Our results in Theorem~\ref{thm:excessrisk-log}, \ref{thm:causal}, \ref{thm:anti-case} establish the convergence rate for the mixture strategy. Here we show that this strategy is in fact optimal for log-loss. 
\begin{lemma}[Worst-Case Excess Risk] \label{lemma:wsc} For log-loss, 
\begin{equation*}
 \min_{b}\max_{\theta^*_s,\theta^*_t}\mathcal{R}(b) = \max_{\omega(\theta_s, \theta_t)} I(Y'_t; \Theta_t, \Theta_s|D^{m}_s, D^{\textup{U},n}_t, X'_t),
\end{equation*}
where $(\Theta_t,\Theta_s)$ is endowed with some prior distribution $\omega$.
\end{lemma}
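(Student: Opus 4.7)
The plan is to invoke the classical redundancy--capacity duality (Gallager; Davisson; \citet{merhav1998universal}) adapted to our conditional setting. Under log-loss any predictor $b$ can be identified with a conditional distribution $Q(Y'_t \mid D^m_s, D^{\textup{U},n}_t, X'_t)$, and, as in the derivation preceding Theorem~\ref{thm:excessrisk-log}, the excess risk admits the representation
\begin{equation*}
\mathcal{R}(b) \;=\; \mathbb{E}_{\theta^*_s,\theta^*_t}\!\left[\textup{KL}\bigl(P_{\theta^*_t}(Y'_t|X'_t)\,\big\|\,Q(Y'_t|D^m_s, D^{\textup{U},n}_t, X'_t)\bigr)\right].
\end{equation*}
This is the object to be min-maxed.

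For the $\ge$ direction I would use the standard averaging trick. Fix any prior $\omega(\theta_s,\theta_t)$; the worst case dominates the average, so
\begin{equation*}
\max_{\theta^*_s,\theta^*_t} \mathcal{R}(b) \;\geq\; \int \mathcal{R}(b)\, \omega(d\theta_s,d\theta_t).
\end{equation*}
The right-hand side is the Bayes risk of $b$ under $\omega$, and among all conditional distributions $Q$ it is minimized by the posterior predictive mixture $Q^{\omega}$ of~(\ref{eq:mixture}); the resulting minimal Bayes risk equals $I(Y'_t;\Theta_t,\Theta_s|D^m_s,D^{\textup{U},n}_t,X'_t)$ when $(\Theta_s,\Theta_t)\sim\omega$. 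This identity comes from expanding the KL divergence as $\log P_{\theta^*_t}(Y'_t|\cdot) - \log Q^{\omega}(Y'_t|\cdot)$ and taking the $\omega$-expectation, exactly as in the proof of Theorem~\ref{thm:excessrisk-log}. Taking $\min_b$ on the left and $\sup_\omega$ on the right yields $\min_b\max_{\theta^*_s,\theta^*_t}\mathcal{R}(b) \geq \sup_\omega I(Y'_t;\Theta_t,\Theta_s|D^m_s,D^{\textup{U},n}_t,X'_t)$.

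For the reverse inequality I would apply a saddle-point argument. The map $(\omega,b)\mapsto \int \mathcal{R}(b)\,\omega(d\theta_s,d\theta_t)$ is linear in $\omega$, convex in $b$ (because KL is convex in its second argument), and the set of admissible predictive distributions is convex. Sion's minimax theorem then gives
\begin{equation*}
\min_b \sup_\omega \int \mathcal{R}(b)\,\omega(d\theta_s,d\theta_t) \;=\; \sup_\omega \min_b \int \mathcal{R}(b)\,\omega(d\theta_s,d\theta_t) \;=\; \sup_\omega I(Y'_t;\Theta_t,\Theta_s|D^m_s,D^{\textup{U},n}_t,X'_t),
\end{equation*}
and the left-hand side upper-bounds $\min_b \max_{\theta^*_s,\theta^*_t}\mathcal{R}(b)$ because point-mass priors are admissible on either side, in fact forcing equality.

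The main obstacle is the minimax exchange when $\Lambda$ is merely closed rather than compact, since Sion's theorem requires compactness on one side. I would address this by a compact exhaustion $\Lambda_N \uparrow \Lambda$: apply Sion on each $\Lambda_N$, then pass to the limit using lower semicontinuity of mutual information in the prior and continuity of the Bayes risk under weak convergence of priors, both of which follow from the regularity hypotheses of Assumption~\ref{asp:para-trans} together with the positivity of the prior in Assumption~\ref{asp:proper-prior}. An alternative is to quote the Gallager--Ryabko capacity--redundancy theorem for general alphabets, which packages these topological subtleties once and for all and applies directly to our channel $(\theta_s,\theta_t)\mapsto(Y'_t\mid D^m_s, D^{\textup{U},n}_t, X'_t)$.
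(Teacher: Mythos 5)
Your proposal is correct and follows essentially the same route as the paper's proof: replace the maximum over $(\theta^*_s,\theta^*_t)$ by a maximum over priors (linearity of the Bayes risk in $\omega$), exchange $\min$ and $\max$ via a minimax theorem, and use the fact that under log-loss the Bayes-optimal predictor is the posterior predictive mixture, whose Bayes risk equals the conditional mutual information. The only difference is presentational: the paper simply cites a minimax theorem (\citet{du2013minimax}) for the exchange step, whereas you justify it via Sion's theorem together with a compact-exhaustion (or redundancy--capacity) argument, which is a reasonable way to fill in the same step.
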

This lemma exactly characterizes the excess risk for log-loss in the worst case. It shows that the worst-case regret is captured by the same CMI term as in Theorem \ref{thm:excessrisk-log}, although maximized w.r.t. the prior distribution over the source and target parameters. However, it can be shown that the maximization does not change the convergence rate of the mutual information term \citep{clarke1994jeffreys,merhav1998universal}. In other words, the convergence rate in Theorem \ref{thm:causal}, \ref{thm:anti-case} is indeed optimal and cannot be improved using a different learning algorithm. Even though we only consider the log-loss in the previous analysis, the results can be extended straightforwardly in the case of other general loss functions, such as exponentially concave or bounded losses, where the excess risk is captured by the same CMI term in Theorem~\ref{thm:excessrisk-log} (see Theorem~\ref{thm:excessrisk-general} for bounded losses as an example).

\section{Experiments} \label{sec:example}
In this section, we begin by confirming our main results with a toy example, for which we elaborate on the case when the data can be modeled both as causal learning and anti-causal learning. Subsequently, we extend the idea to tackle real-world challenges like the classification of handwritten digits. For these scenarios, we parametrize the data distribution using the Gaussian mixture model as an approximation, and the insights drawn from our experimental results reflect a similarity to those deduced from our theoretical analysis, confirming the effectiveness of the source and target data in more complicated learning problems.
 
\subsection{A toy example}
We will numerically confirm our main results using a toy example. We consider a simple example where $\mathcal{Y} = \{0,1\}$ and $\mathcal{X} = \{1,2,3,4\}$. In causal learning, we model the data distributions as 
\begin{align*}
    &X  \sim \textup{Cat}(\theta_{x_1}, \theta_{x_2}, \theta_{x_3}, \theta_{x_4}) \\
    &Y_{x_i}  \sim \textup{Ber}(\theta_{Y_{x_i}}) \textup{ for } i = 1,2,3,4.
\end{align*}
We set $\theta^{t*}_X = (0.25,0.25,0.25,0.25)$ and $\theta^{t*}_{Y_X} = (0.3,0.4,0.5,0.6)$ for synthetic experiments, and we will vary $\theta^{s*}_{X} = (0.6,0.1,0.1,0.2)$ and $\theta^{s*}_{Y_X} = (0.5,0.5,0.3,0.5)$ for the covariate shift and concept drift conditions, respectively. The parameters are estimated using the maximum likelihood algorithm and used in the prediction. We run experiments 3000 repeatedly and the results are shown in Figure~\ref{fig:causal-result}. For the general shift case in (a), we fix $m=2000$ and vary $n$ from 500 to 16000 and it can be seen that with the unlabelled target sample increasing, the risk will remain around $0.34$ and hence does not converge in this case. We sketch the regret for covariate shift and semi-supervised learning in figures (b) and (d), here we fix $n = 2000$ and vary $m$ from 500 to 16000. It can be seen in that $\mathcal{R}(b)$ in blue converges to zero with $m$ increasing in these two cases, then we also plot the $\mathcal{R}(b)^{-1}$ in red to show the rate. The reciprocal of the excess risk is linear in the source sample size, which coincides with our theoretical analysis. It is worth pointing out that the slopes are different in these two cases because the quantity will depend on the Fisher information matrix of $P_{\theta^{s*}_{Y_X}}(Y)$ and the distribution of the covariate $X$ varies across two domains. For concept drift learning in (c), we fix $n = 2000$ and vary $m$ from 500 to 16000. Similar to the general shift case, the excess risk is maintained around 0.34 as well, which is independent of the source sample size $m$. 
\begin{figure*}[h]
\centering
\subfigure[Vary $n$\label{fig:causal_unlabel}]{\includegraphics[width =2.10in]{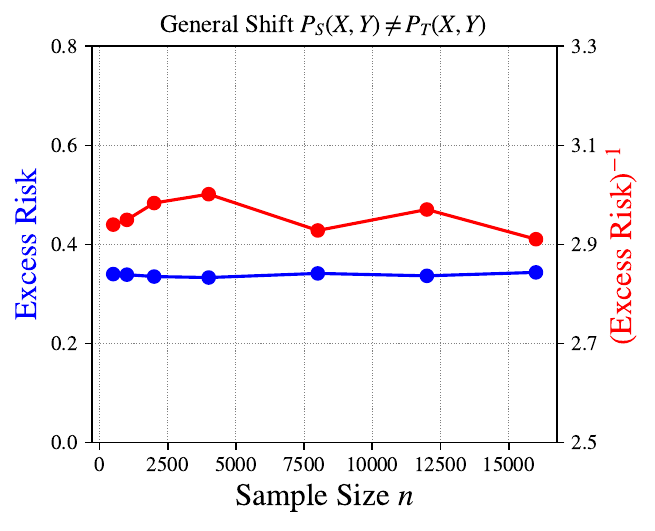}}
\subfigure[Fix $n$, vary $m$]{\includegraphics[width = 2.15in]{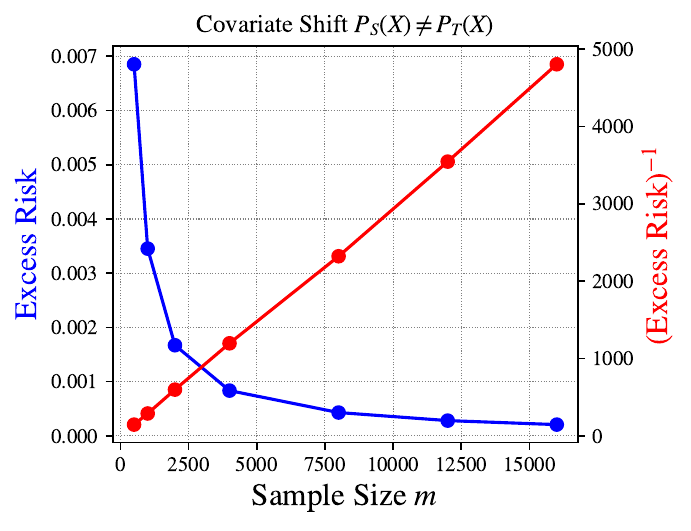}}
\subfigure[Fix $n$, vary $m$]{\includegraphics[width = 2.10in]{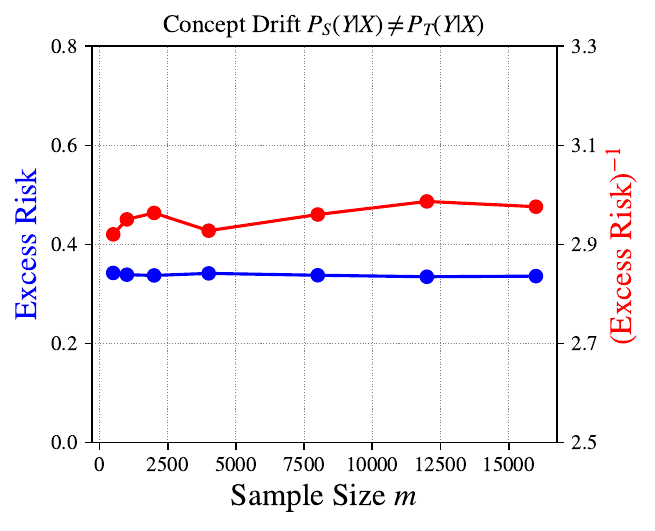}}
\subfigure[Fix $n$, vary $m$]{\includegraphics[width = 2.15in]{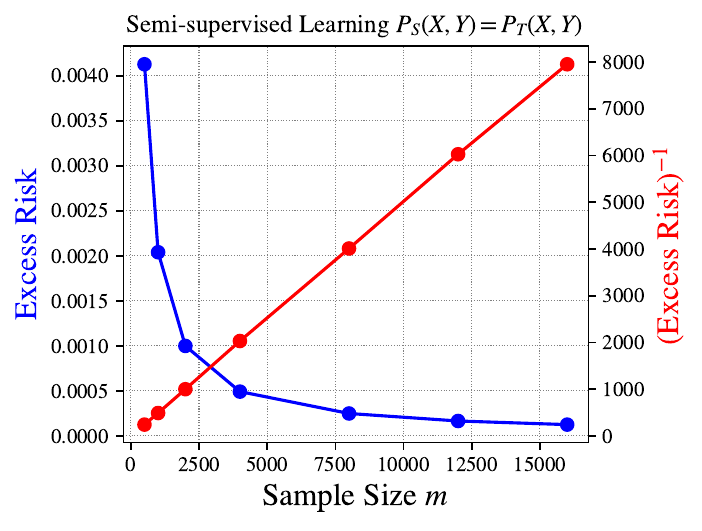}} 
\caption{Excess risk comparisons under causal learning. (a) and (c) represents the results of $\mathcal{R}(b)$ for general shift case and concept drift learning, where we vary $n$ from 500 to 16000 in (a), and fix $n = 2000$ but vary $m$ from 500 to 16000 in (c). We sketch the results $\mathcal{R}(b)$ for covariate shift and semi-supervised learning in (b) and (d), here we fix $n = 2000$ and vary $m$ from 500 to 16000.  We also plot $\mathcal{R}(b)^{-1}$ to show the rate w.r.t. $m$.  We plot all excess risks in blue and their reciprocals in red. All results are derived by 3000 experimental repeats.} \label{fig:causal-result}
\end{figure*}

In anti-causal learning, we will model the distributions of the outcome random variables as 
\begin{align*}
    Y &\sim \textup{Ber}(\theta_Y), \\
    X_{0} &\sim \textup{Cat}(\theta_{{0}}, \theta_{{0}}+0.55, \theta_{{0}} +0.2, 0.25 - 3\theta_{{0}}), \\
    X_{1} &\sim \textup{Cat}(\theta_{{1}}, \theta_{{1}}+0.25, 0.4 - 3\theta_{{1}} , \theta_{{1}}+0.35). 
\end{align*}
For experiments, we set $\theta^{t*}_{Y} = 0.5$ and $\theta^{t*}_{X_{0}} = \theta^{t*}_{X_{1}} = 0.05$ as an example, and we will vary $\theta^{s*}_{Y} = 0.7$ and $\theta^{s*}_{X_{0}} =  \theta^{t*}_{X_{1}} = 0.01$ for the target shift and conditional shift conditions, respectively. Using the maximum likelihood algorithm, we sketch the results in Figure~\ref{fig:anti-causal-result}. For the general shift case in (a), the excess risk converges as $n$ becomes larger, and more explicitly $\mathcal{R}(b)^{-1}$ is linear in $n$, which confirms our theoretical result. For target shift and conditional shift in (b) and (c), it can be seen that $\mathcal{R}(b)$ converges to a non-zero value $\lambda$ with $m$ increasing in these two cases, then we also plot the $(\mathcal{R}(b) - \lambda)^{-1}$ to show the rate w.r.t. the sample size $m+n$. These two curves indicate that the source data can only help reduce the excess risk up to a constant. For semi-supervised learning in (d), as expected, the excess risk will converge to zero as $m$ increases. It is also observed that the slope of the reciprocal is higher compared to the general shift condition, implying the source data contain more information than the unlabelled target data and lead to higher scaling factor $c$ (e.g., $O(\frac{c}{m})$) in the rate. We empirically depict the rate of learning performance under different causal mechanisms and domain shift conditions, from which the usefulness of the source and target data is manifested.

\begin{figure*}[h]
\centering
\subfigure[Vary $n$\label{fig:anti_causal_unlabel}]{\includegraphics[width = 2.10in]{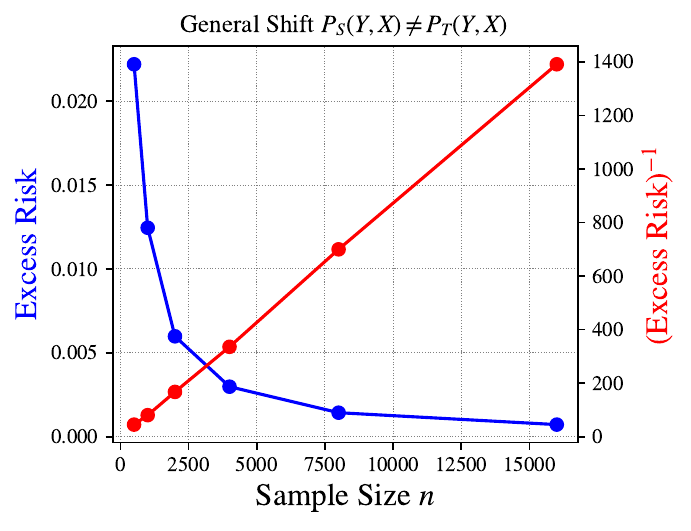}}
\subfigure[Fix $n$, vary $m$]{\includegraphics[width = 2.20in]{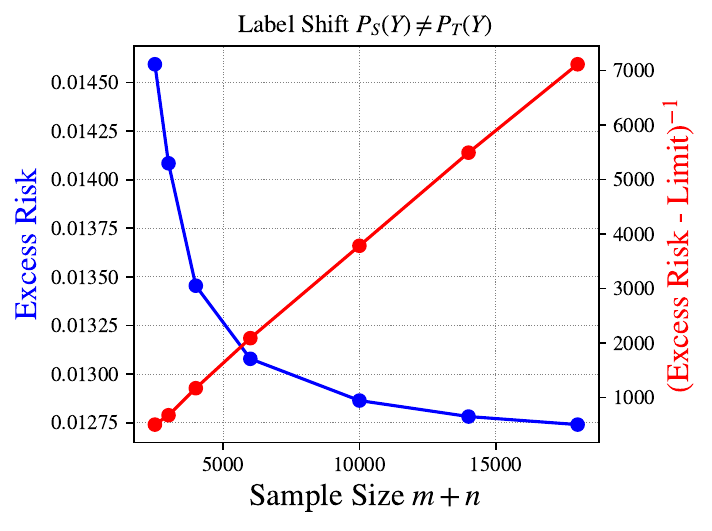}}
\subfigure[Fix $n$, vary $m$]{\includegraphics[width = 2.20in]{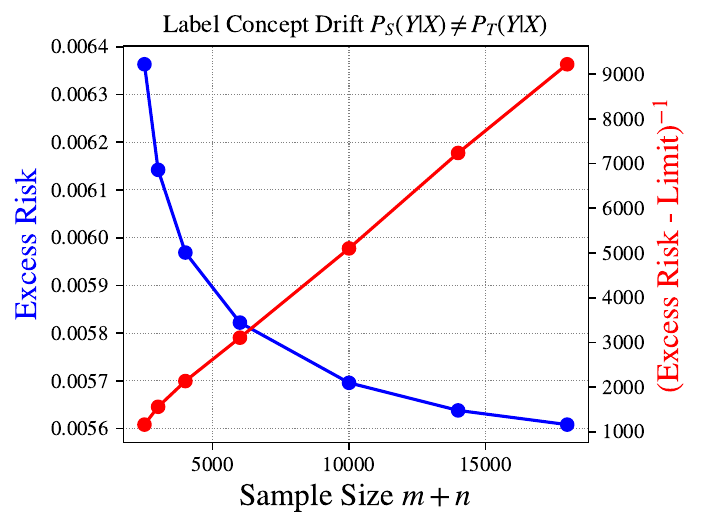}}
\subfigure[Fix $n$, vary $m$]{\includegraphics[width = 2.20in]{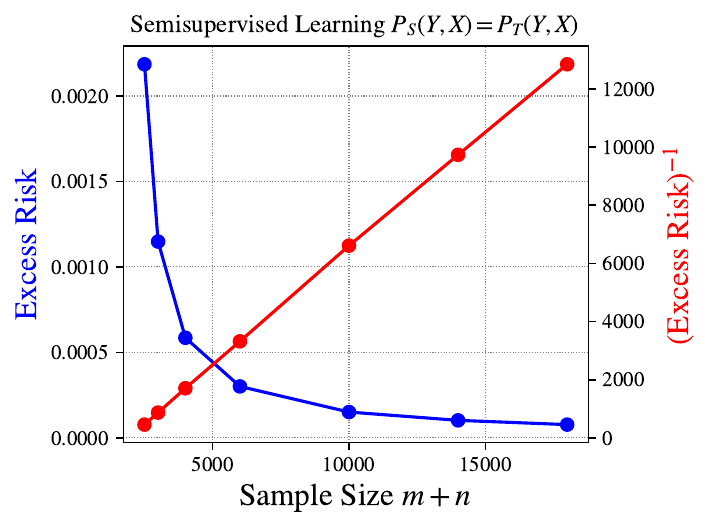}}
\caption{Excess risk comparisons under anti-causal learning. (a) represents the results of $\mathcal{R}(b)$ and $\mathcal{R}(b)^{-1}$ for general shift case, and we vary $n$ from 500 to 16000. We sketch the results $\mathcal{R}(b)$ for label shift, label concept drift and semi-supervised learning in (b), (c) and (d). Here we fix $n = 2000$ and vary $m$ from 500 to 16000. It can be seen in that $\mathcal{R}(b)$ converges to a non-zero value $\lambda$ with $m$ increasing in (b) and (c), then we also plot $(\mathcal{R}(b) - \lambda)^{-1}$ to show the rate w.r.t. $m+n$. We plot all the excess risks in blue and their reciprocals in red.  All results are derived by 3000 experimental repeats.} \label{fig:anti-causal-result}
\end{figure*}

\subsection{Experiments with Real Datasets}
In this section, we shift our focus to real-world datasets (e.g., the MNIST dataset) for anti-causal learning to further reinforce our idea in practical scenarios. Although the core of our analysis lies in the assumption that the data distribution is parametric, this is often not the case when dealing with real-world data. As such, we need to find a parametric model to approximate the true underlying distribution with finite samples. In the following, we use Gaussian mixture models (GMM) to approximate the data, where we assume each class label $y_i$ corresponds to a specific cluster of features and these features are modeled by a Gaussian distribution denoted as $P_{X_{y_i}}(x)$, with parameters including a mean vector $\mu_{i}$ and a covariance matrix $\Sigma_i$. Our implementation of this model is based on the expectation-maximization (EM) algorithm \citep{dempster1977maximum} by efficiently estimating the initial GMM parameters from the labelled data, and the parameters will be updated with the additional unlabelled data or data with a distributional shift. This framework has been applied to semi-supervised learning and unlabelled domain adaptation problems where the details are outlined in Algorithm~\ref{alg:gmms}. While there would exist a potential mismatch between the parametric model and the true underlying distribution and some estimation errors, the empirical results nevertheless demonstrate that anti-causal learning can enhance prediction performance when we efficiently use unlabelled target data and source data. 

\begin{algorithm}[htb]
\caption{Anti-Causal Learning with GMMs} \label{alg:gmms}
\DontPrintSemicolon 
\KwData{A small set of labelled target training dataset $\mathcal{D} = \{(x_i, y_i)\}_{i=1}^N$ with $N$ samples, where $x_i$ are features and $y_i$ are labels, unlabelled target training dataset $\mathcal{D}_U = \{x_i\}_{i=1}^M$ with $M$ samples, labelled source training dataset $\mathcal{D}' = \{(x_i, y_i)\}_{i=1}^L$ with $L$ samples and test dataset $\mathcal{D}_{T} = \{(x_i, y_i)\}_{i=1}^T$ with $T$ samples}
\KwResult{Improved prediction performance using GMM on $\mathcal{D}_{T}$.}
Initialize $K$, the number of Gaussian components, corresponding to the number of class labels.\;
Initialize parameters $\Theta = \{\mu_k, \Sigma_k\}_{k=1}^K$ for each Gaussian component.\;
\textbf{Step 1: Feature Engineering}\;
Conduct feature engineering using methods such as PCA or other feature selection with $\mathcal{D}$, and $\mathcal{D}_U$ or $D'$ depending on the SSL/UDA tasks\;
\textbf{Step 2: Parameter Estimation}\;
\For{each class label $k = 1$ to $K$}{
    Estimate $\mu_k$ and $\Sigma_k$ using EM algorithm on $\mathcal{D}$ with corresponding instances with label $k$.\;
}
\textbf{Step 3: SSL/UDA with GMM}\;
\While{not converged}{
    For SSL: Use unlabelled data $\mathcal{D}_U$ to update $\Theta$ by the EM algorithm\;
    For DA: Use labelled source data $\mathcal{D}'$ to update $\Theta$ by the EM algorithm\;
}
\textbf{Step 4: Prediction}\;
\For{each new instance $x$ in $\mathcal{D}_T$}{
    Predict label $y$ by selecting the Gaussian component $k$ that maximizes $P_{X_{k}}(x)$ with parameters $(\mu_k, \Sigma_k)$.\;
}
\end{algorithm}

\subsubsection*{Semi-supervised Learning}
The MNIST dataset\footnote{http://yann.lecun.com/exdb/mnist/}  (\cite{lecun1998gradient}) serves as a well-recognized standard for benchmarking, comprising 70,000 grayscale, handwritten digit images (ranging from 0 to 9), each of pixel size $28 \times 28$. It is a frequent choice for testing various machine learning algorithms, particularly in image classification scenarios. Our analysis will primarily focus on exploring the usefulness of unlabelled data under the anti-causal learning setting using the Gaussian mixture model, specifically with the MNIST dataset. To achieve this, we select two digits at random (for instance, 2 and 5) and construct a dataset comprising 100 labelled samples, while varying the unlabelled sample size from 0 to 1,000. By introducing a small set of labelled target data, we can accurately determine the correct labels, addressing the potential label-swapping issue that may arise with the unlabelled data only. Our goal is to demonstrate that incorporating unlabelled data can still improve the performance of the model effectively. The initial step in our approach involves data preprocessing, which includes applying principal component analysis (PCA) to both the labelled and unlabelled datasets to reduce the input feature dimensionality from 784 down to a manageable number - 20 in our experiment. This reduction aids in addressing the curse of dimensionality, enhancing the computational speed and potentially boosting the Gaussian mixture model's performance. Following this, we establish an initial Gaussian mixture model using the labelled data only. Then we follow the procedures in Algorithm~\ref{alg:gmms} to update the parameters of the initial GMM. We will finally compare the performance of the updated GMM with its initial model using a test set from the same digit pair with the size of 3,000. 

\begin{figure}[h]
    \centering
    \includegraphics[width = 6cm]{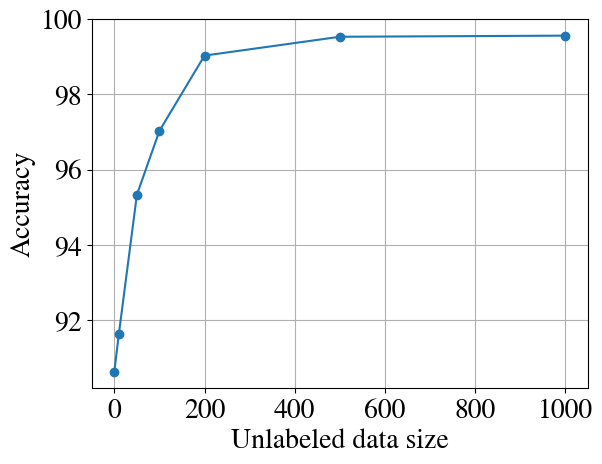}
    \caption{Accuracy v.s. unlabelled sample size for digit pair (2, 5) }
    \label{fig:mnist1}
\end{figure}

\begin{figure*}[h]
\centering
\subfigure[Initial GMM]{\includegraphics[width = 2.30in]{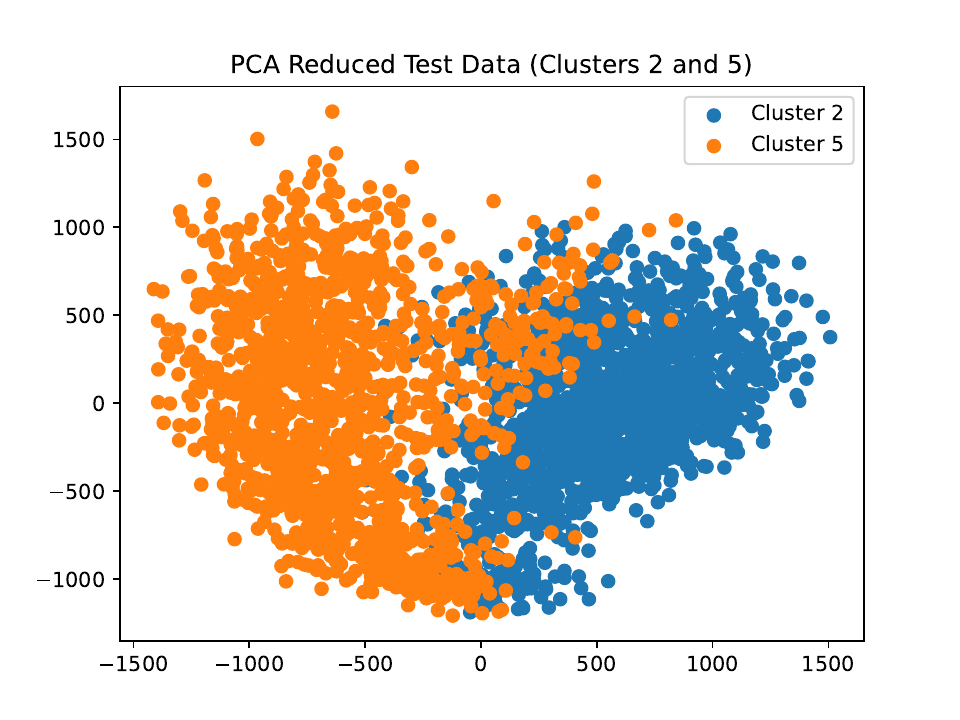}}
\subfigure[Update GMM]{\includegraphics[width = 2.30in]{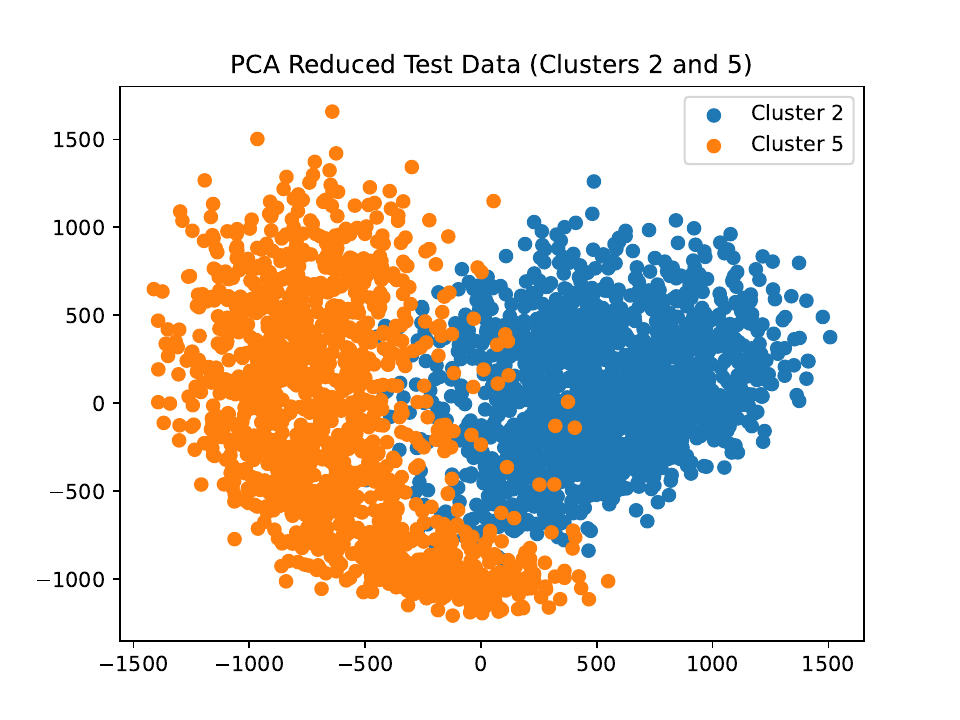}}
\caption{Visualization of clusters for various source and target combinations for digit pair (2, 5) for initial and updated GMM with 100 labelled data and 500 unlabelled data} \label{fig:anti-causal-result-visual}
\end{figure*}

Figure~\ref{fig:mnist1} illustrates the test set accuracy for different sizes of unlabelled data for the digit pair (2, 5). Our observations indicate that integrating unlabelled data significantly improves the model performance. Correspondingly, as the size of unlabelled data increases, the model accuracy also sees an increase, achieving approximately 99\% accuracy when the data size exceeds 500. This improvement indicates that unlabelled data indeed helps estimate the distribution parameters in the context of anti-causal learning, and this also empirically validates the results we presented in Table~\ref{tab:result}. To visualize the model performance on these two clusters, we further illustrate the clusters by plotting the two most significant principle components in Figure~\ref{fig:anti-causal-result-visual}. It demonstrates that updated GMM learning can indeed make two clusters more distinct and separable than the initial GMM, which leads to higher accuracy. To provide a more comprehensive demonstration of the usefulness of unlabelled data, we have randomly selected several additional digit pairs and conducted experiments with varying amounts of unlabelled data. We summarize the result in Table~\ref{tab:RES_ssl}. From the table, we can see that, in all cases, the unlabelled data help improve the accuracy in predictions, and as the sample size of unlabelled data increases, the accuracy also improves correspondingly. However, due to the variability between different digit pairs, and randomness from train and test sampling and estimation errors, the extent to which unlabelled data improves accuracy varies across different experiments. Through the experimental validation conducted on the MNIST dataset, our results confirm the substantial impact of unlabelled data on enhancing the performance of the anti-causal learning setting, particularly under conditions where labelled samples are limited. This establishes the crucial role of anti-causal learning settings in practical applications when it comes to semi-supervised learning problems. 

\begin{table}[hbpt!]
    \centering
    \begin{tabular}{|c|c|c|c|c|c|c|}
    \hline 
    Unlabelled Size  &  (2,5)  &  (5, 9)  & (3, 8)  & (4, 7) & (0, 6)  & (2, 3) \\
    \hline 
    0              & 0.896 & 0.531 &  0.575 & 0.854 & 0.893 & 0.855 \\
    50             & 0.941 & 0.623 &  0.817 & 0.858 & 0.942 &  0.865\\
    200            & 0.990 & 0.636 &  0.852 & 0.905 & 0.983 & 0.884 \\
    500            & 0.991 & 0.774 &  0.891 & 0.917 & 0.985 &  0.937 \\
    \hline 
    \end{tabular}
    \caption{Performance comparison of different sizes of datasets on various digit pairs}\label{tab:RES_ssl}
\end{table}

\subsubsection*{Unlabelled Domain adaptation}
We further assess the effectiveness of anti-causal learning in the realm of unlabelled domain adaptation. Here, we include three different source data domains for comparisons: the United States Postal Service (USPS) dataset \citep{uspsdataset}, an adapted MNIST dataset with added Gaussian noise, and a colour-infused MNIST dataset with colored backgrounds added to the digits. The USPS dataset, frequently used for digit recognition and domain adaptation tasks, consists of 9,298 grayscale images of handwritten digits (0-9) with a pixel resolution of $16 \times 16$. For the target domain, we randomly select two digits from the MNIST dataset to create a dataset containing 100 labelled samples. Subsequently, we will introduce the aforementioned three source data, each with 500 labelled samples, to help update the distribution parameters learned from the initial GMM. We aim to examine whether introducing an additional labelled dataset can significantly improve model performance, particularly when the causal mechanisms and generating distributions are closely similar. We apply a similar algorithm used in semi-supervised learning where we first apply PCA to both source and target data, and then we construct an initial GMM with the target data and then update the GMM using the EM algorithm on the source data. Here we pick various digit pairs to evaluate the effectiveness of the source data, and the results are summarized in Table~\ref{tab:RES}.

\begin{table}[hbpt!]
    \centering
    \begin{tabular}{|c|c|c|c|c|c|c|}
    \hline 
    Source  &        (2,5)   &  (5,9)  & (3, 8)  & (4,7) & (0,6) & (2,3) \\
    \hline 
    -               & 0.896 & 0.531 &  0.575 & 0.854 & 0.900 & 0.850 \\
    Colored MNIST   & 0.989 & 0.636 &  0.860 & 0.857 & 0.985 & 0.933 \\
    Noisy MNIST     & 0.993 & 0.926 &  0.882 & 0.889 & 0.979 & 0.946 \\
    USPS            & 0.971 & 0.835 &  0.840 & 0.525 & 0.550 & 0.510 \\
    \hline 
    \end{tabular}
    \caption{Performance comparison of different source datasets on various digit pairs. Here the sign `-' represents the accuracy with only 100 labelled MNIST data without any source data, while the remaining three rows are the performance with 500 additional colored MNIST, noisy MNIST and USPS data, respectively.}\label{tab:RES}
\end{table}

As can be observed from the above table, we compared the model performance by accuracy between not using source data and using three different types of source data. In most cases, the introduction of source data showed an improvement over not using source data, validating the beneficial impact of source data on target performance enhancement. Moreover, when comparing different source data, the colored MNIST and noisy MNIST are closer to the original MNIST in terms of the conditional generating distribution $P(X|Y)$, and they do perform better than the USPS in almost all cases.  

We also plot the two main components for clusters 3 and 8 in Figure~\ref{fig:domain-adaptation} to visualize the constructed GMM model. We can infer from the figure that the GMM model trained without using source data yields the poorest performance, as it fails to distinguish between digits 3 and 8 accurately, and moreover, the prediction of digit 8 is noticeably biased, contradicting the testing label distributions. Upon the introduction of source data, the GMM model trained with the additional USPS dataset still exhibits a substantial overlap between 3 and 8 in the test set, implying a less optimal performance. On the other hand, with the colored MNIST dataset, the two clusters are more separated, representing the best prediction performance. 
\begin{figure*}[h]
\centering
\subfigure[MNIST]{\includegraphics[width = 2.20in]{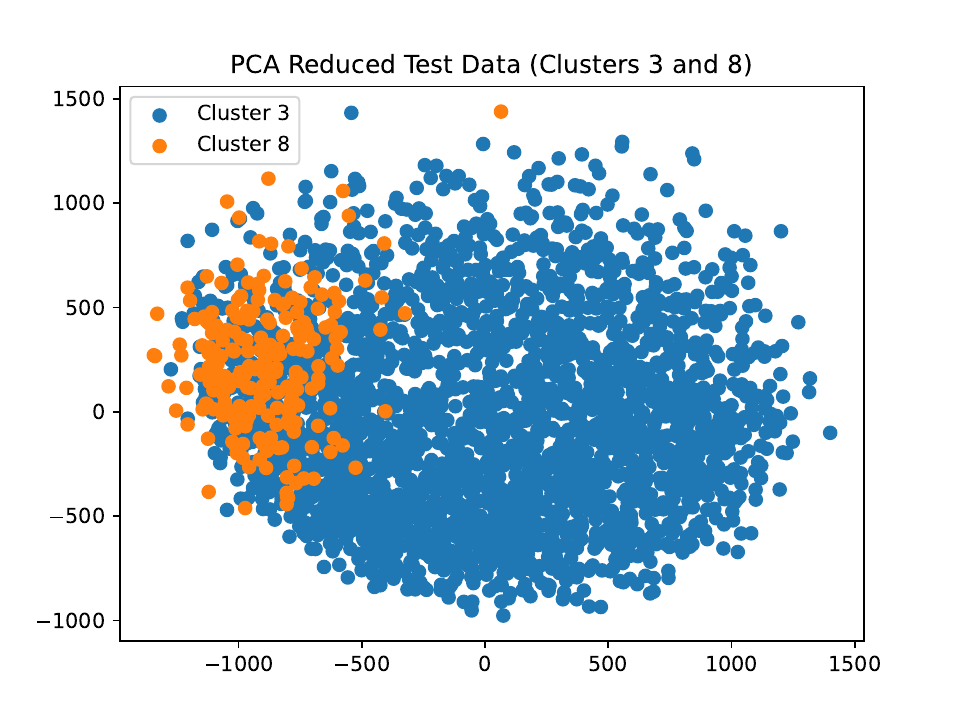}}
\subfigure[MNIST + USPS]{\includegraphics[width = 2.20in]{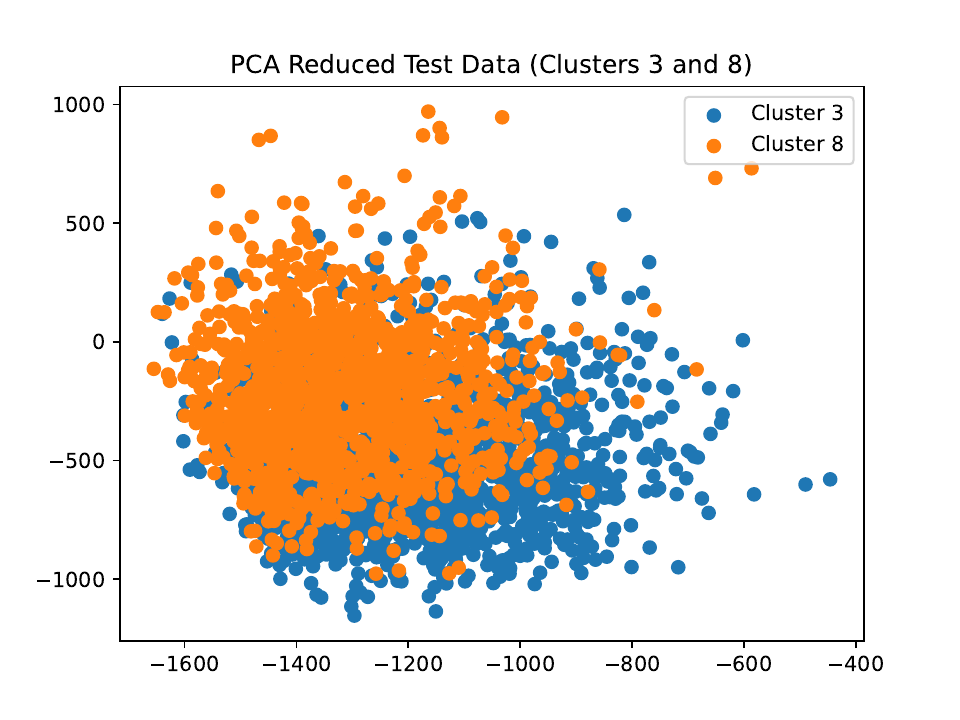}}
\subfigure[MNIST + Colored MNIST]{\includegraphics[width = 2.20in]{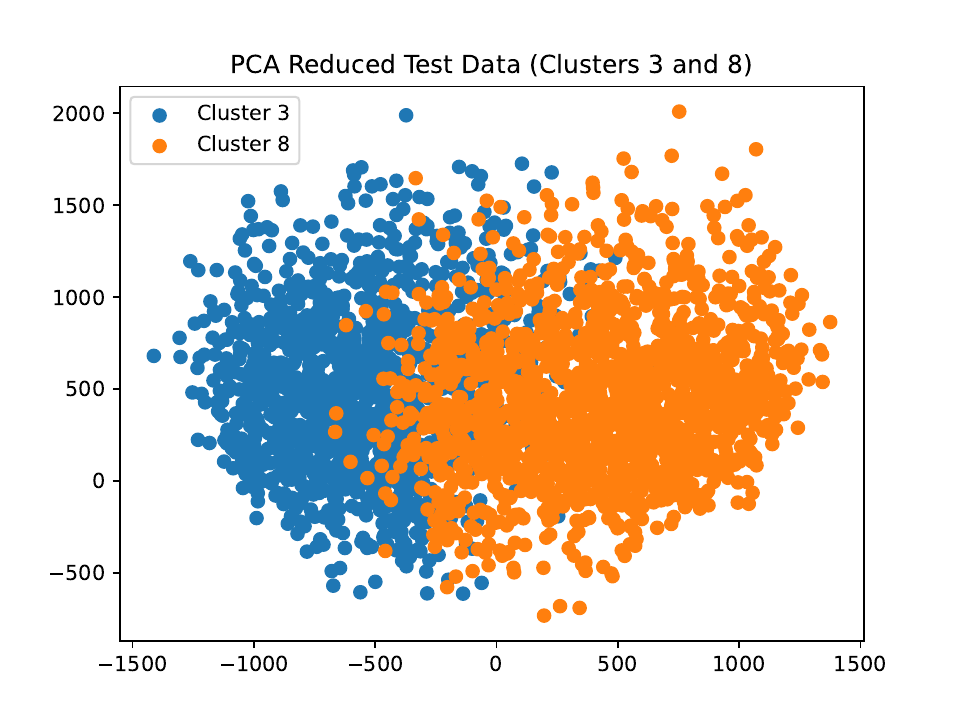}}
\subfigure[MNIST + Noisy MNIST]{\includegraphics[width = 2.20in]{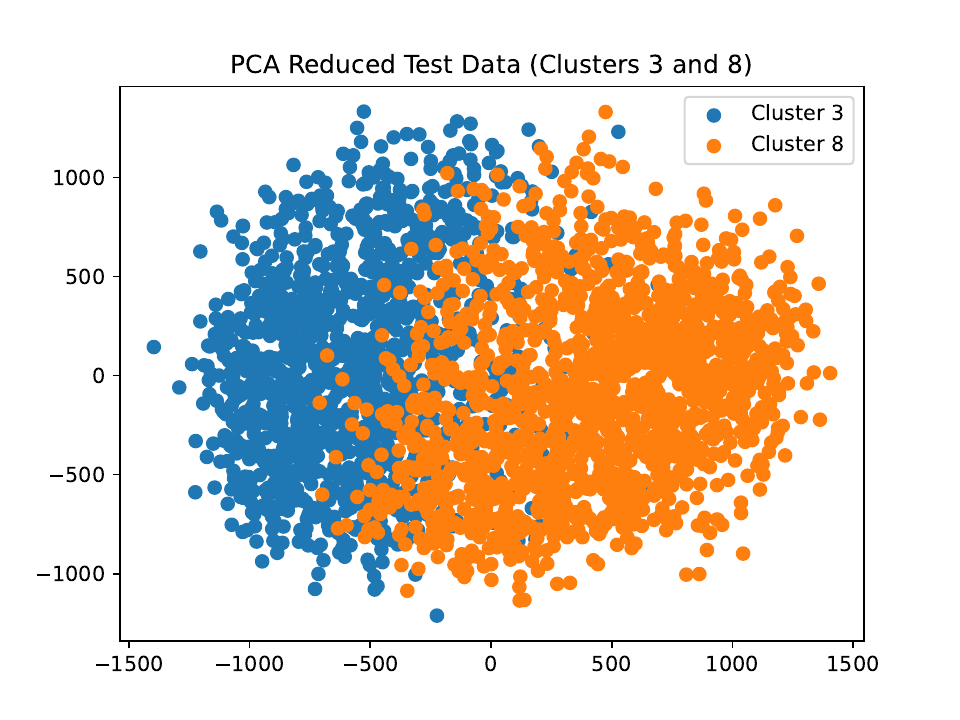}}
\caption{Visualization of clusters for various source and target combinations for digit pair (3,8)}\label{fig:domain-adaptation}
\end{figure*}

Referring to the table, we also noticed some cases where the use of USPS actually undermined the model accuracy for the digit pair $(4,7)$, $(0,6)$ and $(2,3)$. We point out that this does not contradict our earlier analytical results (source data should never degrade the performance). The reason is that the GMM models used to train the classifier are only approximations of the ``true" model, and importantly, the testing data is \textit{not} from these approximating parametric models but from the real dataset, whereas our analytical results hold under the assumption that both training and testing data are from parametric models. Nevertheless, we see that satisfactory results can still be achieved in many scenarios with this empirical setup, even when approximations are used, showing effective guidance of our theoretical results.

\subsubsection*{Multi-classification with SSL and UDA}

In the previous section, we provided a simplified comparison of SSL and UDA by focusing on results involving just two numerical categories. These experiments helped clearly demonstrate the data's practical value through 2D visual representations. In this section, we aim to assess the comprehensive performance across the dataset by applying our algorithm to data that includes all labels, e.g., the multi-classification of handwritten digits ranging from 0 to 9. For experiments, we randomly select 200 samples from the MNIST dataset for our initial labelled target dataset. Then, to explore the impact of additional training data, we gradually increase the number of these extra training samples from 400 to 5000. These additional samples are sourced from various datasets, including unlabelled MNIST samples or labelled samples from variants of the MNIST dataset (such as coloured MNIST and noisy MNIST) and the USPS dataset. Furthermore, we investigate how the number of PCA dimensions and the number of clusters in our model affect its performance. The results are organized across three tables. Table~\ref{tab:multi_sample_size_effect} details how varying the size of additional data samples impacts the model performance. Table~\ref{tab:multi_pca_effect} explores the influence of changing the dimensions within PCA. Lastly, Table~\ref{tab:multi_cluster_effect} examines the effects of altering the number of clusters in GMM. From the results, we identify some key insights as follows.
\begin{table}[h]
    \centering
    \begin{tabular}{|c|c|c|c|c|c|}
    \hline
    Sample sizes  & 400 & 800 & 1600 & 3200 & 5000\\
    \hline 
    -  &  \multicolumn{5}{c|}{0.397} \\
    \hline 
    Unlabelled MNIST &  0.364 & 0.545 & 0.606 & 0.623 & 0.636\\
    Colored MNIST & 0.399 & 0.400 & 0.481 & 0.455 & 0.531\\
    Noisy MNIST & 0.483 & 0.420 & 0.567 & 0.468 & 0.562\\
    USPS & 0.354 & 0.367 & 0.271 & 0.259 & 0.335\\
    \hline 
    \end{tabular}
    \caption{Effect of the sample size for additional training instances, where we set $N = 200$, $K = 10$ and PCA dimension to be 15. Here the sign `-' represents the accuracy with only 200 labelled MNIST data without any source data, while the rest four rows are the results with additional unlabelled MNIST, colored MNIST, noisy MNIST and USPS data, respectively (the same applies to tables below).}\label{tab:multi_sample_size_effect}
\end{table}

\begin{table}[h]
    \centering
    \begin{tabular}{|c|c|c|c|c|c|}
    \hline
    PCA dimension  & 5 & 15 & 25 & 35 & 45\\
    \hline 
    - &  0.470 & 0.397 & 0.195 & 0.485 & 0.372\\
    Unlabelled MNIST &  0.531 & 0.606 & 0.287 & 0.506 & 0.445\\
    Colored MNIST & 0.512 & 0.481 & 0.113 & 0.456 & 0.353\\
    Noisy MNIST & 0.538 & 0.567 & 0.137 & 0.441 & 0.461\\
    USPS & 0.264 & 0.271 & 0.139 & 0.132 & 0.094\\
    \hline 
    \end{tabular}
    \caption{Effect of the cluster number where we set $N = 200$, $M = L = 1600$ and the cluster number to be 10}\label{tab:multi_pca_effect}
\end{table}

\begin{table}[h]
    \centering
    \begin{tabular}{|c|c|c|c|c|c|}
    \hline
    Cluster number  & 10 & 15 & 20 & 25 & 30\\
    \hline 
    - &  0.590 & 0.616 & 0.609 & 0.563 & 0.491\\
    Unlabelled MNIST &  0.680 & 0.669 & 0.696 & 0.614 & 0.468\\
    Colored MNIST & 0.570 & 0.578 & 0.458 & 0.423 & 0.335\\
    Noisy MNIST & 0.693 & 0.695 & 0.727 & 0.649 & 0.592\\
    USPS & 0.458 & 0.487 & 0.479 & 0.338 & 0.223\\
    \hline 
    \end{tabular}
    \caption{Effect of the cluster number where we set $N = 400$, $M = L = 1600$ and PCA dimension to be 15}\label{tab:multi_cluster_effect}
\end{table}
\begin{itemize}
    \item \textbf{Additional Training Samples:} Including extra unlabelled MNIST samples steadily improves the model's performance, showing the value of unlabelled data in SSL. Nonetheless, the effect of augmenting the dataset with colored or noisy MNIST samples varies, indicating that while adding more training data from similar distributions can be advantageous, the presence of distribution shifts or noise might occasionally degrade the performance. The decrease in performance with USPS samples highlights the difficulty in adapting the model to different data distributions, also previously observed in Table~\ref{tab:RES} where the testing data distribution deviates from these approximating parametric models, emphasizing that in practice, the data might be instead useless if the generating distribution varies too much in the anti-causal direction. 
    \item \textbf{PCA Dimensions:} The link between the number of dimensions in PCA and how well a model performs is complex, showing that there is not a clear connection between adding more dimensions and achieving better performance. The best number of PCA dimensions changes depending on the dataset, suggesting the importance of a customized strategy for reducing dimensions that focuses on preserving key features while eliminating the effect of other factors, such as noise. This concept is especially clear when looking at the decline in performance across all dimension levels with USPS data, demonstrating the difficulties in applying a one-size-fits-all approach to different datasets.
    \item \textbf{Cluster Number:} The effectiveness of the model changes as the number of clusters changes. There is performance improvement up to a certain cluster number for particular datasets, and then it starts to decrease as the cluster increases. This indicates that there is an ideal number of clusters that can enhance the model's performance, a trend that is particularly noticeable with unlabelled and noisy MNIST datasets. On the other hand, for colored MNIST and USPS datasets, the performance tends to worsen as the number of clusters increases. This could be caused by over-segmentation or the loss of important features due to too many clusters.
\end{itemize}
These experiments examine the impact of different factors, such as additional data sample size, the number of PCA dimensions, and the number of clusters on the performance of models across various datasets for anti-causal learning. In the anti-causal learning setup, more unlabelled data without the distribution shift generally boosts the model performance, but adding labelled source data (such as the refactored MNIST datasets and USPS in our example) does not always lead to better results, pointing to the importance of causal direction and data generating mechanisms. The optimal number of PCA dimensions and clusters is not one-size-fits-all but needs customization for each dataset to ensure key parameters are retained while minimizing noises from the redundant features. For some datasets like unlabelled and noisy MNIST, a specific cluster number can improve performance, whereas for others, like colored MNIST and USPS, it may cause problems due that the testing data may not be drawn from these approximating GMM distributions and possibly over-segmentation with large cluster numbers or the loss of important features with small PCA dimensions.
\section{Extensions to Unknown Causal Settings} \label{Sec:modelselection}
Even though in this work we primarily focus on the setup where the setting is known to be either causal learning or anti-causal learning, it is also interesting to consider the scenario where the underlying relationship between $X$ and $Y$ is acyclic but \textbf{unknown}. We ask the question, which causal direction should we use for prediction? Our strategy is that given the statistics from the observed data $(X, Y)$, we try to fit the data with both causal-learning and anti-causal learning settings and decide which setting will enable us to make predictions more efficiently. Notice that it could be the case that the chosen setting is not the true underlying mechanism (and perhaps not physically possible). However, this is irrelevant as far as the prediction is concerned, as we only work with observed data and will not intervene in the system.  By the same argument, we could choose either setting for the prediction \textit{even if the true causal setting is known}. So it is tempting to carry out this comparison even if we know the true direction. However, it does not seem to be fruitful in general. Indeed, as pointed out by \citet{kocaoglu2017entropic} and \citet{compton2021entropic}, if we want to use an anti-causal learning setting to fit the data generated from a causal learning setting (or vice versa), this ``artificial" fitting is in general much more complicated than fitting from the true underlying setting, which would make the prediction more difficult. 

If the causal relationship between $X$ and $Y$ for a certain learning problem is unknown and we can model the data from both directions, our results imply that we should use whichever model achieves a better learning performance. This can be viewed as a causal model selection problem. Referring to Table~\ref{tab:result}, for semi-supervised learning, the rate from the causal direction will be $O(\frac{k}{m})$ while $O(\frac{k'+1}{m+n})$ for anti-causal learning if we have abundant source data ($n \ll m$) and $k< k'+1$, fitting from the causal direction will be easier. In contrast, if we have abundant target data ($m \ll n$), then fitting from the anti-causal direction will be more favourable. Using similar arguments in the domain adaptation scenarios, if the covariate shift assumption does not hold, the source data will be unhelpful from the causal direction, and we should always fit from the anti-causal direction. Otherwise, the model selection is, again, determined by the sample sizes $m$ and $n$.

In an attempt to investigate the model selection issue, we examine the excess risk from numerical analysis for the aforementioned parametric models under the semi-supervised learning condition for the sake of simplicity. We will consider the distribution $P_S(X,Y) = P_T(X,Y)$ from the anti-causal direction as:
\begin{align}
   Y &\sim \textup{Ber}(0.5), \nonumber  \\
   X_{0} &\sim \textup{Cat}(0.05, 0.6, 0.25, 0.1), \nonumber \\
   X_{1} &\sim \textup{Cat}(0.05, 0.3, 0.25, 0.4). \nonumber 
\end{align}
by setting $\theta_Y = 0.5$, $\theta_{{0}} = 0.05$ and $\theta_{{1}} = 0.05$. We can also model the same joint distribution from the causal directions by choosing the parameters as follows:
\begin{align}
   X  &\sim \textup{Cat}(0.05, 0.45, 0.25, 0.25),  \nonumber \\
   Y_{x_1}  &\sim \textup{Ber}(0.5), \quad Y_{x_2}  \sim \textup{Ber}(\frac{1}{3}),  \nonumber \\
   Y_{x_3}  &\sim \textup{Ber}(0.5), \quad Y_{x_3}  \sim \textup{Ber}(0.8). \nonumber
\end{align}

\begin{figure}[htb]
   \centering
   \subfigure[$k = 4, k' = 2$\label{subfigure:a}]{\includegraphics[width = 2.7in]{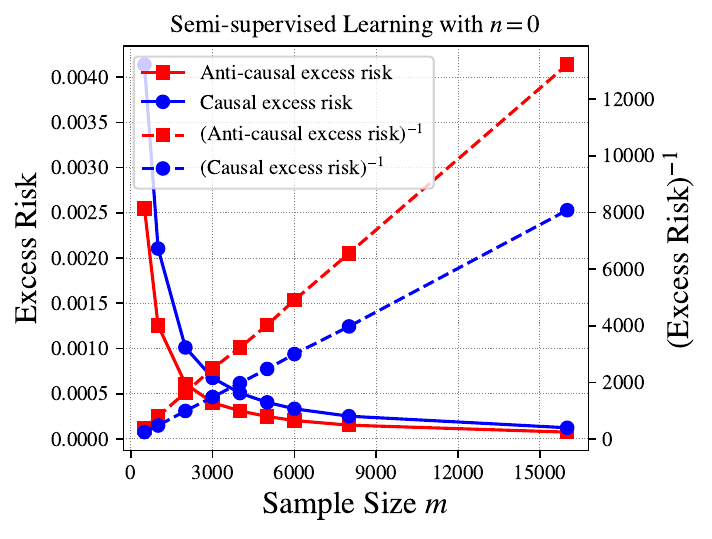}}
   \subfigure[$k = 1, k' = 2$\label{subfigure:b}]{\includegraphics[width = 2.7in]{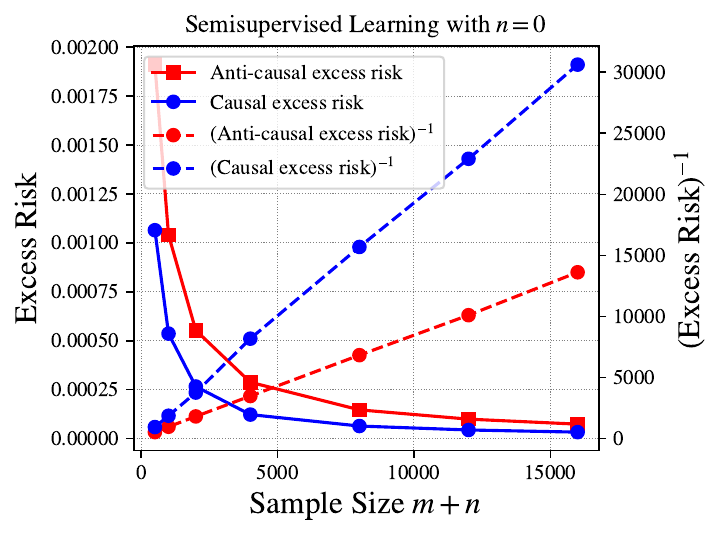}}
   \caption{Excess risk comparisons fitting from causal (red) and anti-causal (blue) under semi-supervised learning with labelled data only. Results are derived by the same parameterization from anti-causal learning with $k' = 2$ but different parameterization from causal learning with $k = 4$ in (a) and $k = 1$ in (b).} \label{fig:anti-comparison}
\end{figure}

By varying the sample size $m$ from 500 to 16000, we plot the excess risk $\mathcal{R}(b)$ under causal and anti-causal learning settings in Figure~\ref{subfigure:a}. It is observed that both directions produce the same rate of $O(\frac{1}{m})$. Compared to the causal case, fitting from the anti-causal direction enjoys a lower regret, and the slope of its reciprocal is higher, which implies that it is ``easier" to learn the distribution $P(Y|X)$ from the anti-causal direction. Roughly speaking, the reason is that learning $\theta^*_{Y_X}$ requires $k = 4$ parameters, but the inference from the anti-causal direction only requires $k'+1 = 3$ parameters, which decreases the model uncertainty and hence the better performance. Rigorously speaking, the slope (or scaling factor in the rate) depends on the information dimension (see \citet{haussler1995general} for reference). For example, under causal learning, the convergence rate is proved to be $\frac{k}{2m}$ and the slope will be $\frac{2}{k}$ where $k = 4$ is the number of parameters for $\theta^*_{Y_X}$ in this case. It is also confirmed from the figure that the slope is roughly $\frac{1}{2}$. The same argument applies in the anti-causal learning, and the convergence rate is $\frac{k'+1}{2m}$ when $m$ is sufficiently large, leading to a lower regret since $k > k'+1$. With such parameterization, it is always better to fit from the anti-causal direction.

However, if we model the distribution from the causal directions by  setting $\theta_Y = 0.5$ in the following way:
\begin{align}
   X  &\sim \textup{Cat}(0.05 , 0.45  , 0.25, 0.25), \nonumber \\
   Y_{x_1}  &\sim \textup{Ber}(\theta_{Y}), \quad Y_{x_2}  \sim \textup{Ber}(\theta_{Y}-\frac{1}{6}), \label{eq:causal-para2} \\
   Y_{x_3}  &\sim \textup{Ber}(\theta_{Y}), \quad Y_{x_4}  \sim \textup{Ber}(\theta_{Y}+0.3), \nonumber 
\end{align}
With such a restriction, the number of parameters $k$ is reduced to 1. We successively repeat the experiment and plot the result in Figure~\ref{subfigure:b}. The excess risk, in this case, becomes lower than fitting from the anti-causal direction and the rate is improved to approximately $\frac{1}{2m}$. The results indicate that the model selection depends on how we parameterize the data distributions, particularly the number of parameters from each causal direction.

In the above example, we only consider the labelled data. The unlabelled samples, however, are not useful for the causal direction but will take effect from the anti-causal direction from Figure~\ref{fig:causal_unlabel} and \ref{fig:anti_causal_unlabel}. Both causal learning and anti-causal learning can be more favorable than the other option, depending on the sample sizes. For instance, if we have abundant unlabelled data and limited labelled data, referring to Table~\ref{tab:result}, fitting from anti-causal direction yields the rate $O(\frac{k'+1}{m+n})$, which is better than the rate $O(\frac{k}{m})$ under causal direction if $n \gg m$. 

To numerically illustrate, we conduct the experiments with the parameterization in (\ref{eq:para-anti-s}) from anti-causal direction and (\ref{eq:causal-para2}) from causal direction under semi-supervised learning with both labelled and unlabelled data. We then plot the results in Figure~\ref{fig:model_selection}. 

\begin{figure*}[htb]
\centering
\subfigure[Fix $n$, vary $m$\label{fig:sub_a}]{\includegraphics[width = 2in]{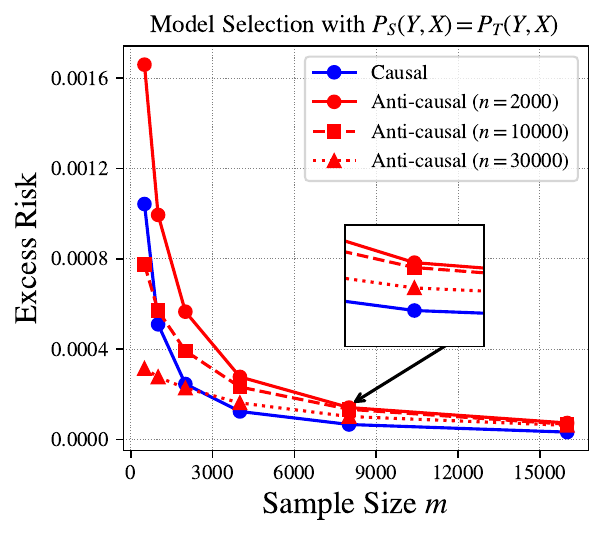}}
\subfigure[Fix $n$, vary $m$\label{fig:sub_b}]{\includegraphics[width = 2in]{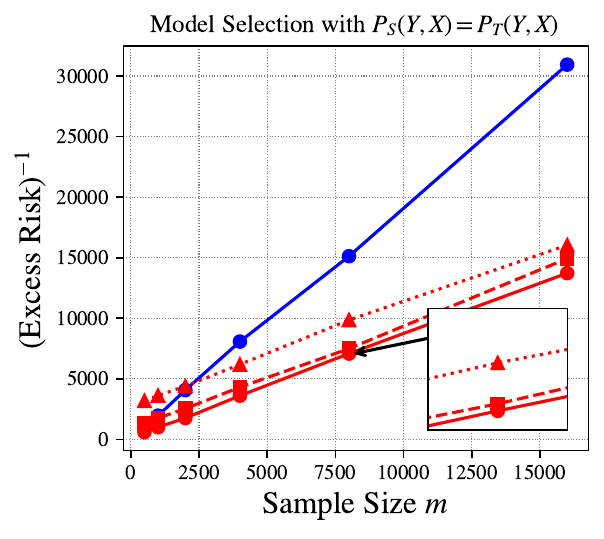}}
\subfigure[Fix $m$, vary $n$\label{fig:sub_c}]{\includegraphics[width = 2in]{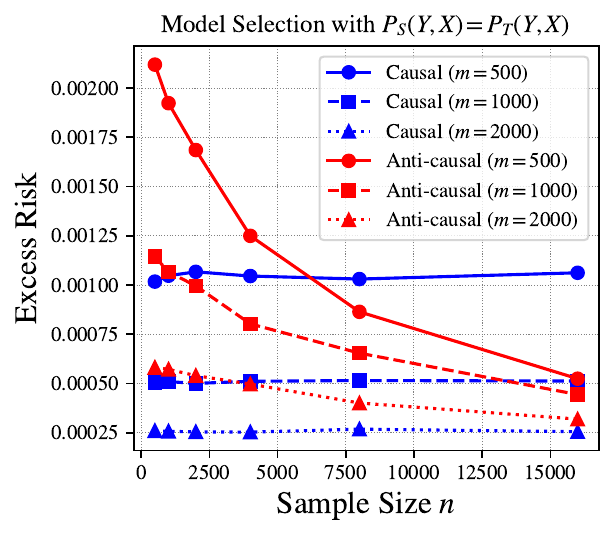}}
\subfigure[Fix $m$, vary $n$\label{fig:sub_d}]{\includegraphics[width = 2in]{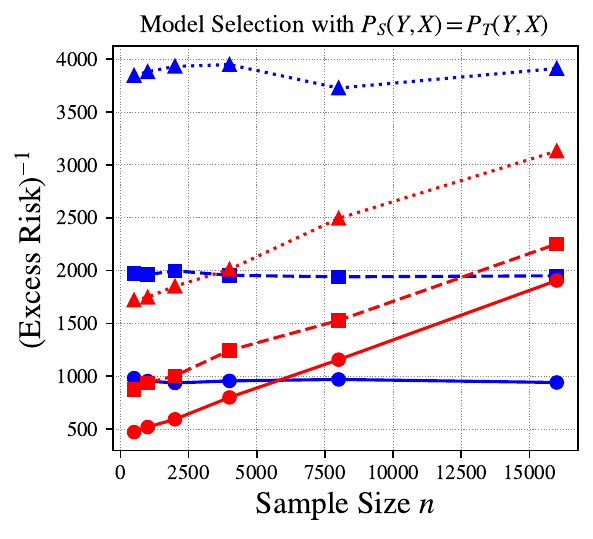}}
\caption{The figure shows the excess risk comparisons by varying $m$ and $n$ under causal and anti-causal learning for semi-supervised learning. The subfigure \ref{fig:sub_a} shows the results of $\mathcal{R}(b)$ and \ref{fig:sub_b} of $\mathcal{R}(b)^{-1}$ (sharing the same legend) by varying $m$ from 500 to 16000 and fixing $n = 2000, 10000$ and $30000$, respectively. In \ref{fig:sub_a}, the blue curve shows the result by fitting from the causal direction and from top to bottom, the other three red curves are derived by fitting from the anti-causal direction with $n = 500,1000$ and $2000$. The subfigure \ref{fig:sub_c} shows the results of $\mathcal{R}(b)$ and \ref{fig:sub_d} of $\mathcal{R}(b)^{-1}$ (sharing the same legend) by varying $n$ from 500 to 16000 and fixing $m = 1000, 2000$ and $3000$, respectively. In \ref{fig:sub_c}, from top to bottom, three blue curves correspond to $m = 500, 1000$ and $2000$ by the causal direction and the three red curves by the anti-causal direction. All results are derived by 3000 experimental repeats. }\label{fig:model_selection}
\end{figure*}

We firstly vary $m$ from 500 to 16000 by fixing $n = 2000, 10000$ and $30000$ to show the effectiveness of labelled data. We plot the corresponding results of $\mathcal{R}(b)$ in subfigure \ref{fig:sub_a} and $\mathcal{R}(b)^{-1}$ in \ref{fig:sub_b}. In \ref{fig:sub_a}, we only plot one curve in blue since $n$ does not affect the excess risk from the causal direction. The remaining three red curves are derived by fitting from the anti-causal direction with an increasing $n$, from top to bottom. One can observe that a larger $n$ will incur a smaller initial excess risk when $m = 500$. However, the convergence rates are identical for all three cases. Since the slope of $\mathcal{R}(b)^{-1}$ is higher from the causal direction, when $m$ is large enough ($m > 2000$), even with large unlabelled data ($n = 30000$), the excess risk is still higher fitting from the anti-causal direction.    

The subfigure~\ref{fig:sub_c} shows the results of $\mathcal{R}(b)$ and \ref{fig:sub_d} of $\mathcal{R}(b)^{-1}$ by varying $n$ from 500 to 16000 and fixing $m = 500, 1000$ and $2000$. In \ref{fig:sub_c}, from top to bottom, three blue curves correspond to $m = 500, 1000$ and $2000$ by the causal direction and the three red curves by the anti-causal direction. In this case, the excess risk from the causal direction is almost a constant depending on $m$, regardless of the unlabelled sample size $n$. Furthermore, a higher $m$ incurs a lower regret. On the contrary, from the anti-causal learning direction, the excess risk will converge as $n$ goes sufficiently large. Selecting an appropriate model strongly hinges on the unlabelled target sample size $n$.  For example, in our formulation, when $m =500$, we may need more than $6500$ extra unlabelled samples to achieve a lower regret, and if $m$ doubles, we will need to double the required unlabelled samples to achieve a comparable expected risk.

Overall, for a general domain adaptation task without knowing the underlying causal mechanism, if we can model the data with parameterised distributions for both causal and anti-causal directions without some physical constraints, both models can be more favourable than the other option depending on how we do the parameterization, how many data samples we have and how different the source and target domains are.

\section{Conclusions}

This paper proposes a probabilistic framework articulating the connection between SSL/UDA and causal mechanisms. We explicitly characterize the rate of learning performance under different causal mechanisms and domain shift conditions, from which the usefulness of the source and target data is manifested. However, in our analysis, the parametric characterization of both source and target data is crucial. A possible future direction is to relax the assumptions on parametric conditions to general probability distributions and find the excess risk in terms of the sample sizes.  Our analysis also heavily relies on the generating processes we skectch in Figure~\ref{fig:causal-anti-dag} (e.g., $X$ and $Y$ are unconfounded), and the possible future work could be performing a similar analysis for the case with more than two variables (e.g., causal setting with con-founders), which improves the generality and applicability in real-world problems. We have also observed that incorporating unlabelled data and labelled source data could significantly enhance the model performance for the target domain on both synthetic data and real benchmarks. Due to the discrepancy between the approximated parametric distribution and the underlying data distribution for real-world scenarios, our theoretical analysis cannot directly carry over. In addition, developing a method that can effectively handle non-parametric distributions is also a potential direction worth exploring.
\newpage

\appendix
\section{Appendix: Proofs}

\subsection{Mixture Asymptotics Lemma} \label{apd:mix_lemma}

\begin{lemma}[Mixture Asymptotics]\label{lemma:asymptotics}
Under Assumption~\ref{asp:para-dist},\ref{asp:para-trans},\ref{asp:proper-prior} and assume $m \asymp n^{p}$ for some $p > 0$ and let $n \rightarrow \infty$, then the mixture strategy yields
\begin{align}
    D(P_{\theta^*}(D^{m}_s, D^{U,n}_t)\|Q(D^{m}_s, D^{U,n}_t)) = \frac{d}{2}\log\frac{1}{2\pi e} + \log\frac{1}{\omega(\theta^*)} + \frac{1}{2}\log\operatorname{det}(I_{st}) + o(\frac{1}{n\vee m}),
\end{align}
where $\theta^* \in \mathbb{R}^{d}$ denotes the total parameters that characterize the source and target distributions and $d$ denotes the total dimension, depending on the causal directions and distribution shifting conditions. The Fisher information matrix associated with $D^{m}_s$ and $D^{U,n}_t$ is defined as $I_{st} = -\mathbb{E}_{\theta^*}[\nabla^2 \log P(D^{m}_s, D^{U,n}_t|\theta^*)]$.
\end{lemma}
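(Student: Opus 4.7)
The plan is to invoke the classical Bayes mixture asymptotics of Clarke and Barron, adapted to the two-sample setting. Starting from the algebraic identity
\[
    \log\frac{P_{\theta^*}(D)}{Q(D)} \;=\; \log\frac{\pi(\theta^*\mid D)}{\omega(\theta^*)},
\]
where $D = (D_s^m, D_t^{U,n})$ and $\pi(\theta\mid D) \propto \omega(\theta)P_\theta(D)$ is the posterior, and taking expectation under $P_{\theta^*}$ yields
\[
    D(P_{\theta^*}\|Q) \;=\; \mathbb{E}_{\theta^*}\bigl[\log \pi(\theta^*\mid D)\bigr] - \log \omega(\theta^*).
\]
The entire task therefore reduces to pinning down the asymptotics of $\mathbb{E}_{\theta^*}[\log\pi(\theta^*\mid D)]$.

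To evaluate this expectation I would establish a Bernstein--von Mises type result for the mixed-sample posterior. Using Condition 1 of Assumption~\ref{asp:para-trans}, I perform a second-order Taylor expansion of $\log P_\theta(D)$ around the joint MLE $\hat\theta$, which pools labeled source and unlabeled target data. Positive definiteness of the joint Fisher information $I_0$, together with the identifiability assertion of Condition 3, guarantees that $\hat\theta\to\theta^*$ and that the quadratic term dominates in a shrinking neighborhood of $\hat\theta$; a Laplace approximation of $\int \omega(\theta) P_\theta(D)\, d\theta$ then gives
\[
    \pi(\theta^*\mid D) \;=\; \frac{\sqrt{\det \hat I}}{(2\pi)^{d/2}} \exp\!\left(-\tfrac{1}{2}(\theta^* - \hat\theta)^{\top} \hat I (\theta^*-\hat\theta)\right) \bigl(1 + o_p(1)\bigr),
\]
where $\hat I = -\nabla^2 \log P_{\hat\theta}(D)$. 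Concentration of the observed information $\hat I$ onto its expectation $I_{st}$ is controlled by the moment generating function bound of Condition 5, and continuity/positivity of $\omega$ at $\theta^*$ (Assumption~\ref{asp:proper-prior}) replaces $\omega(\hat\theta)$ by $\omega(\theta^*)$ up to $1+o_p(1)$. Taking logarithms yields
\[
    \log\pi(\theta^*\mid D) \;=\; \tfrac{1}{2}\log\det I_{st} - \tfrac{d}{2}\log(2\pi) - \tfrac{1}{2}(\theta^*-\hat\theta)^{\top} I_{st}(\theta^*-\hat\theta) + o_p(1).
\]
Asymptotic normality of the MLE implies $(\theta^*-\hat\theta)^{\top}I_{st}(\theta^*-\hat\theta)$ converges in distribution to $\chi_d^2$ with mean $d$, so the expected quadratic form approaches $d$ and the expected log-posterior approaches $\tfrac{1}{2}\log\det I_{st} - \tfrac{d}{2}\log(2\pi e)$. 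Substituting into the identity above produces the claimed formula.

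The main obstacle I anticipate is two-fold. First, upgrading the pointwise $o_p(1)$ statements to genuine $o(1)$ statements in expectation requires uniform integrability, which is precisely what Conditions 4--6 (bounded R\'enyi divergences and exponential moments of the score and Hessian) are designed to supply; adapting them to the heterogeneous data $(D_s^m, D_t^{U,n})$, in which different coordinates of $\theta$ are informed by $m$ samples, $n$ samples, or $m+n$ samples, requires careful bookkeeping that is enabled by the stipulation $m \asymp n^p$. Second, the block structure of $I_{st}$ must be respected throughout: domain-specific parameters receive information of order $m$ or $n$, while domain-shared parameters receive information of order $m+n$, so the natural Laplace rescaling is anisotropic. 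Changing variables by $I_{st}^{1/2}$ (rather than by $\sqrt{N}$ for a scalar $N$) makes $\log\det I_{st}$ emerge cleanly and simultaneously forces the tails of the mixture integral outside a neighborhood of $\theta^*$ to be negligible via the R\'enyi divergence bound of Condition 4.
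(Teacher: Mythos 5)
Your overall route is the right one and in spirit identical to the paper's: the identity $\log\frac{P_{\theta^*}(D)}{Q(D)}=\log\frac{\pi(\theta^*\mid D)}{\omega(\theta^*)}$ followed by a Laplace approximation is exactly the Clarke--Barron machinery the paper generalizes to the two-sample, partially-shared-parameter setting. The genuine gap is quantitative: the lemma does not claim an $o(1)$ remainder but an $o\!\left(\frac{1}{n\vee m}\right)$ remainder, and this rate is essential downstream, since Theorems~\ref{thm:causal} and~\ref{thm:anti-case} are obtained by differencing two such expansions so that the leading $\frac{1}{2}\log\det$ terms largely cancel and the surviving main terms are themselves of order $\frac{k}{m}$ or $\frac{k'+1}{n}$. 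Your plan only promises to upgrade $o_p(1)$ statements to $o(1)$ in expectation via uniform integrability, which would not suffice. The paper gets the rate by a different bookkeeping device: it sandwiches the density ratio on explicit events $A(\delta,\epsilon)$, $B(\delta,\epsilon)$, $C(\delta)$ (prior mass concentrates near $\theta^*$; the empirical Hessian is within a factor $1\pm\epsilon$ of $I_{st}$ uniformly on $N_\delta$; the quadratic score statistic is at most $(n\wedge m)\delta^2$), and then uses Conditions~4--6 with Chernoff-type arguments to show $\mathbb{P}(A^c)$, $\mathbb{P}(B^c)$, $\mathbb{P}(C^c)$ decay exponentially in $n\wedge m$, which under $m\asymp n^p$ makes every remainder contribution $o\!\left(\frac{1}{n\vee m}\right)$. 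Any proof you write must contain an argument of this strength (exponential, or at least super-polynomial, control of the bad set), not just tightness/uniform integrability.

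A second, smaller discrepancy: you center the expansion at the pooled MLE $\hat\theta$ and extract the constant $-\frac{d}{2}$ from the claim that $\mathbb{E}\bigl[(\theta^*-\hat\theta)^{\top}I_{st}(\theta^*-\hat\theta)\bigr]\to d$ via a $\chi^2_d$ limit, which again needs a uniform-integrability step at the right rate and a BvM theorem for the heterogeneous, anisotropic design. The paper avoids this by expanding at $\theta^*$ and completing the square in the score, so the relevant statistic is $L(\theta^*)=l_{st}^{\top}I_{st}^{-1}l_{st}$, whose expectation equals $d=2\tilde k+2-c$ \emph{exactly} at every finite sample size; only its tail (Condition~6, event $C$) needs control. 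If you keep the MLE-centered formulation, you must still supply the analogue of this exact or rate-sharp moment identity, plus the exponential tail bounds on the cross terms $\sum_{i,k}l_{t,i}^{\top}I_{st}^{-1}l_{s,k}$ that arise because the source and target blocks of $I_{st}$ scale as $m$, $n$, and $m+n$ on the shared coordinates; your anisotropic rescaling by $I_{st}^{1/2}$ is the right instinct, but as written it is a remark rather than the needed estimate.
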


\begin{proof} The proof and result is a generalization of \citet{clarke1990information,zhu2020semi} with some modifications to fit our purpose. Without the loss of generality, we first assume that the source parameter and target parameter will have $\tilde{k}+1-c$ domain-specific parameters and $c$ domain-sharing parameters, where $\tilde{k} = k$ for causal learning and $\tilde{k} = k'$ for anti-causal learning. $c$ will vary under different shift conditions. For example, under the target shift condition in anti-causal learning, $c$ will be $k'$ for identical parameters $\theta_{X_{y_i}}$ in both domains; In conditional shift condition, $c = 1$ since $\theta^{s*}_{Y} = \theta^{t*}_{Y}$. With a little abuse of notation in this section, we denote the true source-specific parameters by $\theta^*_{s} \in \mathbb{R}^{\tilde{k}+1-c}$, the target-specific parameters by $\theta^*_t \in \mathbb{R}^{\tilde{k}+1-c}$ and the domain-sharing parameters as $\theta^*_c \in \mathbb{R}^{c}$. Then the source data $(X,Y)$ is drawn from the distribution $P_{\theta^*_c, \theta^*_s}$ and the target data $X$ is drawn from the distribution $P_{\theta^*_c, \theta^*_t}$ under such parameterization. For simplicity, we can write the joint domain parameters $\mathbf{\theta^*} = (\theta^*_c, \theta^*_s, \theta^*_t)$ and the joint distribution for the source domain data and target domain data is expressed by
\begin{align}
    P_{\mathbf{\theta^*}}(D^m_s, D^{U,n}_t) = P_{\theta^*_c, \theta^*_s}(D^m_s) P_{\theta^*_c, \theta^*_t}(D^{U,n}_t) = \prod_{i=1}^{m}P_{\theta^*_c, \theta^*_s}(D^m_s) \prod_{j=1}^{n}P_{\theta^*_c, \theta^*_t}(D^{U,n}_t).
\end{align}
Based on the notations above, we define the score functions by
\begin{align}
    l_s(\theta_s, \theta_c) &= \nabla \log P(D^m_s|\theta_s, \theta_c), \\
    l_t(\theta_t, \theta_c) &= \nabla  \log P(D^{U,n}_t|\theta_t, \theta_c),\\
    l_{st}(\theta) &= \nabla \log P(D^{U,n}_t, D^m_s|\theta).
\end{align}
Note that
\begin{equation}
    l_{st}(\theta^*) = \begin{bmatrix}
l_s(\theta^*_s, \theta^*_c) \\
\mathbf{0}_{\tilde{k}+1 - c} 
\end{bmatrix} + \begin{bmatrix}
\mathbf{0}_{\tilde{k}+1 - c} \\
l_t(\theta^*_t, \theta^*_c)
\end{bmatrix},
\end{equation}
where $\mathbf{0}_{\tilde{k}+1 - c}$ denotes the zero vector with length $\tilde{k}+1-c$. We next restate the corresponding Fisher information matrix,
\begin{align}
    I_s &= -\mathbb{E}_{\theta^*_s, \theta^*_c}[ \nabla^2 \log P(X_s,Y_s|\theta^*_s, \theta^*_c)] \in \mathbb{R}^{(\tilde{k}+1)\times (\tilde{k}+1)}, \\
    I_t &= -\mathbb{E}_{\theta^*_t, \theta^*_c}[ \nabla^2 \log P(X_t|\theta_t, \theta_c)] \in \mathbb{R}^{(\tilde{k}+1)\times (\tilde{k}+1)} \\
    I_{0} &= -\mathbb{E}_{\theta^*}[\nabla^2 \log P(X_s,Y_s, X_t|\theta^*)] \in \mathbb{R}^{(2\tilde{k}+2-c)\times (2\tilde{k}+2-c)},\\
    I_{st} &= -\mathbb{E}_{\theta^*}[\nabla^2 \log P(D^{U,n}_t, D^m_s|\theta^*)] \in \mathbb{R}^{(2\tilde{k}+2-c)\times (2\tilde{k}+2-c)}.
\end{align}
Their corresponding \emph{empirical} versions are denoted by,
\begin{align}
    \tilde I_s(\theta_s, \theta_c) &= -[ \nabla^2 \log P(X_s,Y_s|\theta_s, \theta_c)] \in \mathbb{R}^{(\tilde{k}+1)\times (\tilde{k}+1)}, \\
    \tilde I_t(\theta_t, \theta_c) &= -[ \nabla^2 \log P(X_t|\theta_t, \theta_c)] \in \mathbb{R}^{(\tilde{k}+1)\times (\tilde{k}+1)}, \\
    \tilde I_{0}(\theta) &= -[\nabla^2 \log P(X_s,Y_s, X_t|\theta^*)] \in \mathbb{R}^{(2\tilde{k}+2-c)\times (2\tilde{k}+2-c)},\\
    \tilde I_{st}(\mathbf{\theta}) &= -[\nabla^2 \log P(D^{U,n}_t, D^m_s|\theta)] \in \mathbb{R}^{(2\tilde{k}+2-c)\times (2\tilde{k}+2-c)}.
\end{align}
For convenience, if not otherwise stated we will simply omit brackets for $\theta^*$ in the sequel, e.g., we write $\tilde I_{st}(\mathbf{\theta^*})$ as $\tilde I_{st}$. Define the neighbourhood of $\theta^*$ by $N_{\delta} = \{\theta: \|\theta-\theta^*\| \leq \delta \}$  where the norm in $\mathbb{R}^{2\tilde{k}+2-c}$ is defined as
\begin{align}
    \|\xi \|^2 =  \xi^TI_{0}\xi.
\end{align}
Define 
\begin{equation}
    L(\theta^*) = l^T_{st}(\theta^*) I^{-1}_{st} l_{st}(\theta^*).
\end{equation}
Note that,
\begin{align}
    \mathbb{E}[L(\theta^*)] &= \mathbb{E}[\operatorname{Tr}(I^{-1}_{st}l_{st}(\theta^*)^T l_{st}(\theta^*))] \nonumber \\
    &= \operatorname{Tr}(I^{-1}_{st}\mathbb{E}[l_{st}(\theta^*)^T l_{st}(\theta^*)]) \nonumber \\
    &= \operatorname{Tr}(I^{-1}_{st}I_{st})  \nonumber \\
    &= 2\tilde{k}+2-c.
\end{align}
For $ 0 < \epsilon < 1$ and $\delta > 0$, we define three events $A(\delta, \epsilon)$, $B(\delta, \epsilon)$ and $C(\delta)$ as
\begin{align}
    A(\delta, \epsilon) &= \left\{ \int_{N_{\delta}^{c}} P\left(D^s_m, D^{U,n}_t \mid \theta \right) \omega (\theta) d \theta  \leq \epsilon \int_{N_{\delta}} P\left(D^s_m, D^{U,n}_t \mid \theta \right) \omega(\theta) d \theta \right\}, \\
    B(\delta, \epsilon)&:=\{(1-\epsilon)\left(\theta-\theta^*\right)^{T} I_{st}\left(\theta-\theta^*\right)  \leq\left(\theta-\theta^*\right)^{T}\left(\tilde I_{st}(\theta')\right)\left(\theta-\theta^*\right), \\
    &\quad \quad  \leq(1+\epsilon)\left(\theta-\theta^*\right)^{T} I_{st}\left(\theta-\theta^*\right) \quad \text { for all } \left.\theta, \theta^{\prime} \in N_{\delta}\right\} \\  
    C(\delta) &:=\left\{L\left(\theta^*\right) \leq \min \{n, m\} \delta^{2}\right\},
\end{align}
and 
\begin{equation}
    \rho(\delta, \theta^*) = \sup_{\theta\in N_{\delta}} \left|\frac{\omega(\theta)}{\omega(\theta^*)}\right|.
\end{equation}
Following the similar procedures in \citet{clarke1990information}, we have the following upper and lower bounds on the density ratio.
\begin{lemma}
We assume condition 3 in Assumption~\ref{asp:para-trans} holds that $P_\theta^*$ is twice differentiable around $\theta^*$ and $I_{st}$ is positive definite. With proper prior $\omega(\theta)$, then on the set of $A\cap B$, we have,
\begin{align}
    \frac{Q(D^s_m,D^{t,U}_n)}{P_{\theta^*}(D^s_m,D^{t,U}_n)} \leq (1+\epsilon)\omega(\theta^*)e^{\rho(\delta,\theta^*)}(2\pi)^{\frac{2\tilde{k}+2-c}{2}}e^{\frac{1}{2(1-\epsilon)}L(\theta^*)}\operatorname{det}((1-\epsilon)I_{st})^{-\frac{1}{2}}.
\end{align}
Further, on the set of $B \cap C$, we have the lower bound,
\begin{align}
    \frac{Q(D^s_m,D^{t,U}_n)}{P_{\theta^*}(D^s_m,D^{t,U}_n)} \geq \omega(\theta^*)e^{-\rho(\delta,\theta^*)}(2\pi)^{\frac{2\tilde{k}+2-c}{2}}e^{\frac{1}{2(1+\epsilon)}L(\theta^*)}(1-2^{\frac{2\tilde{k}+2-c}{2}}e^{-\epsilon^2(n\wedge m)\delta^2/8 })\operatorname{det}((1+\epsilon)I_{st})^{-\frac{1}{2}}.
\end{align}
\end{lemma}
\begin{proof}
In both cases, we will use the Laplace method to give an upper and lower bound on the density ratio, for the upper bound, if we restrict on $A$ and $B$, then,
\begin{align*}
    \frac{Q(D^s_m,D^{t,U}_n)}{P_{\theta^*}(D^s_m,D^{t,U}_n)} &\leq (1+\epsilon) \int_{N_{\delta}} \frac{P_{\theta}(D^s_m,D^{t,U}_n)}{P_{\theta^*}(D^s_m,D^{t,U}_n)} \omega(\theta) d\theta \\
     &= (1+\epsilon) \int_{N_{\delta}} e^{\log \frac{P_{\theta}(D^s_m,D^{t,U}_n)}{P_{\theta^*}(D^s_m,D^{t,U}_n)}} \omega(\theta) d\theta \\
     \text{ (Taylor Expansion) } & =  (1+\epsilon) \int_{N_{\delta}} e^{(\theta-\theta^*)^Tl_{st}(\theta^*) - \frac{1}{2}(\theta-\theta^*)^T\tilde{I}_{st}(\theta')(\theta-\theta^*)} \omega(\theta) d\theta  \\
   \text{ (Definition of $\rho$) }  &\leq (1+\epsilon) \omega(\theta^*) e^{\rho(\delta,\theta^*)}\int_{N_{\delta}} e^{(\theta-\theta^*)^Tl_{st}(\theta^*) - \frac{1}{2}(\theta-\theta^*)^T\tilde{I}_{st}(\theta')(\theta-\theta^*)} d\theta \\
   \text{ (Event $B$) }   &\leq (1+\epsilon) \omega(\theta^*) e^{\rho(\delta,\theta^*)}\int_{N_{\delta}} e^{(\theta-\theta^*)^Tl_{st}(\theta^*) - \frac{1}{2}(1-\epsilon)(\theta-\theta^*)^TI_{st}(\theta-\theta^*)} d\theta \\
   &\overset{(*)}{=}  (1+\epsilon) \omega(\theta^*) e^{\rho(\delta,\theta^*)} e^{\frac{1}{2(1-\epsilon)}l^T_{st}(\theta^*)I^{-1}_{st}l_{st}(\theta^*)}\int_{N_{\delta}} e^{-\frac{1}{2}(1-\epsilon)(\theta-u)^TI_{st}(\theta - u)} d\theta\\ 
   &\leq (1+\epsilon) \omega(\theta^*) e^{\rho(\delta,\theta^*)} e^{\frac{1}{2(1-\epsilon)}l^T_{st}(\theta^*)I^{-1}_{st}l_{st}(\theta^*)}\int_{N_{\delta} \cup N^c_{\delta} } e^{-\frac{1}{2}(1-\epsilon)(\theta-u)^TI_{st}(\theta - u)} d\theta \\
   \text{ (Gaussian integral) } &= (1+\epsilon)\omega(\theta^*)e^{\rho(\delta,\theta^*)}e^{\frac{1}{2(1-\epsilon)}L(\theta^*)} (2\pi)^{\frac{2\tilde{k}+2-c}{2}} \operatorname{det}((1-\epsilon)I_{st})^{-\frac{1}{2}},
\end{align*}
where we define $u = \theta^*+\frac{1}{1-\epsilon}(\hat{\theta} - \theta^*)$ and $\hat{\theta} = \theta^* + I^{-1}_{st}l_{st}(\theta^*)$ provided that $I_{st}$ positive definite. We also use the identity in (*) by completing the square,
\begin{align}
    (\theta - \theta^*)^Tl_{st}(\theta^*) - \frac{1}{2}(1-\epsilon)  (\theta - \theta^*)^T I_{st} (\theta - \theta^*) = -\frac{1-\epsilon}{2}(\theta - u)^TI_{st}(\theta - u) + \frac{1}{2(1-\epsilon)} L(\theta^*).
\end{align}
For the lower bound, we have,
\begin{align*}
    \frac{Q(D^s_m,D^{t,U}_n)}{P_{\theta^*}(D^s_m,D^{t,U}_n)} &\geq  \int_{N_{\delta}} \frac{P_{\theta}(D^s_m,D^{t,U}_n)}{P_{\theta^*}(D^s_m,D^{t,U}_n)} \omega(\theta) d\theta \\
    &= \int_{N_{\delta}} e^{\log \frac{P_{\theta}(D^s_m,D^{t,U}_n)}{P_{\theta^*}(D^s_m,D^{t,U}_n)}} \omega(\theta) d\theta  \\
  \text{ (Taylor Expansion) } &= \int_{N_{\delta}} e^{(\theta - \theta^*)^Tl_{st}(\theta^*) - \frac{1}{2}(\theta - \theta^*)^T\tilde{I}_{st}(\theta')(\theta - \theta^*)}\omega(\theta) d\theta \\
  \text{(Event $B$)} & \geq \omega(\theta^*) e^{-\rho(\delta, \theta^*)}\int_{N_{\delta}} e^{(\theta - \theta^*)^Tl_{st}(\theta^*) - \frac{1}{2}(1+\epsilon)(\theta - \theta^*)^TI_{st}(\theta - \theta^*)} d\theta \\
  &= \omega(\theta^*)^{-\rho(\delta, \theta^*)} e^{\frac{1}{2(1+\epsilon)}L(\theta^*)} \int_{N_{\delta}} e^{-\frac{(1+\epsilon)}{2}(\theta - u)^TI_{st}(\theta - u)} d\theta \\ 
  &= \omega(\theta^*)^{-\rho(\delta, \theta^*)}  e^{\frac{1}{2(1+\epsilon)}L(\theta^*)} \Bigg[ \int_{\mathbb{R}^{2\tilde{k}+2-c}} e^{-\frac{(1+\epsilon)}{2}(\theta - u)^TI_{st}(\theta - u)} d\theta \\
  &\quad - \int_{N^c_{\delta}} e^{-\frac{(1+\epsilon)}{2}(\theta - u)^TI_{st}(\theta - u)} d\theta \Bigg]. 
 \end{align*}
Here we define $u = \theta^* + \frac{1}{1+\epsilon}(\hat{\theta} - \theta^*)$ and $\hat{\theta} = \theta^* + I^{-1}_{st}l_{st}(\theta^*) $. Since we restrict to the event $C$ and the norm is w.r.t. $I_{0}$, given Condition 2 such that $I_{st} \succcurlyeq (n \wedge m) I_0$, we have that for any $\theta \in N^c_{\delta}$, 
\begin{align}
   (\theta - u)^TI_{st}(\theta - u)  & \geq  (n \wedge m) (\theta - u)^TI_{0}(\theta - u)   \\
   \text{ (Definition of $\|\cdot\|$) }&=  (n \wedge m) \| \theta - u\|^2 \\
    &=  (n \wedge m) \| \theta - \theta^* - \frac{1}{1+\epsilon} (\hat{\theta} - \theta^*) \|^2\\
    &=  (n \wedge m) \| \theta - \theta^* - \frac{1}{1+\epsilon} ( I^{-1}_{st}l_{st}(\theta^*) ) \|^2 \\
    &\geq  (n \wedge m)  (\| \theta - \theta^* \| - \frac{1}{1+\epsilon} \|I^{-1}_{st}l_{st}(\theta^*)  \|)^2 \\
    & \geq  (n \wedge m)  (\| \theta - \theta^* \| - \frac{1}{1+\epsilon} \sqrt{ l^T_{st}(\theta^*) I^{-1}_{st}I_0I^{-1}_{st}l_{st}(\theta^*)} )^2  \\
    & \geq  (n \wedge m)  (\| \theta - \theta^* \| - \frac{1}{1+\epsilon} \sqrt{ \frac{1}{n \wedge m}l^T_{st}(\theta^*) I^{-1}_{st}l_{st}(\theta^*)} )^2 \\
  \text{ (Event C) }  & \geq  (n \wedge m) (\delta - \frac{1}{1+\epsilon} \sqrt{ \delta^2 } )^2 \\
    &\geq   \frac{\epsilon^2}{(1+\epsilon)^2}  (n \wedge m)\delta^2 .
\end{align}
Hence in the second integral in the lower bound, for any $\theta \in N^c_{\delta}$, the integrand is not greater than
\begin{align}
    e^{-\frac{(1+\epsilon)}{2}(\theta - u)^TI_{st}(\theta - u)} \leq e^{-\frac{(n\wedge m)\epsilon^2\delta^2}{4(1+\epsilon)}}e^{-\frac{(1+\epsilon)(n\wedge m)\|\theta - u\|^2}{4}}.
\end{align}
By expanding the terms, using the Gaussian integration and rearranging the integration, we have the lower bound and this completes the proof of this lemma.
\end{proof}
With substantially small $\delta$ and $\epsilon$, the integrand of the KL divergence term will approach $\frac{2\tilde{k}+2-c}{2}\log\frac{1}{2\pi} + \log\frac{1}{\omega(\theta^*)} + \frac{1}{2}\log\operatorname{det}(I_{st}) - \frac{1}{2}L(\theta^*)$, hence we define the remaining term $R_{st}$ by
\begin{align}
    R_{st} =  \frac{P_{\theta^*}(D^s_m,D^{t,U}_n)}{Q(D^s_m,D^{t,U}_n)} - \frac{2\tilde{k}+2-c}{2}\log\frac{1}{2\pi } - \log\frac{1}{\omega(\theta^*)} - \frac{1}{2}\log\operatorname{det}(I_{st}) + \frac{1}{2}L(\theta^*).
\end{align}
Using the similar argument in \citet{zhu2020semi} and \citet{clarke1990information}, we can show that the expected remaining term is upper-bounded and lower-bounded by 
\begin{align}
    \mathbb{E}[R_{st}] \geq& -\log(1+\epsilon) - \rho(\delta,\theta^*) - \frac{\epsilon}{2(1-\epsilon)}(2\tilde{k}+2-c) +\frac{2\tilde{k}+2-c}{2}\log\frac{1}{1-\epsilon}\\
    & + \mathbb{P}((A\cap B)^c)\left(\log\mathbb{P}((A\cap B)^c) + \frac{2\tilde{k}+2-c}{2}\log\frac{1}{2\pi} \right) - \mathbb{P}((A\cap B)^c)\log\frac{\operatorname{det}(I_{st})^{\frac{1}{2}}}{\omega(\theta^*)},
\end{align}
and 
\begin{align*}
    & \mathbb{E}[R_{st}]  \leq \rho(\delta,\theta^*) + \frac{\epsilon}{2(1+\epsilon)}(2\tilde{k}+2-c) + \frac{2\tilde{k}+2-c}{2}\log\frac{1}{1+\epsilon} - \log \left(1-2^{\frac{2\tilde{k}+2-c}{2}} e^{-\epsilon^{2} (m \wedge n) \delta^{2} / 8}\right) \\
    & + \mathbb{E}[L(\theta^*)\mathbf{1}_{(B \cap C)^c}] + \mathbb{P}((B \cap C)^c)\left(\frac{2\tilde{k}+2-c}{2}\log\frac{1}{2\pi} + |\log \int_{N_{\delta}}\omega(\theta) d\theta| + \log\frac{\operatorname{det}(I_{st})^\frac{1}{2}}{\omega(\theta^*)} \right) \\
    &+ \mathbb{P}((B \cap C)^c)\mathbb{E}\left[\sup_{\theta,\theta'} (\theta -\theta^*) \nabla\log P_{\theta'}(D^m_s, D^{U,n}_t) \right] \\
    &+ \mathbb{P}((B \cap C)^c)^{\frac{1}{2}}\mathbb{E}\left[ \sup_{\theta,\theta'} (\theta -\theta^*) \nabla^2\log P_{\theta'}(D^m_s, D^{U,n}_t) \right]^{\frac{1}{2}}.
\end{align*}
By application of Condition~2 in Assumption~\ref{asp:para-trans}, with sufficiently small $\delta$, the upper bound will go to zero if the probability of the data pair $D^{U,n}_t$ and $D^m_s$ belong to the set $P(A^c)$, $P(B^c)$ and $P(C^c)$ is $o(\frac{1}{n \vee m})$. In the following, we will show that the probability of $A^c$, $B^c$ and $C^c$ will decay exponentially fast with $m \wedge n$ so that the expected remaining term will converge as $o(\frac{1}{n \vee m})$ under the regime that $m = cn^{p}$ for some $c>0$ and finite $p > 0$. 

\begin{lemma} Assume condition 4 holds so that for all $\theta \in N_{\delta}$, let $v = n \vee m$, then for sufficiently small $\delta$, there is an $ r> 0$ and $\rho > 0$ so that,
\begin{align}
    \mathbb{P}((D^{U,n}_t, D^m_s) \in A^c(\delta, e^{-vr})) = O(e^{-(m\wedge n )\rho}).
\end{align}
\end{lemma}

\begin{proof}
For any given $r'>0$, we define the event
\begin{equation}
    U= \left\{\int_{N_\delta} \omega(\theta)P_{\theta}(D^{U,n}_t, D^m_s)d\theta > e^{-vr'}P_{\theta^*}(D^{U,n}_t, D^m_s) \right\}.
\end{equation}
We can bound the probability of $A^c$ by
\begin{align*}
\mathbb{P}\left(A^{c}\left(\delta, e^{-v r}\right)\right) &=\mathbb{P}\left(\int_{N_{\delta}} P(D^{U,n}_t, D^m_s \mid \theta) \omega(\theta) d \theta<e^{v r} \int_{N_{\delta}^{c}} P(D^{U,n}_t, D^m_s \mid \theta) \omega(\theta) d \theta\right) \\
&\leq \mathbb{P}\left(U \cap \left(\int_{N_{\delta}} P(D^{U,n}_t, D^m_s \mid \theta) \omega(\theta) d \theta\right.\right. \left.\left.<e^{v r} \int_{N_{\delta}^{c}} P(D^{U,n}_t, D^m_s \mid \theta) \omega(\theta) d \theta\right)\right)+\mathbb{P}\left(U^{c}\right) \\
&\leq \mathbb{P}\left(P\left(D^{U,n}_t, D^m_s \mid \theta^* \right)<e^{v\left(r+r^{\prime}\right)} \int_{N^{c}} \omega(\theta) P(D^{U,n}_t, D^m_s \mid \theta) d \theta\right) \\
&\quad +\mathbb{P}\left(e^{vr^{\prime}} \int_{N_{\delta}} P( D^{U,n}_t, D^m_s \mid \theta) \omega(\theta) d \theta < P\left(D^{U,n}_t, D^m_s \mid \theta^*\right)\right).
\end{align*}
For the first term, we use the argument in \citet{clarke1990information} (Eq. (6.6)) and \citet{zhu2020semi} (Lemma 7) and it can be concluded that it is of the order of $O(e^{-(n \wedge m)r''})$ for some $r''$ under the Condition 3 for soundness of the parametric families. For the second term, define $Q(D^{U,n}_t, D^m_s \mid N_{\delta}) = \int_{N_{\delta}}P(X|\theta)\omega(\theta|N_{\delta})d\theta$ and $\omega(\theta|N_{\delta}) = \frac{\omega(\theta)}{\int_{N_{\delta}} \omega(\theta) d\theta}$ and $\tilde{r} = r' - \frac{1}{v}\log \int_{N_{\delta}} \omega(\theta) d\theta$, we can write the probability as,
\begin{align}
\mathbb{P} & \left(e^{vr^{\prime}} \int_{N_{\delta}} P( D^{U,n}_t, D^m_s \mid \theta) \omega(\theta) d \theta < P\left(D^{U,n}_t, D^m_s \mid \theta^*\right)\right) =  \mathbb{P}\left(\log\frac{P(D^{U,n}_t, D^m_s \mid \theta^*)}{Q(D^{U,n}_t, D^m_s \mid N_{\delta})} > v\tilde{r} \right) \\
& \leq \mathbb{P}\left(\log P\left(D^{U,n}_t, D^m_s \mid \theta^*\right)-\int_{N_{\delta}} \log P(D^{U,n}_t, D^m_s \mid \theta) \omega\left(\theta \mid N_{\delta}\right) d \theta> v \tilde{r}\right) \\
&\leq \mathbb{P}\left(\int_{N_{s,\delta}} \log \frac{P\left(D^m_s \mid \theta^*_c,  \theta^*_s \right)}{P\left(D^m_s \mid \theta_c, \theta_s \right)} \omega\left(\theta \mid N_{s,\delta}\right) d \theta +\int_{N_{t,\delta}} \log \frac{P\left(D^{U,n}_t \mid \theta^*_c, \theta^*_t \right)}{P\left(D^{U,n}_t \mid \theta_c, \theta_t \right)} \omega\left(\theta \mid N_{t,\delta}\right) d \theta>v \tilde{r}\right) \\
&=\mathbb{P}\left(\sum_{i=1}^{m} g_s\left(Z^{(i)}_s\right)+\sum_{j=1}^{n} g_t\left(X^{(j)}_t\right)>v \tilde{r}\right) \\
&\leq \mathbb{P}\left(\frac{1}{v} \sum_{i=1}^{m} g_s\left(Z^{(i)}_s\right) >\tilde{r} / 2\right)+\mathbb{P}\left(\frac{1}{v} \sum_{j=1}^{n} g_t\left(X^{(j)}_t\right) >\tilde{r} / 2\right) \\
&\leq \mathbb{P}\left(\frac{1}{m} \sum_{i=1}^{m} g_s\left(Z^{(i)}_s\right) >\tilde{r} / 2\right)+\mathbb{P}\left(\frac{1}{n} \sum_{j=1}^{n} g_t\left(X^{(j)}_t\right) >\tilde{r} / 2\right),
\end{align}
where we define,
\begin{align}
    g_s(Z^{(i)}_s) &:= \int_{N_{s,\delta}} \log \frac{P\left(Z^{(i)}_s \mid \theta^*_s, \theta^*_c \right)}{P(Z^{(i)}_s \mid \theta_s,\theta_c)} \omega\left(\theta_s,\theta_c \mid N_{s,\delta}\right) d \theta_s d\theta_c, \\
    g_t(X^{(j)}_t) &:= \int_{N_{t,\delta}} \log \frac{P\left(X^{(j)}_t \mid \theta^*_t, \theta^*_c \right)}{P(X^{(j)}_t \mid \theta_t,\theta_c)} \omega\left(\theta_t,\theta_c \mid N_{t,\delta}\right) d \theta_s d\theta_c, \\
    \omega(\theta_c, \theta_s\mid N_{s,\delta}) &:= \frac{\omega(\theta_c,\theta_s)}{\int_{N_{s,\delta}} \omega(\theta_c,\theta_s) d\theta_c d\theta_s}, \\
    \omega(\theta_c, \theta_t\mid N_{t,\delta}) &:= \frac{\omega(\theta_c,\theta_t)}{\int_{N_{t,\delta}} \omega(\theta_c,\theta_t) d\theta_c d\theta_t}.
\end{align}
In this case, we use a slightly different notation that $N_{s,\delta} = \{\theta_{sc}: \|\theta_{sc} - \theta^*_{sc}\| \leq \delta \}$, where $\theta_{sc} = (\theta_s,\theta_c)$ denotes the source parameters and the norm is w.r.t. the Fisher information matrix $I_{s}$, e.g., $\|\theta_{sc}\|^2 = \theta^T_{sc} I_s \theta_{sc}$. Similarly, $N_{t,\delta} = \{\theta_{tc}: \|\theta_{tc} - \theta^*_{tc}\| \leq \delta \}$ where $\theta_{tc} = (\theta_t,\theta_c)$ denotes the target parameters and norm is w.r.t. the Fisher information matrix $I_t$ as defined previously. The second inequality holds due to that $I_t \prec I_t+I_{sc}$ and $I_s \prec I_s + I_{tc}$ for $I_{sc} = -\mathbb{E}_{\theta^*_s, \theta^*_c}[ \nabla^2 \log P(X_s,Y_s|\theta^*_t, \theta^*_c)]$ and $I_{tc} = -\mathbb{E}_{\theta^*_t, \theta^*_c}[ \nabla^2 \log P(X_t|\theta^*_t, \theta^*_c)]$ the fisher information matrix w.r.t. $\theta^*_c$ in both source and target domains, with the fact that $N_{\delta} \subset N_{s,\delta}$ and $N_{\delta} \subset N_{t,\delta}$. If the source and target domain share the same parameters (e.g., $c = \tilde{k}+1$), then our case generalizes to Lemma 7 in \citet{zhu2020semi}. 

\end{proof}
\begin{lemma} Assume condition 5 holds so that for sufficiently small $\delta$,  there is some $\rho > 0$ such that,
\begin{align}
    \mathbb{P}((D^{U,n}_t, D^m_s) \in B^c(\delta, \epsilon)) = O(e^{-\rho(m\wedge n )}).
\end{align}
\end{lemma}

\begin{proof}
The proof exactly follow \citet{zhu2020semi} with similar assumptions, which is omitted here.
\end{proof}

\begin{lemma} 
Assume condition 6 holds, then for sufficiently small $\delta$, there is a $\rho > 0$ so that,
\begin{align}
    \mathbb{P}((D^{U,n}_t, D^m_s) \in C^c(\delta)) = O(e^{-(m\wedge n )\rho}).
\end{align}
\end{lemma}

\begin{proof}
We firstly expand the term $L(\theta^*)$ by: 
\begin{align}
   L(\theta^*) &= l^T_{st} I^{-1}_{st} l^T_{st} \\
               &= \sum_{i=1}^m l^T_{s,i} I^{-1}_{st} l_{s,i} + \sum^m_{i\neq k} l^T_{s,i}I^{-1}_{st}l_{s,k} + \sum_{i=1}^{n}l^T_{t,i} I^{-1}_{st} l_{t,i} + \sum^n_{i\neq k} l^T_{t,i}I^{-1}_{st}l_{t,k} \\
               &\quad + 2\sum^{n}_{i=1}\sum^m_{k=1} l^T_{t,i} I^{-1}_{st} l_{s,k}
\end{align}
Then we have that,
\begin{align*}
     \mathbb{P}&((D^{U,n}_t, D^m_s) \in C^c(\delta)) = \mathbb{P}(L(\theta^*) > (n\wedge m) \delta^2) \\
     &\leq \mathbb{P}\left(\frac{1}{m}\sum_{i=1}^{m}l^T_{s,i} I^{-1}_{st} l_{s,i} \geq \frac{(n\wedge m)\delta^2}{6m} \right) + \mathbb{P}\left(\frac{1}{m(m-1)}\sum^m_{i\neq k} l^T_{s,i}I^{-1}_{st}l_{s,k} \geq \frac{(n\wedge m)\delta^2}{6m(m-1)} \right) \\
     &\quad + \mathbb{P}\left(\frac{1}{n}\sum_{i=1}^{n}l^T_{t,i} I^{-1}_{st} l_{t,i} \geq \frac{(n\wedge m)\delta^2}{6n} \right) + \mathbb{P}\left( \frac{1}{n(n-1)}\sum^n_{i\neq k} l^T_{t,i}I^{-1}_{st}l_{t,k} \geq \frac{(n\wedge m)\delta^2}{6n(n-1)} \right) \\
     &\quad + \mathbb{P}\left( \frac{2}{nm}\sum^{n}_{i=1}\sum^m_{k=1} l^T_{t,i} I^{-1}_{st} l_{s,k}  \geq \frac{(n\wedge m)\delta^2}{3nm} \right)
\end{align*}
We first consider the case where $m = cn^{p}$ for $p \geq 1$, then we can show that these five terms will decay exponentially fast. We first bound the expected value by 
\begin{align}
    \mathbb{E}[l^T_{s,i}I_{st}l_{s,i}] &= \operatorname{Tr}(I^{-1}_{st}\mathbb{E}[l^T_{s,i}l_{s,i}]) \\
    &\leq \frac{1}{n\wedge m} \operatorname{Tr}(I^{-1}_0)I_s \\
    &\leq  \frac{1}{n\wedge m} \operatorname{Tr}(I^{-1}_0)I_0) \\
    &=\frac{2\tilde{k}+2-c}{n\wedge m}
\end{align}
since $I_s \prec I_0$ due to the Condition 2. Also we have for large $m$,
\begin{align}
   \mathbb{E}[l^T_{t,i}I_{st}l_{t,i}] =  \frac{2\tilde{k}+2-c}{m},
\end{align}
and 
\begin{align}
    \mathbb{E}[l^T_{t,i}I_{st}l_{t,k}] = 0, \\
    \mathbb{E}[l^T_{s,i}I_{st}l_{s,k}] = 0, \\
    \mathbb{E}[l^T_{t,i}I_{st}l_{s,k}] = 0 
\end{align}
due to that $l_{s,i}$ and $l_{s,k}$ are mutually independent. Since the Condition 6 holds, we will use the Chernoff bound again so that the inequality is bounded by $O(e^{-\rho (n \wedge m)})$ for some $\rho >0$ under the case that $m = cn^{p}$ for $p \geq 1$ as \citet{zhu2020semi} (Lemma 9) suggested, where the details are omitted here. For the case where $m = cn^p$ for some $0 < p <1$, since for large $n \gg m$,
\begin{align}
   \mathbb{E}[l^T_{t,i}I_{st}l_{t,i}] = \frac{2\tilde{k}+2-c}{n}.
\end{align}
We can upper bound the term on the source score function by,
\begin{align}
    \mathbb{E}[l^T_{s,i}I_{st}l_{s,i}] &= \operatorname{Tr}(I^{-1}_{st}\mathbb{E}[l^T_{s,i}l_{s,i}])  \\
    &\leq \frac{2\tilde{k}+2-c}{n \wedge m}.
\end{align}
Then similar argument can be made that the probability is bounded by $O(e^{-\rho' (n \wedge m)})$ for some $\rho' >0$, and this completes the proof for all $p > 0$.

\end{proof}
Overall, putting everything together we complete the proof.

\end{proof}

\subsection{Proof of Theorem~\ref{thm:excessrisk-log}}

\begin{proof}
We firstly show that given any prior over $\Theta_s$ and $\Theta_t$,
\begin{align*}
   &I(Y'_t;\Theta_t, \Theta_s|D^{U,n}_t, D^{m}_s, X'_t)  \\
   &= I(\Theta_t, \Theta_s; Y'_t, X'_t, D^{U,n}_t, D^{m}_s) - I(\Theta_t, \Theta_s ; X'_t, D^{U,n}_t, D^{m}_s)  \\
   &= D(P_{\Theta_t,\Theta_s}(D^{U,n}_t,D^{m}_s, Y'_t, X'_t)\|Q(D^{U,n}_t,D^m_s, Y'_t, X'_t))  \\
    &\quad - D(P_{\Theta_s,\Theta_t}(D^m_s, D^{U,n}_t, X'_t)\|Q(D^m_s, D^{U,n}_t, X'_t))  \\ 
   &=  \int \Bigg( \mathbb{E}_{\theta_s,\theta_t} \left[ \log \frac{P_{\theta_t,\theta_s}(D^{U,n}_t,D^{m}_s, Y'_t, X'_t)}{Q(D^{U,n}_t,D^{m}_s, Y'_t, X'_t)} \right]  - \mathbb{E}_{\theta_s,\theta_t} \left[ \log \frac{P_{\theta_t,\theta_s}(D^m_s, D^{U,n}_t, X'_t)}{Q(D^m_s, D^{U,n}_t, X'_t)} \right]  \Bigg) \omega(\theta_s,\theta_t) d\theta_s d\theta_t  \\
   &= \int \left( \mathbb{E}_{\theta_s,\theta_t} \left[ \log \frac{P_{\theta_t}(Y'_t|X'_t)}{Q(Y'_t|D^{U,n}_t, D^{m}_s, X'_t)} \right] \right) \omega(\theta_s,\theta_t) d\theta_s d\theta_t, 
\end{align*}
where in the last equality we use the chain rule and the assumption that both source and target data are drawn in an i.i.d. way under Assumption~\ref{asp:para-dist}. The mutual information density at $\Theta_s = \theta^*_s$ and $\Theta_t = \theta^*_t$ is then given by
\begin{align*}
     \mathcal{R}(b) &= I(Y'_t;\theta^*_t, \theta^*_s|D^{U,n}_t, D^{m}_s, X'_t) \\
     &= \mathbb{E}_{\theta^*_s, \theta^*_t} \left[ \log \frac{P_{\theta^*_t}(Y'_t|X'_t)}{Q(Y'_t|D^{U,n}_t, D^{m}_s, X'_t)} \right] ,
\end{align*}
which completes the proof.
\end{proof}

\subsection{Proof of Theorem~\ref{thm:excessrisk-general}}

\begin{proof}
We can show that the expected excess risk can be bounded by
\begin{align*}
   \mathcal{R}(b) &= \mathbb{E}_{\theta^*_t,\theta^*_s} \left[  \ell(b, Y'_t) -  \ell(b^*, Y'_t) \right] \\
   &=   \mathbb{E}_{D^{m}_s,D^{U,n}_t,X'_t, Y'_t}\mathbb{E}_{Y'_t}\left[\ell(b, Y'_t)  -  \ell(b^*,Y'_t)|D^m_s, D^{U,n}_t, X'_t\right] \\
   &=   \mathbb{E}_{D^{m}_s,D^{U,n}_t,X'_t}\sum_{y'_t} \left( \ell(b, y'_t)  -  \ell(b^*, y'_t) \right) P_{\theta^*_s,\theta^*_t}(y'_t|D^m_s, D^{U,n}_t, X'_t) \\
  &=   \mathbb{E}_{D^{m}_s,D^{U,n}_t,X'_t}\sum_{y'_t} \left( \ell(b, y'_t)  -  \ell(b^*, y'_t) \right) (P_{\theta^*_s,\theta^*_t}(y'_t|D^m_s, D^{U,n}_t, X'_t)  \\ 
  & \quad - Q(y'_t|D^m_s, D^{U,n}_t, X'_t) +  Q(y'_t|D^m_s, D^{U,n}_t, X'_t))  \\
  &\overset{(a)}{\leq} \mathbb{E}_{D^{m}_s, D^{U,n}_t, X'_t}\sum_{y'_t} \left( \ell(b, y'_t)  -  \ell(b^*, y'_t) \right) (P_{\theta^*_s,\theta^*_t}(y'_t|D^m_s, D^{n}_t,X'_t)  - Q(y'_t|D^m_s, D^{U,n}_t, X'_t))  \\
  &\overset{(b)}{\leq}  M \mathbb{E}_{D^{m}_s,D^{U,n}_t, X'_t} \sum_{y'_t}  (P_{\theta^*_s,\theta^*_t}(y'_t|D^m_s, D^{U,n}_t,X'_t) - Q(y'_t|D^m_s, D^{U,n}_t, X'_t))  \\
  &\overset{(c)}{\leq}  M \mathbb{E}_{D^{m}_s,D^{U,n}_t, X'_t} \sqrt{2D\left(P_{\theta^*_s,\theta^*_t}(Y'_t|D^m_s, D^{U,n}_t,X'_t) \| Q(y'_t|D^m_s, D^{U,n}_t, X'_t)\right)} \\
  &\overset{(d)}{\leq} M\sqrt{2 \mathbb{E}_{D^{m}_s,D^{U,n}_t,X'_t} D\left(P_{\theta^*_s,\theta^*_t}(Y'_t|D^m_s, D^{U,n}_t,X'_t) \| Q(Y'_t|D^m_s, D^{U,n}_t, X'_t)\right)} \\
  &= M\sqrt{2D\left(P_{\theta^*_t} \| Q | D^m_s, D^{U,n}_t, X'_t\right)} \\
  &= M\sqrt{2D(P_{\theta^*_t}(Y'_t|X'_t) \| Q(Y'_t|D^{U,n}_t,D^m_s,X'_t))} \\
  &= M\sqrt{2I(Y'_t;\Theta_t = \theta^*_t, \Theta_s = \theta^*_s|D^{m}_s,D^{U,n}_t,X'_t) },
\end{align*} 
where in $(a)$ we use the definition of $Q$, then $(b)$ holds since we assume the loss function is bounded, $(c)$ follows from the Pinsker's inequality, $(d)$ holds from the Jensen's inequality.
\end{proof}

\subsection{Proof of Theorem~\ref{thm:causal}}

We firstly consider the scenario for covariate shift condition where $P_S(X) \neq P_T(X)$ and $P_S(Y|X) = P_T(Y|X)$.
\begin{proof}
 Knowing the conditions $\theta^{s*}_{Y_{x_i}} = \theta^{t*}_{Y_{x_i}}$ for every $i = 1,2,\cdots,k$, we choose the prior distribution $\omega(\Theta_s,\Theta_t)$ as
\begin{align}
    \omega(\Theta_s,\Theta_t) = \omega(\Theta^{t}_{X})\omega(\Theta^{s}_{X})\omega(\Theta^{st}_{Y_X}) .
\end{align}
In the causal setting, $\Theta^{s}_{X}$ is usually considered as independent of $\Theta^{t}_{X}$ and $\Theta^{s}_{Y_X}$. We also set the parameter $\Theta^{t}_{Y_X} = \Theta^{s}_{Y_X}$  from the assumption $P_S(Y|X) = P_T(Y|X)$ and denote it by $\Theta^{st}_{Y_X}$. With a proper prior distribution, we will arrive at the asymptotic estimation of the expected excess risk as
\begin{align}
    & D(P_{\theta^*_t,\theta^*_s}(D^{\textup{U},n}_t,X'_t, Y'_t, D^{m}_s)\|Q(D^{\textup{U},n}_t,X'_t, Y'_t, D^{m}_s))  - D(P_{\theta^*_t,\theta^*_s}(D^{\textup{U},n}_t,X'_t, D^{m}_s)\|Q(D^{\textup{U},n}_t,X'_t, D^{m}_s)) \nonumber \\
    &=  D(P_{\theta^{t*}_{X}}(D^{\textup{U},n}_t,X'_t)\| Q(D^{\textup{U},n}_t,X'_t)) + D(P_{\theta^{s*}_{X}}(X^m_s)\| Q(X^m_s)) + D(P_{\theta^*_{Y_X}}(Y'_{X',t}, Y^m_{X,s})\|Q(Y'_{X',t}, Y^m_{X,s})) \nonumber \\
    &\quad - D(P_{\theta^{t*}_{X}}(D^{\textup{U},n}_t,X'_t)\| Q(D^{\textup{U},n}_t,X'_t)) + D(P_{\theta^{s*}_{X}}(X^m_s)\| Q(X^m_s)) - D(P_{\theta^*_{Y_X}}(Y^m_{X,s})\|Q(Y^m_{X,s})) \nonumber \\
    & =  D(P_{\theta^*_{Y_X}}(Y'_{X',t}, Y^m_{X,s})\|Q(Y'_{X',t}, Y^m_{X,s})) - D(P_{\theta^*_{Y_X}}(Y^m_{X,s})\|Q(Y^m_{X,s}))  \\
    & = \frac{1}{2}\log\operatorname{det} \mathbf{I}_{1} + \frac{k}{2}\log \frac{1}{2\pi e} + \log\frac{1}{\omega(\theta^*_{Y_X})} - \frac{1}{2}\log\operatorname{det} \mathbf{I}_{0} - \frac{k}{2}\log \frac{1}{2\pi e} - \log\frac{1}{\omega(\theta^*_{Y_X})}  + o(\frac{1}{m})\\
    &= \frac{1}{2}\frac{\log\operatorname{det} (\mathbf{I}_{1}) }{\log\operatorname{det} (\mathbf{I}_{0}) } + o(\frac{1}{m}),
\end{align}
where we use the i.i.d. property of the data distribution and the independence property of the prior distribution among $\Theta^t_X$, $\Theta^s_X$ and $\Theta^{st}_{Y_X}$. Since $Y'_{X',t}$ and  $Y^m_{X,s}$ are parameterized by the same set of parameters $\theta^*_{Y_X}$, we denote the Fisher information matrix of $P_{\theta^{*}_{Y_X}}(Y)$ for source and target domains by
\begin{align}
  I(\theta^{*}_{Y_{x_i}}) &= \mathbb{E}_{Y_{x_i}}[\partial^2 \log P_{\theta^{*}_{Y_{x_i}}}(Y) / (\partial \theta_{Y_{x_i}})^2], \textup{ for } i = 1,2,\cdots, k, \\
  I_s(\theta^{*}_{Y_X}) &= -\mathbb{E}_{Y_{X_s}}\left[ \partial^2 \log P_{\theta^{*}_{Y_X}}(Y_{X}) / \partial \theta_j \partial \theta_k \right]_{j,k = 1,2,\cdots,k} = \operatorname{diag}[P_{\theta^{s*}_X}(X = x_i) * I(\theta^*_{Y_{x_i}}) ]_{i=1,\cdots,k}, \\
  I_t(\theta^{*}_{Y_X}) &= -\mathbb{E}_{Y_{X_t}}\left[ \partial^2 \log P_{\theta^{*}_{Y_X}}(Y_{X}) / \partial \theta_j \partial \theta_k \right]_{j,k = 1,2,\cdots,k} = \operatorname{diag}[P_{\theta^{t*}_X}(X = x_i) * I(\theta^*_{Y_{x_i}}) ]_{i=1,\cdots,k},
\end{align}
due to the mutually independence property of $Y_{X_i}$. Then $\mathbf{I}_1$ and $\mathbf{I}_0$ are expressed as follows.
\begin{align}
   \mathbf{I}_{0} &= mI_{s}(\theta^{*}_{Y_X}), \\
    \mathbf{I}_{1} &= mI_{s}(\theta^{*}_{Y_X}) + I_{t}(\theta^{*}_{Y_X}).
\end{align}
With the assumptions that the Fisher information matrix around true $\theta^{*}_{Y_X}$ are bounded and positive definite, we can calculate the excess risk by
\begin{align}
    \mathcal{R}(b) &= \frac{1}{2}\frac{\log\operatorname{det} (\mathbf{I}_{1}) }{\log\operatorname{det} (\mathbf{I}_{0}) } + o(\frac{1}{m})\\
    & =  \frac{1}{2}\log\operatorname{det}\left( \mathbf{I}_{k} + \frac{1}{m} I_{t}(\theta^{t*}_{Y_X}) I^{-1}_{s}(\theta^{s*}_{Y_X})\right) + o(\frac{1}{m}).
\end{align}
We then use the expansion of determinant:
\begin{align}
    \operatorname{det}(\mathbf{I}+\frac{1}{m} A) = 1 + \frac{1}{m} \operatorname{Tr}(A)+o(1 / m).
\end{align}
As a consequence,
\begin{align}
    \mathcal{R}(b) &= \frac{1}{2}\log \left( 1 + \frac{1}{m} \operatorname{Tr}(I_{t}(\theta^{*}_{Y_X}) I^{-1}_{s}(\theta^{*}_{Y_X}))+o(1 / m) \right) + o(\frac{1}{m}) \\
    &=\frac{1}{2}\log \left( 1 + \frac{1}{m} \sum_{i=1}^{k} \frac{P_{\theta^{t*}_X}(X = x_i)}{P_{\theta^{s*}_X}(X = x_i)}+o(1 / m) \right) + o(\frac{1}{m}) \\
    &\asymp \left(\frac{\sum_{i=1}^{k} \frac{P_{\theta^{t*}_X}(X = x_i)}{P_{\theta^{s*}_X}(X = x_i)}}{m} \right) \\
    &\asymp \frac{k}{m}. 
\end{align}
given that $P_{\theta^{t*}_X}(X = x_i)$ and $P_{\theta^{t*}_X}(X = x_i)$ are positive and bounded for any $i$. In other word, the convergence is guaranteed only when the source and target domains share the same support of the input $X$. For the case $\theta^{s*}_{X} = \theta^{t*}_{X}$, using the same procedure, by choosing
\begin{align}
    \omega(\Theta_s,\Theta_t) = \omega(\Theta^{st}_{X})\omega(\Theta^{st}_{Y_X}).
\end{align}
we will also arrive at
\begin{align}
    \mathcal{R}(b) &= \frac{1}{2}\log \left( 1 + \frac{1}{m} \operatorname{Tr}(I_{t}(\theta^{t*}_{Y_X}) I^{-1}_{s}(\theta^{s*}_{Y_X}))+o(1 / m) \right) + o(\frac{1}{m})\\
    &\asymp  \frac{k}{m}.
\end{align}
which leads to the same rate and completes the proof.
\end{proof}
Next we will look at the concept drift scenario where $P_S(Y|X) \neq P_T(Y|X)$ and $P_S(X) = P_T(X)$.
\begin{proof}
 Knowing the conditions $\theta^{s*}_{Y_{x_i}} \neq \theta^{t*}_{Y_{x_i}}$ for every $i = 1,2,\cdots,k$, if $P_S(X) = P_T(X)$, we choose the prior distribution $\omega(\Theta_s,\Theta_t)$ as
\begin{align}
    \omega(\Theta_s,\Theta_t) = \omega(\Theta^{st}_{X})\omega(\Theta^{s}_{Y_X})\omega(\Theta^{t}_{Y_X}).
\end{align}
following the similar machinery in the covariate shift conditions. Then the mixture distribution $Q$ becomes
\begin{align}
    & Q(Y'_t|D^{\textup{U},n}_t, D^m_s,X'_t)  \\
    &= \frac{\int P_{\theta_t}(D^{\textup{U},n}_t, X'_t, Y'_t) P_{\theta_s}(D^m_s) \omega(\theta_t,\theta_s) d\theta_t d\theta_s }{\int P_{\theta_t}(X'_t)P_{\theta_t}(D^{\textup{U},n}_t) P_{\theta^*_s}(D^m_s) \omega(\theta_t,\theta_s) d\theta_t d\theta_s  }  \\
    &= \int  P_{\theta_t}(Y'_t|X'_t) P(\theta_t,\theta_s|X'_t, D^m_s, D^{\textup{U},n}_s) d\theta_sd\theta_t 
    \\
    &= \int  P_{\theta^{t}_{Y_X}}(Y'_t|X'_t) P(\theta^{st}_{X},\theta^{s}_{Y_X}, \theta^{t}_{Y_X} |X'_t, D^m_s, D^{\textup{U},n}_s) d\theta^{s}_{Y_X} d\theta^{t}_{Y_X} \theta^{st}_{X}  \\
    &\overset{(a)}{=}\int  P(Y'_t|X'_t, \theta_{Y_{X}}) \omega(\theta_{Y_X}) d\theta_{Y_X}   \\
    &= \int  P_{\theta_{Y_{X'_t}}}(Y'_t) \omega(\theta_{Y_{X'_t}}) d\theta_{Y_{X'_t}},  
\end{align}
where $(a)$ holds because $X'_t, D^m_s$ and $D^{\textup{U},n}_s$ are all independent of $\Theta^t_{Y_X}$. Therefore, the excess risk becomes,
\begin{align}
     \mathcal{R}(b) =& \mathbb{E}_{\theta^*_s, \theta^*_t, X'_t, Y'_t}\left[ \log \frac{P(Y'_t|\theta^{t*}_{Y|X}, X'_t)}{Q(Y'_t|D^{\textup{U},n}_t, D^m_s,X'_t)} \right] \nonumber \\
     =&  \mathbb{E}_{\theta^{t*}_X}[\textup{KL}(P_{\theta^{t*}_{Y_{X'_t}}}(Y'_t)\|Q(Y'_t|X'_t)].  \label{eq:concept_drift}
\end{align}
If $P_S(X) \neq P_T(X)$, we choose the prior distribution $\omega(\Theta_s,\Theta_t)$ as,
\begin{align}
    \omega(\Theta_s,\Theta_t) = \omega(\Theta^{t}_{X})\omega(\Theta^{s}_{X})\omega(\Theta^{s}_{Y_X})\omega(\Theta^{t}_{Y_X}),
\end{align}
where we will end up with the same results as (\ref{eq:concept_drift}).
\end{proof}

\subsection{Proof of Theorem~\ref{thm:anti-case}}

Before proving Theorem~\ref{thm:anti-case}, we first restate the definition for Fisher information matrix and define extra quantities for proving purposes. 
\begin{align}
    I_s &= -\mathbb{E}_{\theta^*_s}[ \nabla^2 \log P(X_s,Y_s|\theta^*_s)], \\
    I_t &= -\mathbb{E}_{\theta^*_t}[ \nabla^2 \log P(X_t|\theta^*_{t})], \\
    I_{t,X,Y} &= -\mathbb{E}_{\theta^*_t}[ \nabla^2 \log P(X_t,Y_t|\theta^*_{t})], \\
    I_{0} &= -\mathbb{E}_{\theta^*}[ \nabla^2 \log P(X_t, X_s, Y_s|\theta^*)], \\
    I_{t,Y,U} &= -\mathbb{E}_{\theta^*_{t}}[ \nabla^2_{\theta_Y} \log P(X_t|\theta^{t*}_{Y},\theta^{t*}_{X_Y})], \\
    I_{t,Y} &= -\mathbb{E}_{Y_t}[ \nabla^2_{\theta_Y} \log P(Y_t|\theta^{t*}_{Y})], \\
    I_{s,Y} &= -\mathbb{E}_{Y_s}[ \nabla^2_{\theta_Y} \log P(Y_s|\theta^{s*}_{Y})], \\
    I_{t,X_Y,U} &= -\mathbb{E}_{\theta^*_{t}}[ \nabla^2_{\theta_{X_Y}} \log P(X_t|\theta^{t*}_{Y},\theta^{t*}_{X_Y})], \\
    I_{t,X_Y} &= -\mathbb{E}_{\theta^*_{t}}[ \nabla^2_{\theta_{X_Y}} \log P(X_t|\theta^{t*}_{X_{Y_t}})], \\
    I_{s,X_Y} &= -\mathbb{E}_{\theta^{*}_{s}}[ \nabla^2_{\theta_{X_Y}} \log P(X_s|\theta^{s*}_{X_{Y_s}})].
\end{align}

Now we will firstly consider the case $P_{S}(Y) \neq P_S(Y)$ and $P_S(X|Y) \neq P_T(X|Y)$. 
\begin{proof}
Knowing the conditions $\theta^{s*}_{Y} \neq \theta^{t*}_{Y}$ and $\theta^{s*}_{X_{y_i}} \neq \theta^{t*}_{X_{y_i}}$ for every $i = 1,2,\cdots,k'$, we then choose the prior distribution $\omega(\Theta_s,\Theta_t)$ as
\begin{align}
    \omega(\Theta_s,\Theta_t) = \omega(\Theta^{t}_{Y})\omega(\Theta^{s}_{Y})\omega(\Theta^{s}_{X_Y})\omega(\Theta^{t}_{X_Y}).
\end{align}
 With such a prior distribution, we will arrive at the asymptotic estimation of the KL divergence as,
\begin{align}
    D(P_{\theta^*_t,\theta^*_s}(D^{\textup{U},n}_t,D^{m}_s,X'_t, Y'_t ) & \|Q(D^{\textup{U},n}_t, D^{m}_s, X'_t, Y'_t)) \nonumber \\
    & = \frac{1}{2}\log\operatorname{det} \mathbf{I}_{\theta} + \log \frac{1}{2\pi e} + \log\frac{1}{\omega(\theta^*_s,\theta^*_t)} + o(\frac{1}{m \vee n}) ,
\end{align}
where 
\begin{equation}
   \mathbf{I}_{\theta} = \begin{bmatrix}
    nI_{t} + I_{t,X,Y}   &  \mathbf{0}  \\
    \mathbf{0}   &    m I_{s}
    \end{bmatrix}.
\end{equation}
We also have,
\begin{align}
   D(P_{\theta^*_t,\theta^*_s}(D^{\textup{U},n}_t,X'_t, D^{m}_s) &\|Q(D^{\textup{U},n}_t,X'_t, D^{m}_s)) \nonumber \\
    & = \frac{1}{2}\log\operatorname{det} \widetilde{\mathbf{I}}_{\theta} + \log \frac{1}{2\pi e} + \log\frac{1}{\omega(\theta^*_s,\theta^*_t)} + o(\frac{1}{m \vee n}),
\end{align}
where 
\begin{equation}
 \widetilde{\mathbf{I}}_{\theta} = \begin{bmatrix}
  (n+1)I_{t} &  \mathbf{0}  \\
   \mathbf{0} & m I_{s}  
    \end{bmatrix}.
\end{equation}
Then the regret can be calculated by
\begin{align}
    \mathcal{R}(b) &= \frac{1}{2}\frac{\log\operatorname{det} (\mathbf{I}_{\theta}) }{\log\operatorname{det} (\widetilde{\mathbf{I}}_{\theta}) } + o(\frac{1}{m \vee n}) \\
    &=  \frac{1}{2}\log\operatorname{det}\left( \mathbf{I}_{k'+1} + \frac{1}{n+1} (I_{t,X,Y} - I_{t} )I^{-1}_{t}\right) + o(\frac{1}{m \vee n}) \\
    &\asymp \log\left(1 + \frac{\operatorname{Tr}((I_{t,X,Y} - I_{t} )I^{-1}_{t})}{n+1}) \right) \\
    &\asymp \frac{k'+1}{n+1},
\end{align}
which completes the proof.
\end{proof}
Now we turn to conditional shifting case $P_{S}(Y) = P_S(Y)$ and $P_S(X|Y) \neq P_T(X|Y)$. 
\begin{proof}
In this section, we define,
\begin{equation}
    I_{t,U} = -\mathbb{E}_{\theta^{*}_{t}}\left[\frac{\partial^2 \log P(X_t|\theta^{*}_{t})}{\partial\theta_{Y}\partial \theta_{X_{y_i}}}  \right] \text{ for } i = 1,2,\cdots,k'.  \label{eq:itu}
\end{equation}
Knowing the conditions $\theta^{s*}_{Y} = \theta^{t*}_{Y}$ and $\theta^{s*}_{X_{y_i}} \neq \theta^{t*}_{X_{y_i}}$ for every $i = 1,2,\cdots,k'$, we then choose the prior distribution $\omega(\Theta_s,\Theta_t)$ as
\begin{align}
    \omega(\Theta_s,\Theta_t) = \omega(\Theta^{st}_{Y})\omega(\Theta^{s}_{X_Y})\omega(\Theta^{t}_{X_Y}).
\end{align}
where we denote the random variable for estimating $\theta^{st*}_Y$ by $\Theta^{st}_Y$. With such a prior distribution, we will arrive at the asymptotic estimation of the KL divergence as,
\begin{align}
    D(P_{\theta^*_t,\theta^*_s}(D^{\textup{U},n}_t,D^{m}_s,X'_t, Y'_t )& \|Q(D^{\textup{U},n}_t, D^{m}_s, X'_t, Y'_t)) \nonumber \\
       & = \frac{1}{2}\log\operatorname{det} \mathbf{I}_{\theta} +  \log \frac{1}{2\pi e} + \log\frac{1}{\omega(\theta^*_s,\theta^*_t)} + o(\frac{1}{m \vee n}),
\end{align}
where the joint Fisher information matrix $\mathbf{I}_{\theta}$ is defined as,
\begin{equation}
   \mathbf{I}_{\theta} = \begin{bmatrix}
    nI_{t,Y,U} + mI_{s,Y} + I_{t,Y}   &  nI_{t,U}   & \mathbf{0}  \\
    nI^T_{t,U} &  nI_{t,X_Y, U} + I_{t,X_{Y}} &    \mathbf{0} \\
   \mathbf{0} &  \mathbf{0} &  m I_{s,X_Y}
    \end{bmatrix}.
\end{equation}
Here zero vectors are due to the mutually independence assumption between the distribution parameters and i.i.d. assumption on the source and target samples. We also have,
\begin{align}
    &D(P_{\theta^*_t,\theta^*_s}(D^{\textup{U},n}_t,X'_t, D^{m}_s)\|Q(D^{\textup{U},n}_t,X'_t, D^{m}_s))  = \frac{1}{2}\log\operatorname{det} \widetilde{\mathbf{I}}_{\theta} +  \log \frac{1}{2\pi e} + \log\frac{1}{\omega(\theta^*_s,\theta^*_t)} + o(\frac{1}{m \vee n}),
\end{align}
where 
\begin{equation}
   \widetilde{\mathbf{I}}_{\theta} = \begin{bmatrix}
    (n+1)I_{t,Y,U} + mI_{s,Y}   &  (n+1)I_{t,U} & \mathbf{0}  \\
    (n+1)I^T_{t,U}  &  (n+1)I_{t,X_Y,U}  &    \mathbf{0} \\
   \mathbf{0} &  \mathbf{0} &  m I_{s,X_Y}
    \end{bmatrix}.
\end{equation}
Assume $m = cn^p$ for some $p>0$, as $n$ goes to infinity, we define the scalars $\Delta_U = I_{t,Y,U} - I_{t,U}I^{-1}_{t,X_Y,U}I^T_{t,U}$ and $\Delta_s = I_{s,Y}$, then the regret can be calculated by
\begin{align}
    \mathcal{R}(b) =& \frac{1}{2}\frac{\log\operatorname{det} (\mathbf{I}_{\theta}) }{\log\operatorname{det} (\widetilde{\mathbf{I}}_{\theta}) } + o(\frac{1}{m \vee n}) \\
    =& \frac{1}{2} \left( \log\operatorname{det}(\mathbf{I}_{k} + \frac{1}{n+1} (I_{t,X_Y} - I_{t,X_Y,U}) I^{-1}_{t,X_Y, U}  ) + \log\operatorname{det}(1 + \frac{I_{t,Y} - I_{t,Y,U}}{(n+1)\Delta_U + cn^p\Delta_s}) \right) \\
    &+ o(\frac{1}{m \vee n}) \\
    \asymp &  \frac{k'}{n+1} +  \frac{1}{(n+1) \vee n^p} \\
    \asymp &  \frac{k'}{n} +  \frac{1}{n \vee n^p}, 
\end{align}
where $\mathbf{I}_{k}$ denotes the identity matrix with dimension of $k\times k$. Since we assume $I_t \succ 0$ and $I_s \succ 0$, we have that $\Delta_U > 0$ and $\Delta_s > 0$. From the information processing perspective, the labelled target data always contains more information than unlabelled target data, hence we have both $I_{t,X_Y} - I_{t,X_Y,U} \succ 0$ and $I_{t,Y} - I_{t,Y,U} \succ 0$, which completes the proof.
\end{proof}
Regarding the target shift scenario $P_{S}(Y) \neq P_S(Y)$ and $P_S(X|Y) = P_T(X|Y)$, we could follow the similar procedures as the label drifting case.
\begin{proof}
Knowing the conditions $\theta^{s*}_{Y} = \theta^{t*}_{Y}$ and $\theta^{s*}_{X_{y_i}} \neq \theta^{t*}_{X_{y_i}}$ for every $i = 1,2,\cdots,k'$, we choose the prior distribution as,
\begin{align}
    \omega(\Theta_s,\Theta_t) = \omega(\Theta^{s}_{Y})\omega(\Theta^{t}_{Y})\omega(\Theta^{st}_{X_Y}).
\end{align}
where we denote the random variables for estimating $\theta^{t*}_{X_Y}$ by $\Theta^{st}_{X_Y}$.  Following the similar procedure as shown in the proof of conditional shift case, we can write,
\begin{equation}
   \mathbf{I}_{\theta} = \begin{bmatrix}
    nI_{t,X_Y,U} + mI_{s,X_Y} + I_{t,X_Y}   &  nI^T_{t,U}   & \mathbf{0}  \\
    nI_{t,U} &  nI_{t,Y,U} + I_{t,Y} &    \mathbf{0} \\
   \mathbf{0} &  \mathbf{0} &  m I_{s,Y}
    \end{bmatrix}
\end{equation}
and 
\begin{equation}
   \widetilde{\mathbf{I}}_{\theta} = \begin{bmatrix}
    (n+1)I_{t,X_Y,U} + mI_{s,X_Y}   &  (n+1)I^T_{t,U} & \mathbf{0}  \\
    (n+1)I_{t,U}  &  (n+1)I_{t,Y,U}  &    \mathbf{0} \\
   \mathbf{0} &  \mathbf{0} &  m I_{s,Y}
    \end{bmatrix}.
\end{equation}
where $I_{t,U}$ is defined in~(\ref{eq:itu}). We first consider the case where $m = cn^p$ for some $p \geq 1$, as $n$ goes to infinity, we define the matrices $\Delta_U = I_{t,X_Y,U} - I^T_{t,U}I^{-1}_{t,Y,U}I_{t,U}$ and $\Delta_s = I_{s,X_Y}$, then the expected regret can be calculated by using the following argument
\begin{align}
    \operatorname{det}(\mathbf{I}+\frac{1}{n} A) = 1 + \frac{1}{n} \operatorname{Tr}(A)+o(1 / n).
\end{align}
Then,
\begin{align}
    \mathcal{R}(b) =& \frac{1}{2}\log\operatorname{det}(1 + \frac{1}{n+1} (I_{t,Y} - I_{t,Y,U}) I^{-1}_{t,Y,U}  )\\
    &+ \frac{1}{2}\log\operatorname{det}( I_{t,X_Y} - I_{t,X_Y,U} + (n+1)\Delta_U + cn^p\Delta_s) \\
    &- \frac{1}{2}\log\operatorname{det}((n+1)\Delta_U + cn^p\Delta_s)  + o(\frac{1}{m \vee n})\\
    =& \frac{1}{2}\log\operatorname{det}(1 + \frac{1}{n+1} (I_{t,Y} - I_{t,Y,U}) I^{-1}_{t,Y,U}   ) \\
    & + \frac{1}{2}\log(\mathbf{I}_{k'}+\frac{1}{cn^p}(I_{t,X_Y} - I_{t,X_Y,U} + (n+1)\Delta_U)\Delta^{-1}_s) \\
    & - \frac{1}{2}\log(\mathbf{I}_{k'}+\frac{1}{cn^p}((n+1)\Delta_U\Delta^{-1}_s))  + o(\frac{1}{m \vee n}) \\
    \asymp& \frac{(I_{t,Y} - I_{t,Y,U}) I^{-1}_{t,Y,U}}{n+1} + \frac{\operatorname{Tr}((I_{t,X_Y} - I_{t,X_Y,U} + (n+1)\Delta_U)\Delta^{-1}_s))}{cn^p}  - \frac{\operatorname{Tr}( (n+1)\Delta_U\Delta^{-1}_s)}{cn^p} \\
    \asymp & \frac{1}{n} + \frac{k'}{cn^p}
\end{align}
the last asymptotic relationship is due to that $I_{t,Y} \succ I_{t,Y,U}$ and $ I_{t,X_Y} \succ I_{t,X_Y,U}$ as mentioned in the conditional shift case. For the case $0 < p <1$, similarly we arrive at,
\begin{align}
    \mathcal{R}(b) \asymp &  \frac{1}{n+1} +  \frac{\operatorname{Tr}((I_{t,X_Y} - I_{t,X_Y,U} +  cn^p\Delta_s)\Delta^{-1}_U))}{n+1} \\
    \asymp &  \frac{1}{n} +  \frac{k'}{n}. \\
\end{align}
which completes the proof.
\end{proof}

In the following, we consider the semi-supervised learning scenario as $P_{S}(Y) = P_S(Y)$ and $P_S(X|Y) = P_T(X|Y)$. 
\begin{proof}
Since the source and the target have the same distribution, we choose the prior distribution as,
\begin{align}
    \omega(\Theta_s,\Theta_t) = \omega(\Theta^{st}_{Y})\omega(\Theta^{st}_{X_Y}).
\end{align}
Combining the proofs of labelling drift and target shift cases, we arrive at,
\begin{equation}
   \mathbf{I}_{\theta} = \begin{bmatrix}
    nI_{t,X_Y,U} + mI_{s,X_Y} + I_{t,X_Y}   &  nI^T_{t,U}  \\
    nI_{t,U} &  nI_{t,Y,U} + I_{t,Y} + mI_{s,Y}
    \end{bmatrix} = nI_t + mI_s + I_{t,X,Y}
\end{equation}
and 
\begin{equation}
   \widetilde{\mathbf{I}}_{\theta} = \begin{bmatrix}
    (n+1)I_{t,X_Y,U} + mI_{s,X_Y}   &  (n+1)I^T_{t,U}  \\
    (n+1)I_{t,U}  &  (n+1)I_{t,Y, U}  + mI_{s,Y}
    \end{bmatrix} = (n+1)I_t + mI_s.
\end{equation}
We first consider $m = cn^p$ for some $p \geq 1$, as $n$ goes to infinity, we define the matrices $\Delta_U = I^T_{t,X_Y,U} - I_{t,U}I^{-1}_{t,Y,U}I_{t,U}$ and $\Delta_s = I_{s,X_Y}$, then the regret can be calculated by,
\begin{align}
    \mathcal{R}(b) =& \frac{1}{2}\frac{\log\operatorname{det} (\mathbf{I}_{\theta}) }{\log\operatorname{det} (\widetilde{\mathbf{I}}_{\theta}) } + o(\frac{1}{m\vee n})\\
    =& \frac{1}{2} \log\operatorname{det}\left( \mathbf{I}_{k'+1} + (I_{t,X,Y} - I_t)((n+1)I_t + cn^pI_s)^{-1} \right) +  o(\frac{1}{m\vee n}) \\
    \asymp& \frac{\operatorname{Tr}((I_{t,X,Y} - I_t)(\frac{n+1}{cn^p}I_t + I_s)^{-1})}{cn^p}\\
    \asymp & \frac{k'+1}{n^p} 
\end{align}
due to that $I_{t,X,Y} \succ I_t$. Similarly for the case where $0<p<1$, we have,
\begin{align}
    \mathcal{R}(b) &\asymp \frac{\operatorname{Tr}((I_{t,X,Y} - I_t)(\frac{cn^p}{n+1}I_s + I_t)^{-1})}{n+1} \\
    &\asymp \frac{k'+1}{n}.
\end{align}
As a consequence, 
\begin{align}
    \mathcal{R}(b) \asymp \frac{k'+1}{n \vee n^p}.
\end{align}
\end{proof}

\subsection{Proof of Lemma~\ref{lemma:wsc}}
\begin{proof}
We write the minimax expected regret as,
\begin{small}
\begin{align*}
&\min _{b} \max_{\theta^*_s,\theta^*_t} R(b) =\min _{Q}\left\{\max _{\theta_s, \theta_t}\left\{D\left(P_{\theta_s,\theta_t} \| Q(\theta_s,\theta_t)\right)\right\}\right\} \\
&=\min _{b}\left\{\max _{\theta_s, \theta_t} \left\{\int P_{\theta_s,\theta_t}\left(D^{U,m}_t, D^m_s,X'_t,Y'_t \right) \log \left(\frac{P_{\theta_t}\left(Y'_t|X'_t\right)}{Q\left(Y'_t|D^{U,m}_t, D^m_s,X'_t \right)}\right) d D^{U,m}_t d D^m_s dX'_t dY'_t \right\}\right\} \\
& \stackrel{(a)}{=} \min _{b}\left\{\max _{\omega(\theta_s,\theta_t)} \left\{\int P_{\theta_s,\theta_t}\left(D^{U,m}_t, D^m_s,X'_t,Y'_t \right) \log \left(\frac{P_{\theta_t}\left(Y'_t|X'_t\right)}{Q\left(Y'_t|D^{U,m}_t, D^m_s,X'_t \right)}\right) \omega(\theta_s,\theta_t) d\theta_s d\theta_t d D^{U,m}_t d D^m_s dX'_t dY'_t \right\}\right\} \\
& \stackrel{(b)}{=} \max _{\omega(\theta_s,\theta_t)}\left\{\min _{b}\left\{\int D\left(P_{\theta_t}\left(Y'_t|X'_t\right) \| Q\left(Y'_t|D^{U,m}_t, D^m_s,X'_t \right) \right) \omega(\theta_s,\theta_t) d\theta_s d\theta_t d D^{U,m}_t d D^m_s dX'_t dY'_t \right\}\right\}  \\
&=\max _{\omega(\theta_s,\theta_t)} I(Y'_t;\theta_s,\theta_t|D^m_s,D^{U,n}_t,X'_t), 
\end{align*}
where (a) follows as maximizing over $\theta_s$ and $\theta_t$ and is equivalent to maximizing over a distribution over them and (b) follows from the minimax theorem, e.g., see \citet{du2013minimax} for proof. 
\end{small}
\end{proof}

\vskip 0.2in
\bibliography{sample}

\end{document}